\def\withcolors{0}
\def\withnotes{0}
\newlist{steps}{enumerate}{1}
\setlist[steps, 1]{label = \quad \quad \quad Step \arabic*:}
\theoremstyle{plain} \newtheorem{thm}{Theorem}[section]
\newtheorem{lem}[thm]{Lemma} 
\newtheorem{cor}[thm]{Corollary}
\theoremstyle{definition} \newtheorem{defn}[thm]{Definition}
\theoremstyle{remark} \newtheorem{rem}{Remark}
\definecolor{lightgray}{gray}{0.9}
\newcommand{\ep}{\mathcal{E}}
\newcommand{\eqdef}{:=}
\newcommand{\eq}[1]{\begin{align*}#1\end{align*}}
\newcommand{\ceil}[1]{\left\lceil#1\right\rceil}%
\newcommand{\norm}[1]{\|#1\|}%
 \newcommand{\R}{\mathbb{R}}
\newcommand{\N}{\mathbb{N}}
\newcommand{\E}[1]{\mathbb{E}\left[#1\right]}
 \newcommand{\B}{\mathscr{B}}
\newcommand{\V}{\mathcal{V}}
\newcommand{\X}{\mathcal{X}}
\newcommand{\oO}{\mathcal{O}}
\newcommand{\Q}{\mathcal{Q}}
\newcommand{\uRoman}[1]{\uppercase\expandafter{\romannumeral#1}}
\DeclarePairedDelimiter\floor{\lfloor}{\rfloor}
  \newcommand{\new}[1]{{\color{red} {#1}}} 
  \newcommand{\newer}[1]{{\color{brown} {#1}}} 
  \newcommand{\newest}[1]{{\color{blue} {#1}}} 
  \newcommand{\mcolor}[1]{{\color{ForestGreen}#1}} 
  \newcommand{\tcolor}[1]{{\color{Orange}#1}} 
  \newcommand{\new}[1]{{{#1}}}
  \newcommand{\newer}[1]{{{#1}}}
  \newcommand{\newest}[1]{{{#1}}}
  \newcommand{\mcolor}[1]{{#1}}
  \newcommand{\tcolor}[1]{{#1}}
  \newcommand{\mnote}[1]{\par\mcolor{\textbf{M: }\sf #1}} 
  \newcommand{\tnote}[1]{\par\tcolor{\textbf{T: }\sf #1}} 
  \newcommand{\mnote}[1]{}
  \newcommand{\tnote}[1]{}
\newcommand{\ignore}[1]{\leavevmode\unskip} 
\newcommand{\Qenc}{Q^{{\tt e}}}
\newcommand{\Qdec}{Q^{{\tt d}}}
\newcommand{\indic}[1]{\mathbbm{1}_{#1}}
\newcommand{\iid}{$iid$}
\begin{document}

\title{RATQ: A Universal Fixed-Length Quantizer for Stochastic
  Optimization}

\author{Prathamesh Mayekar$^\dag$ \and Himanshu Tyagi$^\dag$ } \date{}
\maketitle {\renewcommand{\thefootnote}{}\footnotetext{
\noindent$^\dag$Department of Electrical Communication Engineering,
Indian Institute of Science, Bangalore 560012, India.  Email:
\{prathamesh, htyagi\}@iisc.ac.in }}

\maketitle \renewcommand{\thefootnote}{\arabic{footnote}}
\setcounter{footnote}{0} \maketitle

\begin{abstract}

We present Rotated Adaptive Tetra-iterated Quantizer (RATQ), a
fixed-length quantizer for gradients in first order stochastic
optimization.  RATQ is easy to implement and involves only a Hadamard transform computation and adaptive uniform quantization with appropriately chosen dynamic ranges. For noisy gradients with almost surely bounded 
Euclidean norms, we establish an information
theoretic lower bound for optimization accuracy using finite precision
gradients and show that RATQ almost attains this lower bound. 

 For mean square bounded noisy gradients, we use a                                                          
   gain-shape quantizer which separately quantizes the Euclidean norm
 and uses RATQ to quantize the normalized unit norm vector. We
 establish lower bounds for performance of any optimization procedure
 and shape quantizer, when used with a uniform gain
 quantizer. Finally, we propose an adaptive quantizer for gain which
 when used with RATQ for shape quantizer outperforms uniform gain
 quantization and is, in fact, close to optimal.

As a by-product, we show that our fixed-length quantizer
RATQ has almost the same performance as the optimal variable-length
quantizers for distributed mean estimation.
Also, we obtain an efficient quantizer for Gaussian vectors
 which attains a rate very close to the
Gaussian rate-distortion function and is, in fact, universal for subgaussian input vectors.
\end{abstract}

\newpage
\tableofcontents
\newpage
\section{Introduction}
Stochastic gradient descent (SGD) and its variants are popular
optimization methods for machine learning. In its basic form, SGD
performs iterations $x_{t+1}=x_t - \eta \hat{g}(x_t)$, where
$\hat{g}(x)$ is a noisy estimate of the subgradient of the function
being optimized at $x$.  Our focus in this work is on a distributed
implementation of this algorithm where the output $\hat{g}(x)$ of the
first order oracle must be quantized to a precision of $r$ bits.  This
abstraction models important scenarios ranging from distributed
optimization to federated learning, and is of independent theoretical
interest.

We study the tradeoff between the convergence rate of first order
optimization algorithms and the precision $r$ available per
subgradient update.  We consider two {\em oracle models}: the first
where the subgradient estimate's Euclidean norm is {\em almost surely
  bounded} and the second where it is {\em mean square bounded}.  Our
main contributions include new quantizers for the two oracle models
and theoretical insights into the limitations imposed by heavy-tailed
gradient distributions admitted under the mean square bounded
oracles. A more specific description of our results and their relation
to prior work is provided below.

 \subsection{Prior work}\label{ss:Prior_work}
 SGD and the oracle model abstraction for it appeared in classic
 works~\cite{robbins1951stochastic} and~\cite{nemirovsky1983problem},
 respectively.  We refer the reader to textbooks and
 monographs~\cite{nemirovski1995information,nesterov2013introductory,bubeck2015convex}
 for a review of the basic setup. Recently, variants of this problem
 with quantization or communication constraints on oracle output have
 received a lot of attention~\cite{de2015taming,suresh2017distributed,
   alistarh2017qsgd, stich2018sparsified, wang2018atomo,
   agarwal2018cpsgd, acharya2019distributed, karimireddy2019error,
   basu2019qsparse, gupta2015deep, wen2017terngrad,
   ramezani2019nuqsgd}.  Our work is motivated by the results
 in~\cite{suresh2017distributed,alistarh2017qsgd}, and we elaborate on
 the connection.

Specifically,~\cite{alistarh2017qsgd} considers a problem very similar
to ours. The paper~\cite{suresh2017distributed} considers the related
problem of distributed mean estimation, but the quantizer and its
analysis is directly applicable to distributed optimization. The two
papers present different quantizers that encode each input using a
variable number of bits. Both these quantizers are of optimal expected
precision for almost surely bounded oracles. However, their worst-case
(fixed-length) performance is suboptimal.

In fact, the problem of designing fixed-length quantizers for almost
surely bounded oracles is closely related to designing small-size
covering for the Euclidean unit ball. There has been a longstanding
interest in this problem in the vector quantization and information
theory literature ($cf.$~\cite{Wyner1967,gersho2012vector, hughes1987gaussian, csiszar1991capacity, Lapidoth1997TIT, Dumer2007}).
A closely related problem is that of Gaussian rate-distortion
  where we seek to quantize a random Gaussian vector to within a
  specified mean squared error, while using as few bits per dimension
  as possible ($cf.$~\cite{gallager1968information, CovTho06}). 
Typical
  fixed-length schemes for this problem draw on its duality with the
  channel coding problem and modify channel codes to obtain coverings;
  see, for instance,~\cite{martinian2006low, SommerFederShalvi08, yan2013polar}. 
However, for
  our application of distributed optimization, these schemes may not
  be acceptable for two reasons: First the resulting complexity is
  still too high for hardware implementation; and second, the
  resulting schemes are not universal and are tied to Gaussian
  distributions specifically.

\newest{To the best of our knowledge, the spherical covering code construction with the least known encoding complexity is from~\cite{HamkinsZeger97,HamkinsZeger97ii,HamkinsZeger02} (see~\cite{HamkinsThesis} for more details). However, its performance rate-distortion 
and computational complexity have been analyzed only for the Gaussian source; it has been shown in~\cite{HamkinsThesis} that the computational complexity needed grows linearly in rate. In comparison, the variant of our scheme for the Gaussian source
is especially simple and has only constant computational complexity per dimension.
}

\new{In a slightly different direction, a seminal, but perhaps not so
  widely known, result of~\cite{ziv1985universal} provides a very simple
  universal quantizer for random vectors with independent and
  identically distributed (\iid) coordinates, with each coordinate
  almost surely bounded. In this scheme, we first quantize each
  coordinate uniformly, separately using a ``scalar-quantizer,'' and
  then apply a universal entropic compression scheme to the quantized
  vector. We note that the variable-length schemes proposed
  in~\cite{alistarh2017qsgd,suresh2017distributed} are very similar,
  albeit with a specific choice of the entropic compression scheme.}

\new{All these schemes are variable-length schemes, while it is
  desirable to get a fixed-length scheme for the ease of both protocol
  and hardware implementation.  We remark that
  indeed~\cite{suresh2017distributed} presents an interesting
  randomly-rotate and quantize fixed-length scheme, but it still
  requires communicating $O(\log \log d)$ times more than the optimal
  fixed-length quantizer for the unit Euclidean ball given
  in~\cite{Wyner1967}.  To the best of our knowledge, prior to our
  work, the quantizer in~\cite{suresh2017distributed} is the best
  known efficient fixed-length quantizer for the unit Euclidean ball.}

\newest{In fact, a randomized orthogonal transform scheme similar to that in~\cite{suresh2017distributed} appeared
almost concurrently in~\cite{HadadErez16} as well, where an  analysis for Gaussian source is presented.
  However, a rate-distortion analysis has not been done in~\cite{HadadErez16}.
Remarkably, an early instance of the ``rotated dithering'' scheme for distributing energy equally appears in
the image compression literature in~\cite{OstromoukhovHA94}, albeit without formal error or performance analysis.
Another interesting scheme was proposed in~\cite{AkyolRose13} where 
nonuniform quantization (using {\em companding}) was combined with dithering. 
Our adaptive choice of dynamic range for uniform quantizers is similar, in essence, to companding.
But our scheme differs from the one in~\cite{AkyolRose13} in several ways: 
First,~\cite{AkyolRose13} uses the knowledge of input distribution to design 
their companding function, whereas we only need knowledge of the tail behaviour of
the input distribution in our setting; second, we apply a random rotation to our input leading to a universal quantizer, which is not needed in \cite{AkyolRose13}; and finally, the specific structure of our quantizer with adaptive dynamic ranges makes it amenable to mean square error analysis for a large variety of sources.}

\newest{Nevertheless, our proposed scheme has elements of all these approaches. We build on ideas similar to these works and develop a new approach for relating the mean square error to the dynamic range of our adaptive uniform quantizer. Interestingly, it leads to new results even for the well-studied Gaussian rate-distortion problem. Namely, we show that our scheme with constant computational complexity per dimension almost achieves the Gaussian rate distortion function, and that too universally among subgaussian sources. Moreover, the scheme and its analysis can be easily extended to sources with other tail behaviour. We believe that this approach will yield
very efficient rate-distortion codes for various sources, answering a question of fundamental interest and having many applications.}

\new{Returning to the literature on quantizers for first order stochastic optimization,
prior works
  including~\cite{alistarh2017qsgd} remain vague about the analysis for mean square bounded oracles. 
  Most of the works use gain-shape quantizers that separately quantize
  the Euclidean norm ({\em gain}) and the normalized vector ({\em
    shape}). But they operate under an engineering assumption: ``the
  standard $32$ bit precision suffices for describing the gain.'' One
  of our goals in this work is to evaluate how to efficiently use
  these 32 bits. For instance, can we use a simple uniform quantizer
  for gain? }

\new{To study such questions, we need a lower bound for gap to
  optimality for any optimization protocol using a uniform gain
  quantizer.  However, such a lower bound is not available. Indeed,
  all prior lower bounds use almost surely bounded oracles and cannot
  establish an additional limitation for mean square bounded oracles
  with heavy tails. In fact, while information theoretic lower bounds
  for SGD are well-known ($cf.$~\cite{agarwal2012information}), even
  for the almost surely bounded oracle setting, bounds for quantized
  oracles (similar to~\cite{duchi2014optimality}) have not been
  reported anywhere. We note that lower bounds for first-order
  optimization using quantized gradients are related to that of
  statistical learning and estimation when each sample must be
  quantized to a few bits
  ($cf.$~\cite{ZDJW:13,shamir2014fundamental,GMN:14,HOW:18,ACT:19,sahasranand2018extra}).
}

\new{ Also, it is interesting to compare our qualitative results with
  those in~\cite{acharya2019distributed}. The focus
  of~\cite{acharya2019distributed} was to design algorithms that
  attain the optimal convergence rate using communication that is
  sublinear in $d$. As our lower bounds show, this is impossible in
  our setting. Rather, we provide a new scheme to reduce the
  dependence of the number of bits per dimension on $T$.}

\new{Finally, the related problem of memory constrained optimization
  was stated as an open problem in \cite{woodworth2019open}. In this
  setting, we are only allowed to use a $2^{M}$ state machine to
  implement the optimization algorithm. While there is a high level
  connection between this problem and our problem, the memory
  constrained setting is more restrictive since the state of the
  algorithm must be restricted to $M$ bits at every instance, as
  opposed to our setting where only the oracle output is restricted to
  a finite precision of $r$ bits.}

 \subsection{Our contributions}\label{ss:contributions}
 \new{We start with almost surely bounded oracles and consider first
   order optimization protocols for $d$ dimensional problems with $T$
   iterations. We begin by deriving a simple information theoretic,
   precision-dependent lower bound which shows that no optimization
   protocol using a first order oracle and gradient updates of
   precision $r<d$ bits can have gap to optimality smaller than
   roughly $\sqrt{d}/\sqrt{rT}$. In particular, we need precision
   exceeding $\Omega(d)$ bits to get the classic convergence rate of
   $1/\sqrt{T}$ for convex functions.}

 \new{As our main contribution, we propose a new fixed-length
   quantizer we term {\em{Rotated Adaptive Tetra-iterated Quantizer }}
   (RATQ) that along with projected subgradient descent (PSGD) is
   merely a factor of $O(\log \log \log \log^* d)$ far from this
   minimum precision required to attain the $O(1/\sqrt{T})$
   convergence rate. In a different setting, when the precision is
   fixed upfront to $r$, we modify RATQ by roughly quantizing and sending only
a subset of coordinates of the rotated vector. 
We show that this modified version of RATQ is only a factor $O(\log
   \log^*d)$ far from the optimal convergence rate.}

\newer{For the case of mean square bounded oracles}, we establish an
information theoretic lower bound in Section \ref{s:ug} which shows
(using a heavy-tailed oracle) that the precision used for gain
quantizer must exceed $\log T$ when the gain is quantized uniformly
for $T$ iterations and we seek $O(1/\sqrt{T})$ optimization
accuracy. Thus, $32$ bits are good for roughly a billion iterations
with uniform gain quantizers, but not beyond that. Interestingly, we
present a new, adaptive gain quantizer which can attain the same
performance using only $\log \log T$ bits for quantizing gain. If one
has $32$ bits to spare for gain, then by using our quantizer we can
handle algorithms with $2^{2^{32}}$ iterations, sufficient for any
practical application.

\new{As an application of our general construction, we revisit the
  distributed mean estimation problem considered
  in~\cite{suresh2017distributed}. We show that using RATQ at each
  client requires fixed-length communication that is roughly the same
  as the optimal variable-length communication
  from~\cite{suresh2017distributed}. Furthermore, we show that RATQ
  yields a fixed-length quantizer for the unit Euclidean ball that is
  only a factor $O(\log \log \log \log^\ast d)$ from the optimal
  communication, must better in comparison to the prior known
  quantizer with $O(\log \log d)$ factor gap to optimality.}

\new{Also, we consider the Gaussian rate-distortion problem and
  evaluate the performance of a subroutine of RATQ (without
  rotation). We show that this efficient quantizer requires a
  minuscule excess rate over the classic $(1/2)\log(\sigma^2/D)$ to
  get a normalized mean square error less than $D$. Further, our
  proposed quantizer is universal and applies to any random vector
  with centered subgaussian entries.}
  

 \subsection{Remarks on techniques}\label{ss:Prior_work}
\new{ In this work we use adaptive quantizers with multiple
  dynamic-ranges $\{[-M_i, M_i]: i \in [h] \}$, with possibly a
  different dynamic range chosen for each coordinate. Once a
  dynamic-range $[-M_i, M_i]$ is chosen for a coordinate, the
  coordinate is represented using a quantized uniformly within this
  dynamic-range using $k$ levels. Using a different dynamic-range for
  each coordinate allows us to reduce error per coordinate, but costs
  us in communication since we need to communicate which $M_i$ is used
  for each coordinate. In devising our scheme, we need to carefully
  balance this tradeoff. We do this by taking recourse to the
  following observation: when the same dynamic range is chosen for all
  coordinates, the mean square error per coordinate roughly grows as
\[ 
O\left(\frac{\sum_{ i \in [h]}M_{i}^2 \cdot p(M_{i-1})}
{(k-1)^2}\right),
\] 
where $p(M)$ is the probability of the $\ell_\infty$ norm of the input
vector exceeding $M$ and $k$ denotes the number of levels of the
uniform quantizer. This observation allows us to relate the mean
square error to the tail-probabilities of the $\ell_\infty$ norm of
the input vector. In particular, we exploit it to decide on the
subvectors which we quantize using the same dynamic range.  }

\new{We use another classic trick (see~\cite{gersho2012vector}): we
  transform the input vector before we apply our adaptive
  quantizer. In particular, we use a randomized transform
that expresses the input vector over a random basis.
    The specific choice of our random transform is determined by our
  assumption for the gradients, namely that their $\ell_2$ norms are
  almost surely bounded by $B$.}

\newer{Drawing from these ideas, we propose the quantizer
  RATQ for quantizing random vectors with $\ell_2$ norm
  almost surely bounded by $B$. The main steps in RATQ are as follows:
  \begin{enumerate}
    \item {\it Rotate.} RATQ transforms the input vector by rotating
      it by multiplying with the randomized Hadamard transform which
      preservers the Euclidean norm.  This specific random transform
      was also used in \cite{ailon2006approximate} for the Fast J-L
      transform.  More recently, \cite{suresh2017distributed} used it
      to build a fixed length quantizer for distributed mean
      estimation. Incidentally, RATQ improves upon the performance of
      this fixed length quantizer for the problem of distributed mean
      estimation, as can be seen in Section
      \ref{s:distributed_mean}. In both these works, the randomized Hadamard
      transform is used to control the infinity norm of the output 
      vector.

\item {\it Adaptively quantized subvectors.} RATQ groups coordinates
  of the input vector to form smaller dimensional subvectors, after
  preprocessing the input vector using random rotation. Then, for each
  subvector the smallest dynamic range from the set $\{[-M_i, M_i] : i
  \in [h]\}$ is selected so that all the coordinates of that subvector
  lie within that range. Within this selected dynamic-range, each
  coordinate of the subvector is quantized uniformly. A key distinguishing feature of RATQ is choosing the set of $M_i$s to grow as a tetration, roughly as $M_{i+1}=e^{M_{i}}$. The large
  growth rate of a tetration allows us to cover the complete range of
  each coordinate using only a
  small number of dynamic ranges, which leads to an unbiased quantizer and reduces the
  communication. Also, after random rotation, each
  coordinate of the vector is a centered subgaussian random
  variable with a variance-parameter of $O(B^2/d)$, which, despite the large
  growth rate of a tetration, ensures that the per coordinate mean
  square error between the quantized output and the input is almost a
  constant.
\end{enumerate}
}

\newer{We remark that using an adaptively chosen dynamic-range can
  alternatively be implemented by transforming the input using a
  monotone function. This, too, is a classic technique in quantization
  known as {\it companding}
  ($cf.$~\cite{gersho2012vector}). Companding is known as a popular
  alternative to entropic coding for fixed-length codes. However, to
  the best of our knowledge, our paper is the first to combine it with
  other techniques and rigorously analyze it for the $\ell_2$ norm
  bounded vector quantization problem. Perhaps it is a bit surprising
  that this combination of classic technique was not analysed for
  constructing an efficient covering of the unit Euclidean ball, the
  problem underlying our quantization problem. We separately highlight the performance of RATQ as a covering for the unit Euclidean ball in Section~\ref{s:distributed_mean}, in the context of distributed mean estimation.}

\newer{Moving to oracles with mean square
  bounded $\ell_2$ norms, we take recourse to gain-shape quantizers
  and quantize the (normalized) shape vector using RATQ. However,
  unlike prior work, we rigorously treat gain quantization. Our
  proposed quantizer for gain is once again an adaptive uniform
  quantizer, but this time we cannot use a tetration for selecting
  possible dynamic-ranges $M_i$s since gain need not be
  subgaussian. We now only have tail-probability bounds determined by
  the Markov inequality (heavy-tails) and can only increase $M_i$s
  geometrically.}

\newer{In fact, the choice of $M_i$s for both the gain-quantizer and RATQ above
  is based on our general procedure for selecting $M_i$s based on the
  tail-probability bounds for the coordinates.  As another
  instantiation of this principle, we study the Gaussian
  rate-distortion problem where we have a handle over these
  tail-probabilities, even without any additional transforms applied
  to the input random vector. 
}

\newer{To extend our quantizers (for both almost surely and mean square bounded oracles)
  to the fixed precision setting where only $r$ less than $d$ bits can
  be sent, we take recourse to the standard uniform subsampling
  technique. Specifically, we uniformly 
sample  $O(r)$ coordinates from $[d]$ (without replacement)  and
communicate quantized values only for these coordinates.} 

\new{Our lower bounds draw from an oracle complexity lower bound derived
  in~\cite{agarwal2009information} and use a strong data processing
  inequality from~\cite{duchi2014optimality}. Similar ideas have
  appeared in lower bounds for communication constrained statistics;
  see, for instance,~\cite{ZDJW:13, BGMNW:16,XR:18, ACT:18}. However, this only
allows us to obtain lower bounds for the almost surely bounded setting. For the mean square bounded setting, we need a 
new construction with  ``heavy tails''. In particular, our proposed heavy-tailed construction shows a bottleneck for uniform gain quantizers which can be circumvented by our proposed quantizer, thereby establishing a strict improvement over uniform gain quantizers.}

\newer{We remark that independent of our work, an adaptive quantizer
similar to the one with we use for gain-quantization with geometrically increasing $M_i$s  appears
in~\cite{ramezani2019nuqsgd}.
Note that we use this quantizer for
gain-quantization, while \cite{ramezani2019nuqsgd} uses it to quantize
the shape.
However, the setting considered is
that of~\cite{alistarh2017qsgd} where quantization is followed by entropic compression. In particular, the fixed-length 
performance is suboptimal and mean square bounded oracles are not
handled in the worst-case. Another recent independent work~\cite{gandikota2019vqsgd}
presents a different scheme where a different random transform is used
instead of random rotation.
However, the
goal of this work is different from ours, and in particular, it has
much worse communication requirement compared to our scheme. 
}

 \subsection{Organization}
 \new{We formalize our problem in the next section and describe our results
   for almost surely and mean
 square bounded oracles in Sections~\ref{s:as} and~\ref{s:ms},
 respectively, along with some of the shorter proofs. The more
 elaborate proofs are provided in 
 Section~\ref{s:proof}, with additional details relegated to the
 appendix.
 We present the application of our quantizers to the problem 
 distributed mean estimation in Section~\ref{s:distributed_mean}
 and Gaussian rate-distortion in Section~\ref{s:gaussian_rate}. 
 }

\section{The setup and preliminaries}
\subsection{Problem setup}\label{s:problemsetup}
We fix the number of
iterations $T$ of the optimization algorithm (the number of times the
first order oracle is accessed) and the precision $r$ allowed to
describe each subgradient. Our fundamental metric of performance is the minimum error 
 (as a function of $T$ and $r$) with which such an algorithm can find the optimum value.

Formally, we want to find the minimum value of an unknown convex function $\displaystyle{f:\X
  \rightarrow \R}$ using {\em oracle access} 
  to noisy subgradients of the
  function ($cf.$~\cite{nemirovsky1983problem,bubeck2015convex}).
We assume that the function $f$ is convex over the compact, convex
domain $\X$ such that $\sup_{x,y \in \X}\norm{x-y}_2 \leq
D$; we denote the set of all such $\X$ by $\mathbb{X}$.
For a query point $x\in \X$, the oracle outputs random 
estimates of the subgradient $\hat{g}(x)$ which 
 for all  $x \in \X$ satisfy
\begin{equation}\label{e:asmp_unbiasedness}
\E{\hat{g}(x)|x} \in \partial f(x),
\end{equation}
\begin{equation}\label{e:asmp_L2_bound}
\E{\norm{\hat{g}(x)}_2^2|x} \leq B^2,
\end{equation}
where $\partial f(x)$ denotes the set of subgradients of $f$ at $x$.
\begin{defn}[Mean square bounded oracle]
A first order oracle which upon a query $x$ outputs the
subgradient estimate $\hat{g}(x)$ satisfying the assumptions
\eqref{e:asmp_unbiasedness} and \eqref{e:asmp_L2_bound} is termed a
mean square bounded oracle. We denote by $\oO$ the set of pairs $(f,O)$ with a convex function $f$ and a mean square bounded oracle $O$. 
\end{defn}


The variant with {almost surely}  bounded
oracles has also been considered 
($cf.$~\cite{nemirovsky1983problem,agarwal2012information}),   where we assume for all $x \in \X$
\begin{equation}\label{e:asmp_as_bound}
P(\norm{\hat{g}(x)}_2^2 \leq B^2|x)=1.
\end{equation}
 
\begin{defn}[Almost surely bounded oracle]
A first order oracle which upon a query $x$   outputs only the subgradient estimate $\hat{g}(x)$ satisfying the assumptions \eqref{e:asmp_unbiasedness} and \eqref{e:asmp_as_bound} is termed an almost surely bounded oracle. We denote the class of convex functions and 
oracle's satisfying 
assumptions \eqref{e:asmp_unbiasedness} and 
\eqref{e:asmp_as_bound} by $\oO_0$.
\end{defn}
In our setting, the outputs of the oracle are passed through a quantizer. An {\em $r$-bit quantizer} consists of randomized
mappings $(\Qenc, \Qdec)$ with the encoder mapping
$\Qenc:\R^d\to\{0,1\}^{r}$ and the decoder mapping $\Qdec: \{0,1\}^r\to \R^d$. The
overall quantizer is given by the composition mapping $Q=\Qdec\circ \Qenc$. Denote by 
$\Q_r$ the set of all such $r$-bit quantizers.   

For an oracle $(f, O)\in \oO$ and an $r$-bit quantizer $Q$,
let $QO= Q\circ O$ denote the 
composition oracle that outputs 
$Q(\hat{g}(x))$ for each query $x$. Let $\pi$ be an
algorithm with at most $T$ iterations with oracle access to
$QO$. We will call such an algorithm an {\em optimization  protocol}. 
Denote by $\Pi_T$ the set of all such optimization protocols with $T$
iterations.

Denoting the combined optimization protocol with its oracle $QO$ by
$\pi^{QO}$ and the 
associated output as $x^*(\pi^{QO})$, we measure the performance of
such an optimization protocol for a given $(f,O)$ using the metric
$\ep(f, \pi^{QO})$ defined as  $\ep(f, \pi^{QO}) \eqdef \E{f(x^*(\pi^{QO}))-\min_{x\in \X} f(x)}$.
The fundamental quantity of interest in this work are minmax errors
\begin{align*}
  \mathcal{E}_0^*{(T,r)} &\eqdef \sup_{\X \in \mathbb{X}} \inf_{\pi \in \Pi_T}\inf_{Q \in
    \mathcal{Q}_r}\sup_{(f, O) \in \oO_0}\ep(f,\pi^{QO}),
  \\
   \ep^*{(T,r)} &\eqdef
  \sup_{\X \in \mathbb{X}}\inf_{\pi \in \Pi_T}\inf_{Q \in
    \mathcal{Q}_r}\sup_{(f, O) \in \oO}\ep(f, \pi^{QO}).
 \end{align*}
Clearly, $\mathcal{E}^*{(T,r)} \geq\mathcal{E}_0^*{(T,r)}$.

\newest{\begin{rem}
We restrict to {\em memoryless} quantization schemes where the same quantizer will be applied to each new gradient vector, without using any information from the previous updates. 
Specifically, at each instant $t$ and for any precision $r$, the quantizers in  $\Q_r$ do not use any information from the previous time instants to quantize the subgradient outputted by $O$ at $t$. 
\end{rem}
}

 \subsection{A benchmark from prior results}\label{ss:benchmark}
 We recall results for the classic setting with $r=\infty$. Prior work
gives a complete characterization of the minmax errors
$\ep_0^*{(T,\infty)}$ and $\ep^*{(T,\infty)}$ for this setting; see,
for instance, \cite{nemirovsky1983problem,
  nemirovski1995information,agarwal2009information}. 
We summarize these well-known results below
($cf.$~\cite{nemirovsky1983problem}, \cite[Theorem 1a]{agarwal2012information}).
\begin{thm}\label{t:e_infty}
For an absolute constant $c_0$, we have
\[\frac{DB}{ \sqrt{T} } \geq \ep^*(T,\infty) \geq \ep^*_0(T,\infty) \geq \frac{c_0DB}{ \sqrt{T} }  .\]
\end{thm}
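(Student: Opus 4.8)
Of the three inequalities in the statement, the middle one, $\ep^*(T,\infty)\ge\ep^*_0(T,\infty)$, is the monotonicity already recorded right after the definition of the minmax errors: every almost surely bounded oracle is in particular mean square bounded, i.e.\ $\oO_0\subseteq\oO$, and all other ingredients ($\X$, $\pi$, $Q$) are the same in the two definitions. So the real task is to prove the two outer bounds — an $O(DB/\sqrt T)$ upper bound valid for the \emph{larger} class $\oO$, and an $\Omega(DB/\sqrt T)$ lower bound valid already for the \emph{smaller} class $\oO_0$.

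For the upper bound I would exhibit a single protocol attaining $DB/\sqrt T$ uniformly over $\mathbb X$ and $\oO$: projected stochastic subgradient descent with the identity quantizer (legal since $r=\infty$), constant step size $\eta$, and output the running average $\bar x_T=\frac1T\sum_{t=1}^T x_t$. Nonexpansiveness of the Euclidean projection onto the convex set $\X$ gives, for every $t$,
\[
\norm{x_{t+1}-x^*}_2^2\le \norm{x_t-x^*}_2^2-2\eta\,\langle\hat{g}(x_t),x_t-x^*\rangle+\eta^2\norm{\hat{g}(x_t)}_2^2 .
\]
Taking conditional expectations and using \eqref{e:asmp_unbiasedness} together with convexity ($\langle g,x_t-x^*\rangle\ge f(x_t)-f(x^*)$ for $g\in\partial f(x_t)$) and \eqref{e:asmp_L2_bound}, then telescoping over $t\in[T]$ and applying Jensen's inequality to $\bar x_T$, yields $\ep(f,\pi^{O})\le \frac{\norm{x_1-x^*}_2^2}{2\eta T}+\frac{\eta B^2}{2}\le \frac{D^2}{2\eta T}+\frac{\eta B^2}{2}$; the choice $\eta=D/(B\sqrt T)$ makes the right-hand side exactly $DB/\sqrt T$. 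Hence $\ep^*(T,\infty)\le DB/\sqrt T$.

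For the lower bound I would run Le Cam's two-point argument on a one-dimensional instance (placed in the first coordinate, the rest irrelevant). Fix $\X=[-D/2,D/2]$ and a parameter $\delta\in(0,1/4)$. For $v\in\{-1,+1\}$ let $f_v(x)=2Bv\delta\,x$, which over $\X$ is minimized at $-vD/2$ and satisfies $f_v(x)-\min_{\X}f_v\ge BD\delta$ whenever $\mathrm{sign}(x)=v$; pair $f_v$ with the oracle that, on every query, returns $B$ with probability $(1+2v\delta)/2$ and $-B$ otherwise. This oracle is unbiased for $f_v'$ and bounded by $B$ almost surely, so $(f_v,O_v)\in\oO_0$. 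For any $\pi\in\Pi_T$ and $Q\in\Q_\infty$, the query $x_t$ is a function of the past transcript (so its per-round KL contribution vanishes) and the response law does not depend on $x_t$; thus the chain rule for KL divergence together with the data-processing inequality (to push the response through $Q$) give
\[
\mathrm{KL}\big(P^{(T)}_{+1}\,\|\,P^{(T)}_{-1}\big)\le T\cdot \mathrm{KL}\big(\mathrm{Bern}(\tfrac{1+2\delta}{2})\,\|\,\mathrm{Bern}(\tfrac{1-2\delta}{2})\big)\le 16\,T\delta^2 ,
\]
where $P^{(T)}_v$ is the law of the whole transcript under $v$. Writing $P_v$ for the law of the output $x^*(\pi^{QO_v})$ and $A=\{x\ge 0\}$, the separation of $f_{+1},f_{-1}$ gives $\ep(f_{+1},\pi^{QO_{+1}})+\ep(f_{-1},\pi^{QO_{-1}})\ge BD\delta\,(P_{+1}(A)+P_{-1}(A^c))\ge BD\delta\,(1-\mathrm{TV}(P_{+1},P_{-1}))$, while $\mathrm{TV}(P_{+1},P_{-1})\le\mathrm{TV}(P^{(T)}_{+1},P^{(T)}_{-1})\le\sqrt{8T\delta^2}$ by the data-processing property of total variation and Pinsker's inequality. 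Therefore $\sup_{(f,O)\in\oO_0}\ep(f,\pi^{QO})\ge\frac12\big(\ep(f_{+1},\pi^{QO_{+1}})+\ep(f_{-1},\pi^{QO_{-1}})\big)\ge\frac{BD\delta}{2}\big(1-\sqrt{8T\delta^2}\big)$, and $\delta=1/(8\sqrt T)$ gives $\ep^*_0(T,\infty)\ge DB/(32\sqrt T)$, i.e.\ $c_0=1/32$ works.

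I expect the main obstacle to be the rigorous bookkeeping in the lower bound: carefully defining the transcript of the \emph{adaptive} interaction, checking that the queries contribute nothing to the KL while the (possibly randomized) quantizer $Q$ can only shrink the per-round KL of the responses, and converting statistical closeness of transcripts into an $\Omega(BD\delta)$ optimization error through the separation of $f_{\pm1}$. This is precisely the oracle-complexity machinery of~\cite{agarwal2009information} (and the strong data-processing refinement reused later for $r<d$), so I would either invoke it as a black box or expand the few lines sketched above.
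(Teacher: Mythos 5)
Your proof is correct. Note, however, that the paper does not actually supply a proof of Theorem~\ref{t:e_infty}; it only states it as ``well-known'' and cites~\cite{nemirovsky1983problem} and~\cite[Theorem 1a]{agarwal2012information}. Your middle inequality follows exactly as you say from $\oO_0 \subseteq \oO$ (an almost surely bounded oracle is a fortiori mean square bounded); the upper bound is the standard PSGD telescoping argument (the same one the paper records in Appendix~\ref{ap:QPSGD} for Theorem~\ref{t:basic_convergence}, specialized to $Q=\mathrm{id}$ and $\alpha_0(Q)\le B$, $\beta_0(Q)=0$); and the constants check out for the lower bound ($\delta=1/(8\sqrt T)$ gives $\tfrac{BD}{16\sqrt T}(1-1/\sqrt 8)\ge BD/(32\sqrt T)$). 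The one route difference worth flagging: for the lower bound you use a one-dimensional Le Cam two-point test with KL/Pinsker, whereas the paper's proof of the closely related precision-dependent bound (Theorem~\ref{t:e_r_LB}) uses a $d$-dimensional construction with a $d/4$-packing of the hypercube and Fano's inequality, because there the extra $\sqrt{d/\min\{d,r\}}$ factor genuinely requires a multi-hypothesis argument coupled with a strong data-processing inequality. For the $r=\infty$ case the dimension dependence cancels, so the simpler two-point argument suffices and is arguably cleaner — it buys a self-contained, short proof with explicit constants, at the cost of not extending to the dimension-dependent refinement the paper needs later. One cosmetic caveat: when you set $\X=[-D/2,D/2]$ and ``place it in the first coordinate,'' it is slightly safer to state the domain as $\X=\{x\in\R^d:\norm x_2\le D/2\}$ (as the paper itself does in Theorem~\ref{t:lb}) so that the diameter constraint $\sup_{x,y\in\X}\norm{x-y}_2\le D$ is met without further comment; the oracle should then return $\pm B e_1$.
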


\subsection{Quantizer performance for finite precision optimization}
Our overall optimization protocol throughout is the {\em projected SGD} (PSGD)
(see~\cite{bubeck2015convex}).  In fact, we establish lower bound
showing roughly the optimality of PSGD with our quantizers. 

In PSGD the  standard SGD updates are projected back 
 to the domain using the projection map $\Gamma_\X$ given by 
$\Gamma_{\X}(y) := \min_{x \in \X} \norm{x-y}_2.$
We use the {\em quantized PSGD} algorithm described in Algorithm~\ref{a:SGD_Q}. 
\begin{figure}[h]
\centering
\begin{tikzpicture}[scale=1, every node/.style={scale=1}]
\node[draw,text width= 8 cm , text height= ,] {%
\begin{varwidth}{\linewidth}       
            \algrenewcommand\algorithmicindent{0.7em}
\begin{algorithmic}[1]
   \Statex \textbf{Require:} $x_0\in \X, \eta \in \R^+$, $T$ and
   access to composed oracle $QO$ \For{$t=0$ to $T-1$}

$x_{t+1}=\Gamma_{\X} \left(x_{t}-\eta Q(\hat{g}(x_{t}))\right)$
   \EndFor \State \textbf{Output:} $\frac 1 T \cdot {\sum_{t=1}^T x_t}$
\end{algorithmic}
\end{varwidth}};
 \end{tikzpicture}

 \renewcommand{\figurename}{Algorithm}
\caption{Quantized PSGD with quantizer $Q$}
\label{a:SGD_Q}

\end{figure}

The quantized output $Q(\hat{g}(x_t))$, too, constitutes a noisy
oracle, but it can be biased for mean square bounded oracles. Though biased first-order
oracles were considered in~\cite{hu2016bandit}, the effect of
quantizer-bias has not been studied in the past.
The performance of a quantizer $Q$, 
when it is used with PSGD  for mean square  bounded oracles, is controlled by 
the worst-case $L_2$ norm $\alpha(Q)$ of its output and the worst-case
bias $\beta(Q)$ defined as\footnote{We omit the
dependence on $B$ and $d$ from our notation.}
\begin{align}
    \alpha(Q)&\eqdef \sup_{Y \in \R^d: \E{\norm{Y}_2^2}\leq B^2}
    \sqrt{\E{\norm{Q(Y)}_2^2}},
    \nonumber
\\
\beta(Q)&\eqdef \sup_{Y \in \R^d:
  \E{\norm{Y}_2^2}\leq B^2} \norm{\E{Y-Q(Y)}}_2.
\label{e:alpha,beta}
\end{align}
The corresponding quantities for almost surely bounded oracles are
    \begin{align}
      \alpha_0(Q)&\eqdef \sup_{Y \in \R^d: \norm{Y}_2\leq B \text{ a.s.}}
      \sqrt{\E{\norm{Q(Y)}_2^2}},
      \nonumber
  \\
  \beta_0(Q)&\eqdef \sup_{Y \in \R^d:
    \norm{Y}_2\leq B \text{ a.s.}} \norm{\E{Y-Q(Y)}}_2.
  \label{e:alpha0,beta0}
    \end{align}
    Using a slight modification
of the standard proof of convergence for PSGD, we get the following result.
\begin{thm}\label{t:basic_convergence}
For any quantizer $Q$, the output $x_T$ of optimization protocol $\pi$
given in Algorithm \ref{a:SGD_Q} satisfies
\begin{align*}
\sup_{(f, O) \in \oO_0}\ep(f, \pi^{QO})&\leq
D\left(\frac{\alpha_0(Q)}{\sqrt{T}}+ \beta_0(Q)\right),
\\
  \sup_{(f, O) \in \oO}\ep(f, \pi^{QO})&\leq
D\left(\frac{\alpha(Q)}{\sqrt{T}}+ \beta(Q)\right),
\end{align*}
when the parameter $\eta$ is set to $D/(\alpha_0(Q) \sqrt{T})$ and
$D/(\alpha(Q) \sqrt{T})$, respectively.
\end{thm}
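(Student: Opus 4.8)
The plan is to mimic the textbook convergence analysis of projected stochastic subgradient descent, the only genuinely new ingredient being a careful accounting of the quantizer's bias; I will present the mean square bounded case, the almost surely bounded case being identical with $\alpha_0(Q),\beta_0(Q)$ in place of $\alpha(Q),\beta(Q)$. First I fix $(f,O)\in\oO$, let $y^\ast\in\X$ attain $\min_{x\in\X}f(x)$, fix an iteration $t$, and condition on the past randomness, which determines $x_t$. Write $g_t\eqdef\E{\hat g(x_t)\mid x_t}$, so that $g_t\in\partial f(x_t)$ by \eqref{e:asmp_unbiasedness}, and $\hat G_t\eqdef Q(\hat g(x_t))$. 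Since the scheme is memoryless (recall the restriction from Section~\ref{s:problemsetup}), the internal randomness of $Q$ at step $t$ is independent of $x_t$, so conditioned on $x_t$ the vector $\hat g(x_t)$ is an instance of the random vectors quantified in \eqref{e:alpha,beta} — it obeys $\E{\norm{\hat g(x_t)}_2^2\mid x_t}\le B^2$ by \eqref{e:asmp_L2_bound}. Hence
\[
\norm{\E{\hat G_t\mid x_t}-g_t}_2\le\beta(Q),\qquad \E{\norm{\hat G_t}_2^2\mid x_t}\le\alpha(Q)^2 .
\]

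Next I set up the one-step recursion. Using non-expansiveness of $\Gamma_\X$ and $y^\ast\in\X$,
\[
\norm{x_{t+1}-y^\ast}_2^2\le\norm{x_t-\eta\hat G_t-y^\ast}_2^2=\norm{x_t-y^\ast}_2^2-2\eta\langle\hat G_t,x_t-y^\ast\rangle+\eta^2\norm{\hat G_t}_2^2 .
\]
Taking $\E{\cdot\mid x_t}$, writing $\E{\hat G_t\mid x_t}=g_t+b_t$ with $\norm{b_t}_2\le\beta(Q)$, using convexity $\langle g_t,x_t-y^\ast\rangle\ge f(x_t)-f(y^\ast)$ together with $\langle b_t,x_t-y^\ast\rangle\ge-\beta(Q)D$ (Cauchy–Schwarz and $\norm{x_t-y^\ast}_2\le D$), I obtain
\[
\E{\norm{x_{t+1}-y^\ast}_2^2\mid x_t}\le\norm{x_t-y^\ast}_2^2-2\eta\big(f(x_t)-f(y^\ast)\big)+2\eta\,\beta(Q)D+\eta^2\alpha(Q)^2 .
\]

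Finally I take total expectations, rearrange to bound $\E{f(x_t)-f(y^\ast)}$, sum the resulting inequalities over the $T$ iterations, telescope the $\norm{x_t-y^\ast}_2^2$ terms while discarding the nonnegative final one, divide by $T$, and invoke convexity (Jensen) to replace $\frac1T\sum_t f(x_t)$ by $f$ evaluated at the averaged output. With $\norm{x_0-y^\ast}_2\le D$ this gives
\[
\sup_{(f,O)\in\oO}\ep(f,\pi^{QO})\le\frac{D^2}{2\eta T}+\frac{\eta\,\alpha(Q)^2}{2}+\beta(Q)D ,
\]
and choosing $\eta=D/(\alpha(Q)\sqrt T)$ balances the first two terms to $D\alpha(Q)/\sqrt T$, yielding the claim; the a.s.\ case follows verbatim with $\alpha_0(Q),\beta_0(Q)$.

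The only real subtlety beyond the classical PSGD argument is the bias bookkeeping: one must verify that conditioning on $x_t$ legitimately reduces $\hat g(x_t)$ to an instance of the random vectors in the definitions \eqref{e:alpha,beta}–\eqref{e:alpha0,beta0} (this is where memorylessness and the per-query oracle bounds \eqref{e:asmp_L2_bound}/\eqref{e:asmp_as_bound} enter), and to observe that the bias contributes an additive $O(\beta(Q)D)$ term which, unlike in the unbiased setting, does not decay with $T$ and hence cannot be absorbed into the step-size tuning. Everything else is the standard telescoping, so I expect no further obstacle.
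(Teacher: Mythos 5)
Your proof is correct and follows essentially the same route as the paper: the standard PSGD recursion via non-expansiveness of $\Gamma_\X$, a Cauchy--Schwarz bound of $\beta(Q)D$ on the bias contribution, telescoping, Jensen on the averaged iterate, and the same $\eta = D/(\alpha(Q)\sqrt{T})$ tuning. Your bookkeeping through the explicit decomposition $\E{\hat G_t\mid x_t}=g_t+b_t$ with $\|b_t\|_2\le\beta(Q)$ is just a cleaner notational packaging of the paper's $\E{Q(\hat g(x_{t-1}))-\hat g(x_{t-1})\mid x_{t-1}}$ term; no substantive difference.
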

\noindent See Appendix \ref{ap:QPSGD} for the proof.
\begin{rem}[Choice of learning rate]  We fix  the parameter $\eta$ of Algorithm \ref{a:SGD_Q} to $D/(\alpha_0(Q) \sqrt{T})$  and $D/(\alpha(Q) \sqrt{T})$ for all the results in Section \ref{s:as} and Section \ref{s:ms}, respectively.
  \end{rem}

\section{Main results for almost surely bounded oracles}\label{s:as}
Our main results will be organized along two regimes: the high-precision
and the low-precision regime. For the 
high-precision regime, we seek to attain the optimal convergence rate of
$1/\sqrt{T}$ using the minimum precision possible. For the low-precision regime, we 
seek to attain the fastest convergence rate possible for a given, fixed precision $r$.

\subsection{A precision-dependent lower bound}\label{ss:simple_lb}
We begin with a simple refinement of the lower bound
implied by Theorem~\ref{t:e_infty}
The proof of this result is obtained by
appropriately modifying the proof in \cite{agarwal2012information},
along with the strong data processing inequality in
\cite{duchi2014optimality}.
\begin{thm}\label{t:e_r_LB}
There exists an absolute constant $c$, independent of $d$, $T$, and $r$ such that 
\[\mathcal{E}^*{(T, r)} \geq\mathcal{E}_{0}^*{(T, r)} \geq 
 \frac{cDB}{\sqrt{T}} \cdot \sqrt{\frac{d}{\min\{d,r\}}} .\]
\end{thm}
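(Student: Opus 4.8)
The plan is to adapt the information‑theoretic oracle‑complexity lower bound of~\cite{agarwal2012information} (which underlies Theorem~\ref{t:e_infty}) to the quantized setting, combining it with the strong data processing inequality of~\cite{duchi2014optimality} to pay for the $r$‑bit restriction. Concretely, I would fix $\X$ to be an appropriately scaled $\ell_\infty$ ball of diameter $D$ and consider the hard family $\{f_v : v\in\{-1,+1\}^d\}$ of convex, $B$‑Lipschitz functions from the proof of Theorem~\ref{t:e_infty}, together with its stochastic first‑order oracle $O_v$: for a query $x$ the estimate $\hat g(x)$ has conditionally independent coordinates, each equal to $\pm B/\sqrt d$ with a bias of order $\delta v_i$, where $\delta\in(0,1/4]$ is a free parameter to be chosen. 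Since each coordinate has magnitude exactly $B/\sqrt d$, we have $\norm{\hat g(x)}_2 = B$ almost surely, so $(f_v,O_v)\in\oO_0$. The construction has the standard recovery property: any $x$ with $f_v(x)-\min_{x}f_v < c_1 DB\delta$ agrees in sign with $v$ on all but $d/3$ coordinates, so a decoder applied to the protocol output $x^*(\pi^{QO})$ produces an estimate $\hat V$ of $v$ with small Hamming error whenever $\ep(f_v,\pi^{QO})$ is small.

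Next I would reduce to a hypothesis‑testing problem. Put the uniform prior on $V\in\{-1,+1\}^d$, and let $\mathcal{T}=(Y_1,\dots,Y_T)$ with $Y_t=\Qenc(\hat g(x_t))\in\{0,1\}^r$ be the transcript of quantized gradients produced by running $\pi$ against $QO_V$. By Fano's (or Assouad's) inequality together with the recovery property, if $\ep_0^*(T,r) < c_1 DB\delta$ then $I(V;\mathcal{T}) \ge c_2 d$. Hence it suffices to show that, for a suitable choice of $\delta$, we instead have $I(V;\mathcal{T}) < c_2 d$; this then forces $\ep_0^*(T,r) \ge c_1 DB\delta$.

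The core of the argument is a per‑round bound on $I(V;\mathcal{T})$. By the chain rule, $I(V;\mathcal{T}) \le \sum_{t=1}^T I(V;Y_t\mid Y_{1:t-1})$. Conditioned on $Y_{1:t-1}$ (and on the algorithm's internal randomness, which is independent of $V$) the query $x_t$ is fixed, the oracle draws $\hat g(x_t)$ whose law depends on $V$ only through the $\Theta(\delta)$ per‑coordinate bias, and $Y_t$ is an $r$‑bit function of $\hat g(x_t)$. Two data‑processing bounds apply: first, $I(V;Y_t\mid Y_{1:t-1}) \le I(V;\hat g(x_t)\mid Y_{1:t-1}) \le c_3\, d\,\delta^2$, since the per‑coordinate KL divergence between the two oracle laws is $O(\delta^2)$; second, the strong data processing inequality of~\cite{duchi2014optimality}, applied to the weak hypercube channel $V\to\hat g(x_t)$ followed by the $r$‑bit compression $\hat g(x_t)\to Y_t$, gives $I(V;Y_t\mid Y_{1:t-1}) \le c_3\,\delta^2 r$. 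Therefore $I(V;Y_t\mid Y_{1:t-1}) \le c_3\,\delta^2\min\{d,r\}$ and $I(V;\mathcal{T}) \le c_3\,T\,\delta^2\min\{d,r\}$. Choosing $\delta^2 = \frac{c_2 d}{2c_3\, T\,\min\{d,r\}}$ (within the admissible range of the construction) makes $I(V;\mathcal{T}) < c_2 d$, and substituting into $\ep_0^*(T,r)\ge c_1 DB\delta$ yields $\ep_0^*(T,r) \ge \frac{cDB}{\sqrt T}\sqrt{\frac{d}{\min\{d,r\}}}$ for an absolute constant $c$; the inequality $\ep^*(T,r)\ge\ep_0^*(T,r)$ is immediate from the definitions, and when $r\ge d$ the bound collapses to the unquantized lower bound of Theorem~\ref{t:e_infty}, as expected.

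The step I expect to require the most care is marrying the hard instance of~\cite{agarwal2012information} with the precise form of the strong data processing inequality in~\cite{duchi2014optimality}: the latter is naturally stated for (near‑)product channels indexed by a hypercube, so I would need the oracle output, conditioned on $V$, to have exactly the conditionally independent coordinate structure the SDPI demands, while simultaneously keeping $\norm{\hat g(x)}_2\le B$ almost surely and the domain diameter equal to $D$. Tracking these normalizations — so that "excess risk $< c_1 DB\delta$'' translates into Hamming recovery at exactly the scale where the $\delta^2\min\{d,r\}$ information bound bites — is the main bookkeeping obstacle; the remaining chaining of Fano with the per‑round information bound is routine.
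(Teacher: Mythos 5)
Your proposal is correct and follows essentially the same route as the paper: the same hard family of linear functions with $\pm B/\sqrt d$ hypercube‑indexed oracles, reduction to Hamming recovery of $V$ from the quantized transcript, the chain rule plus the strong data processing inequality of~\cite[Proposition 2]{duchi2014optimality} to bound $I(V;\mathcal T)\le c\,T\delta^2\min\{d,r\}$, and the final optimization over $\delta$. The only cosmetic differences are that the paper explicitly passes to a $d/4$‑packing $\V\subset\{-1,+1\}^d$ and invokes Fano, while you keep the full hypercube and allow Assouad, and that the paper quotes the $\min\{d,r\}$ bound directly from the SDPI rather than splitting it into a $\delta^2 d$ KL bound and a $\delta^2 r$ compression bound.
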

\begin{proof}

The proof of lower bound in Theorem~\ref{t:e_r_LB} is a slight extension of the standard proof of Theorem~\ref{t:e_infty}. 
We provide a sketch for completeness. For simplicity, we assume $\X=\{x:\norm{x}_\infty\leq D/(2\sqrt{d}) \}$.
Let $\V\subset\{-1,1\}^d$ be the maximal $d/4$-packing in Hamming distance, namely it is a collection of vectors such that
any two vectors $\alpha, \alpha^\prime\in V$, $d_H(\alpha, \alpha^\prime)\geq d/4$. As is well-known, there exists such a packing of cardinality $2^{c_2 d}$, where $c_2$ is a constant.
Consider convex functions $f_\alpha$, $\alpha\in \V$, with domain $\X$ and satisfying assumptions \eqref{e:asmp_unbiasedness} and
\eqref{e:asmp_as_bound} given below:
 \[
f_{\alpha}(x):=\frac {B\delta}{\sqrt{d}} \sum_{i=1}^{d} \alpha(i) x(i).
\]
Note that the gradient of $f_\alpha(x)$ is given by $B\alpha/\sqrt{d}$ for each $x\in \X$. 
For each $f_\alpha$, consider the corresponding gradient oracles $O_\alpha$ which outputs 
independent values for each coordinate,
with the value of $i$th coordinate taking values $B/\sqrt{d}$ and $-B/\sqrt{d}$ with probabilities $(1+2\delta\alpha(i))/2$ and $(1-2\delta\alpha(i))/2$, respectively. We denote the distribution of output of oracle $O_\alpha$ by $P_\alpha$.

Let $V$ be distributed uniformly over $\V$. Consider the multiple hypothesis testing problem of determining $V$ by observing
samples from $\Qenc(Y)$ with $Y$ distributed as $P_V$. Consider an optimization algorithm that outputs $x_T$ after $T$ iterations. Then, we have 

\begin{align*}
\E{f_\alpha(x_T)-f_\alpha(x^*)}&\geq \frac{DB\delta}{8} P\left(f_\alpha(x_T)-f_\alpha(x^*)\geq\frac{DB\delta}8\right)
\\
&{=} \frac{DB\delta}{8} P\left( \frac {B\delta}{\sqrt{d}} \alpha^T(x_T-x^*) \geq\frac{DB\delta}8 \right)
\\
&{=} \frac{DB\delta}{8} P\left( \frac {B\delta}{\sqrt{d}} \norm{x_T-x^*}_{1} \geq\frac{DB\delta}8 \right)
\\
&{=}  \frac{DB\delta}{8} P\left(\norm{(2\sqrt{d}/D)x_T + \alpha}_1\geq \frac {d}{4}\right),
\end{align*}
where the second identity holds since
  $sign(\alpha(i))=sign(x_T-x^*)$ and the final identity is obtained by noting that the optimal value $x^*$ for $f_\alpha$ is $-(D/2\sqrt{d})\alpha$.  Note that all $\alpha,
\alpha^\prime \in\V$ satisfy $\norm{\alpha-\alpha^\prime}_1\geq d/2
$. Consider the following test for the aforementioned hypothesis
testing problem. We execute the optimization protocol using oracle
$O_V$ and declare the unique $\alpha\in V$ such that
$\norm{(2\sqrt{d}/D)x_T + \alpha}_1 < d/4$. The probability of error
for this test is bounded above by $P\left(\norm{(2\sqrt{d}/D)x_T +
  \alpha}_1\geq \frac{d}{4}\right)$, whereby the previous bound and
Fano's inequality give
\[
\E{f_\alpha(x_T)-f_\alpha(x^*)}\geq \frac{DB\delta}{8} \left(1-
\frac{TI(V\wedge Q(Y))+1}{\log |\V|}\right).
\]
For a quantizer $Q$ with precision $r$, using the strong data
processing inequality bound from~\cite[Proposition
  2]{duchi2014optimality}, we have $I(V\wedge Q(Y))\leq 360
\delta^2\min\{r, d\}$. Therefore,
\[
\max_{\alpha}\ep_0(f, \pi^{QO})\geq \frac{{DB}\delta}{8} \bigg(1-
\frac 1{c_2d} - \frac{360T\delta^2\min\{r, d\}}{c_2d}\bigg).
\]
The proof is completed by maximizing the right-side over $\delta$.
\end{proof}


As a corollary, we get that there is no hope of
getting the desired convergence rate of $1/\sqrt{T}$ by using a
precision of less than $d$.  
\begin{cor}\label{c:OmegaD}
For $\mathcal{E}_0^*{(T, r)}$  or $\mathcal{E}^*{(T, r)}$ to be less than ${DB}/{\sqrt{T}}$, the
precision $r$ must be at least $\Omega(d)$.
\end{cor}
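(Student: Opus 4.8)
The plan is to read the claim off directly from the quantitative lower bound in Theorem~\ref{t:e_r_LB}. That theorem supplies an absolute constant $c$, independent of $d$, $T$, $r$, for which
\[
\mathcal{E}^*(T,r) \ge \mathcal{E}_0^*(T,r) \ge \frac{cDB}{\sqrt T}\sqrt{\frac{d}{\min\{d,r\}}}.
\]
Comparing with the upper bound $\mathcal{E}^*(T,\infty)\le DB/\sqrt T$ of Theorem~\ref{t:e_infty}, we may assume $c\le 1$; otherwise the hypothesis ``error $<DB/\sqrt T$'' is vacuous and there is nothing to prove. I would then argue by contraposition: suppose the left-hand quantity — either $\mathcal{E}_0^*(T,r)$ or $\mathcal{E}^*(T,r)$ — is strictly smaller than $DB/\sqrt T$. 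Substituting this into the displayed inequality and cancelling the common positive factor $DB/\sqrt T$ gives $c\sqrt{d/\min\{d,r\}}<1$, and squaring yields $\min\{d,r\}>c^2 d$.

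Next I would split on the size of $r$. If $r\ge d$ there is nothing to show, since $r\ge d=\Omega(d)$ already. If instead $r<d$, then $\min\{d,r\}=r$, and the inequality $\min\{d,r\}>c^2 d$ reads $r>c^2 d$; as $c$ is an absolute constant, this is precisely the conclusion $r=\Omega(d)$ with implied constant $c^2$. Combining the two cases completes the argument.

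I do not expect any genuine obstacle here: the corollary is a one-line numerical consequence of Theorem~\ref{t:e_r_LB}, and all the substantive work — the $d/4$-packing of $\{-1,1\}^d$, the reduction to multiple hypothesis testing, Fano's inequality, and the strong data processing inequality of~\cite{duchi2014optimality} bounding $I(V\wedge Q(Y))$ — has already been done inside the proof of that theorem. The only point worth stating explicitly is the normalization $c\le 1$ so that the regime ``error below $DB/\sqrt T$'' is non-degenerate.
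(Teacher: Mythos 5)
Your argument is correct and is exactly the intended one: the paper gives no separate proof of Corollary~\ref{c:OmegaD}, treating it as an immediate numerical consequence of Theorem~\ref{t:e_r_LB}, which is precisely your contrapositive rearrangement giving $\min\{d,r\}>c^2 d$ and hence $r=\Omega(d)$ when $r<d$. The side remark that the case of a constant $c>1$ makes the hypothesis vacuous is a fine (and harmless) way to dispose of the degenerate regime.
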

\subsection{RATQ: Our quantizer for the  $\ell_2$ ball}\label{s:alg_RATQ}
We propose {\em{Rotated Adaptive
    Tetra-iterated Quantizer }}(RATQ) to quantize any random vector
$Y$ with $\norm{Y}_2^2\leq B^2$, which is what we need for almost surely bounded oracles.
RATQ first rotates the input vector, then divides the coordinates of the rotated vectors into smaller groups,
and finally quantizes each subgroup-vector using a {\em Coordinate-wise Uniform Quantizer} (CUQ). 
However, the dynamic-range used for each subvector is chosen adaptively from a set of tetra-iterated levels. 
We call this adaptive quantizer {\em Adaptive Tetra-iterated Uniform Quantizer} (ATUQ), and it is  the main workhorse of our construction. The encoder and decoder for RATQ are given in Algorithm~\ref{a:E_RATQ} and Algorithm~\ref{a:D_RATQ}, respectively. 
The details of all the components involved are described below.
 
 \begin{figure}[ht]
\centering
\begin{tikzpicture}[scale=1, every node/.style={scale=1}]
\node[draw, text width= 9 cm, text height=,] {%
\begin{varwidth}{\linewidth}
            
            \algrenewcommand\algorithmicindent{0.7em}
 \renewcommand{\thealgorithm}{}
\begin{algorithmic}[1]
\Require  Input $Y\in \R^d$, rotation matrix R
  
 \State Compute $\tilde{Y}=RY$ 

\For {$i \in
    [d/s]$}
  
  $\displaystyle{\tilde{Y}_i ^{T} =[\tilde{Y}((i-1)s+1), \cdots \tilde{Y}(\min\{is,d\})]^{T}}$ 
 
  \EndFor

   \State \textbf{Output:} 
   $\Qenc_{{\tt at}, R}(Y) =\{\Qenc_{{\tt at}}(\tilde{Y_1})\cdots\Qenc_{{\tt
      at}}(\tilde{Y}_{\ceil{d/s}})\}$
\end{algorithmic}
\end{varwidth}};
 \end{tikzpicture}
 \renewcommand{\figurename}{Algorithm}
 \caption{Encoder $\Qenc_{{\tt at}, R}(Y)$ for RATQ}\label{a:E_RATQ}
 \end{figure}

 \begin{figure}[ht]
\centering
\begin{tikzpicture}[scale=1, every node/.style={scale=1}]
\node[draw, text width= 9 cm, text height=,] {%
\begin{varwidth}{\linewidth}
            \algrenewcommand\algorithmicindent{0.7em}
 \renewcommand{\thealgorithm}{}
\renewcommand{\thealgorithm}{}
\begin{algorithmic}[1]
    \Require 
  Input $\{Z_i, j_i \}$ for $i \in [\ceil{d/s}]$, rotation matrix R

\State $Y^{T}= [\Qdec_{\tt at}( Z_1, j_1 ), \cdots$,
  $\Qdec_{{\tt at}}( Z_{\ceil{d/s}}, j_{\ceil{d/s}} ) ]^T$
\State \textbf{Output:} $\Qdec_{{\tt at},R}( \{Z_i, j_i\}_{i=1}^{\ceil{d/s}})=R^{-1} Y$
\end{algorithmic}
\end{varwidth}};
 \end{tikzpicture}
 \renewcommand{\figurename}{Algorithm}
 \caption{Decoder $\Qdec_{{\tt at}, R}(Z, j)$ for RATQ}\label{a:D_RATQ}
 \end{figure}

\paragraph{Rotation and division into subvectors.} 
RATQ first rotates the input vector by multiplying it with a random Hadamard matrix.
Specifically, denoting by $H$ the $d\times d$
Walsh-Hadamard Matrix (see \cite{horadam2012hadamard})\footnote{We
  assume that $d$ is a power of $2$.}, define
\begin{equation}\label{e:R}
R\eqdef \frac{1}{\sqrt{d}}\cdot HD,
\end{equation}
where $D$ is a diagonal matrix with each diagonal entry generated uniformly from $\{-1, +1\}$. 
The input vector $y$ is multiplied by $R$ in the rotation step. 
The matrix $D$ can be generated using shared randomness
between the encoder and decoder.

Next, the rotated vector of dimension $d$ is partitioned into
 $\ceil{d/s}$ smaller subvectors. The $i^{th}$ subvector comprises
the coordinates $\{(i-1)s+1,\cdots, \min\{is,d\}\}$,
 for all $i \in [d/s].$ Note that the dimension of all the sub
 vectors except the last one is $s$, with the last one having a
 dimension of $d-s\floor{d/s}$.

\paragraph{Coordinate-wise Uniform Quantizer (CUQ).} RATQ uses CUQ as a subroutine;  we describe the latter for $d$ dimensional inputs, but it will only be applied to subvectors of lower dimension in RATQ. CUQ has a dynamic range $[-M, M]$ associated with it, and it uniformly quantizes each coordinate of the input to $k$-levels as long as the component is
within the dynamic-range $[-M, M]$. 
Specifically, it 
partitions the interval $[-M, M]$ into parts $I_\ell\eqdef
(B_{M,k}(\ell), B_{M,k}(\ell+1)]$, $\ell\in\{0,\ldots, k-1\} $, where
  $B_{M,k}(\ell)$ are given by \eq{ &B_{M,k}(\ell) := -M+\ell \cdot
    \frac{2M}{k-1}, \quad \forall\,  \ell \in \{0,\ldots,
    k-1\}.} 
Note that  we need to communicate $k+1$ symbols per coordinate -- $k$ of these symbols correspond to the $k$ uniform levels and the additional   symbol corresponds to the overflow symbol $\emptyset$.  Thus we need a total precision of $d\ceil{\log(k+1)}$ bits to represent the output of the CUQ encoder. The encoder and decoders used in CUQ are given in Algorithms~\ref{a:E_CUQ} and~\ref{a:D_CUQ}, respectively. 
In the decoder, we have set $B_{M,k}(\emptyset)$ to $0$.

\begin{figure}[ht]
\centering
\begin{tikzpicture}[scale=1, every node/.style={scale=1}]
\node[draw,text width= 9 cm, text height=,] {%
\begin{varwidth}{\linewidth}
            
            \algrenewcommand\algorithmicindent{0.7em}
\begin{algorithmic}[1]
\vspace{-0.5cm}
\Require Parameters  $M \in \R^+$ and input $Y \in \R^d$
  \For {$i \in [d]$} \If{$|Y(i)| > M$}
   
   $Z(i)=\emptyset$
 
 \Else \For {$\ell \in \{0,\ldots, k-1\}$}\label{step:UQ} \If {${Y}(i)
   \in (B_{M,k}(\ell), B_{M,k+1}(\ell+1)]$}
 \Statex  \hspace{1cm}
     $
      {Z}(i) =
\begin{cases}
\ell+1, \quad w.p. ~ \frac{{Y}(i) - B_{M,k}(\ell)}{B_{M,k}(\ell+1)-B_{M,k}(\ell)}
 \\ \ell, \hspace{0.6cm} \quad w.p. ~  
  \frac{B_{M,k}(\ell+1) - {Y}(i)}{B_{M,k}(\ell+1)-B_{M,k}(\ell)} 
\end{cases}
$
   
\EndIf \EndFor \EndIf \EndFor \State \textbf{Output:} $\Qenc_{\tt u}(Y; M)=Z$
\end{algorithmic}  
\end{varwidth}};
 \end{tikzpicture}
 \renewcommand{\figurename}{Algorithm}
 \caption{Encoder  $\Qenc_{\tt u}(Y; M)$ of CUQ}\label{a:E_CUQ}
 \end{figure}

 \begin{figure}[ht]
\centering
\begin{tikzpicture}[scale=1, every node/.style={scale=1}]
\node[draw, text width= 9 cm, text height=,] {%
\begin{varwidth}{\linewidth}
            
            \algrenewcommand\algorithmicindent{0.7em}
 \renewcommand{\thealgorithm}{}
\begin{algorithmic}[1]
  \Require  Parameters  $M \in \R^+$and input $Z \in \{0,
  \dots, k-1, \emptyset\}^d$ \State Set $\hat{Y}(i)= B_{M,k}(Z(i))$, for all $i\in
       [d]$ \State \textbf{Output:} $\Qdec_{\tt u}(Z;
       M)=\hat{Y}$\label{step:output_coordinate}
\end{algorithmic}
\end{varwidth}};
 \end{tikzpicture}
 \renewcommand{\figurename}{Algorithm}
 \caption{Decoder $\Qdec_{\tt u}(Z; M)$ of CUQ}\label{a:D_CUQ}
 \end{figure}


\paragraph{Adaptive Tetra-iterated Uniform Quantizer (ATUQ).}
The quantizer ATUQ is CUQ with its dynamic-range chosen in an adaptive manner. 
In order to a quantize a particular input vector, it first chooses a 
 dynamic range from $[-M_i, M_i]$, $1\leq i\leq h$.
 To describe these $M_i$s, we first define the $i^{th}$
tetra-iteration for $e$, denoted by {$e^{*i}$, recursively as follows:
\[
e^{*1}:=e, \quad e^{*i}:=e^{e^{*(i-1)}}, \quad i\in \N.
 \]
Also, for any non negative number $b$, we define 
$\ln^*b :=\inf\{i \in \N :  e^{*i} \geq b \}.$}
\new{With this notation, the values $M_i$s are defined in terms of 
$m$ and $m_0$ as follows: \eq{  
M_{0}^2= m+m_0, \quad  M_{i}^2= m \cdot e^{*i}+m_0,\quad \forall\, i \in
  \{1, \ldots, h-1\}.}}
ATUQ finds the smallest level $M_i$ which
 bounds the infinity norm of the input vector; if no such $M_i$ exists, it simply uses $M_{h-1}$.  
It then uses CUQ with dynamic range $[-M_i, M_i]$ to quantize the input vector. In RATQ, we apply ATUQ to each subvector.
The decoder of ATUQ is simply the decoder of CUQ using the dynamic range outputted by the ATUQ encoder.

   Note that in order to represent the output of ATUQ for $d$ dimensional inputs, we need a precision of  at the most $\ceil{\log h}+d\ceil{\log (k+1)}$ bits: $\ceil{\log h}$ bits to represent the dynamic range and at the most $d\ceil{\log (k+1)}$ bits to represent the output of CUQ. 
The encoder and decoder for ATUQ are given in Algorithms~\ref{a:E_ATQ} and~\ref{a:D_ATQ}, respectively.

\begin{figure}[ht]
\centering
\begin{tikzpicture}[scale=1, every node/.style={scale=1}]
\node[draw, text width= 9 cm, text height=,] {%
\begin{varwidth}{\linewidth}
            
            \algrenewcommand\algorithmicindent{0.7em}
 \renewcommand{\thealgorithm}{}
\begin{algorithmic}[1]
\Require Input $Y \in \R^d$ \If { $\norm{Y}_\infty > M_{h-1}$}
 
 Set $M^*=M_{h-1}$

\Else

Set $j^*
 =\min \{
 j: \norm{Y}_{\infty} \leq M_j\},$
  $M^*=M_{j^*}$ \EndIf \State Set $Z=\Qenc_{\tt u}(Y; M^*, k)$

\State \textbf{Output:} $\Qenc_{\tt at}(Y)=\{Z, j^*\}$
\end{algorithmic}
\end{varwidth}};
 \end{tikzpicture}
 \renewcommand{\figurename}{Algorithm}
 \caption{Encoder $\Qenc_{{\tt at}}(Y)$ for ATUQ}\label{a:E_ATQ}
 \end{figure}

\begin{figure}[ht]
\centering
\begin{tikzpicture}[scale=1, every node/.style={scale=1}]
\node[draw, text width= 9 cm, text height=,] {%
\begin{varwidth}{\linewidth}
            \algrenewcommand\algorithmicindent{0.7em}
 \renewcommand{\thealgorithm}{}
\renewcommand{\thealgorithm}{}
\begin{algorithmic}[1]
  \Require
  Input $\{Z,j\}$ with $Z\in \{0,\dots, k-1, \emptyset \}^d$ and $j\in \{0, \dots
  h-1\}$
 
  \State \textbf{Output:} $\Qdec_{\tt at}(Z,j)= \Qdec_{\tt u}(Z;
  M_{j})$
 
\end{algorithmic}
\end{varwidth}};
 \end{tikzpicture}
 \renewcommand{\figurename}{Algorithm}
 \caption{Decoder $\Qdec_{\tt at}(Z, j)$ for ATUQ}\label{a:D_ATQ}
 \end{figure}

When ATUQ is applied to each subvector in RATQ, each of the  $\ceil{d/s}$ subvectors are represented using less than $\ceil{\log h} + s\ceil{\log (k+1)}$ bits. Thus, the overall precision for RATQ is less than\footnote{$\log$ denotes the logarithm to the base $2$,  $\ln$ denotes logarithm to the base $e$.} 
\[\ceil{d/s}\cdot \ceil{\log h}+ d \ceil{\log (k+1)}\] bits.
The decoder of RATQ is simply formed  by collecting the output of the ATUQ decoders for all the subvectors to form a $d$-dimensional vector, and rotating it back using the matrix $R^{-1}$  (the inverse of the rotation matrix used at the encoder).

                             {
\paragraph{Choice of parameters.}
\new{Throughout the remainder of this section, we set our parameters $m$, $m_0$, and $h$ as follows
\begin{align}
m=\frac{3B^2}{d}, \quad m_0=\frac{2B^2 }{d} \cdot  \ln s, \quad \log h=\ceil{\log(1+\ln^\ast(d/3))}.
\label{e:RATQ_levels}
\end{align}}
In particular, this results in $M_{h-1} \geq B$ whereby,
for an input $Y$ with $\norm{Y}_2^2 \leq B^2$, RATQ outputs an unbiased estimate of $Y$. 
}



\subsection{RATQ in the high-precision regime}\label{s:RATQh}
The following result shows that RATQ is unbiased for almost surely bounded inputs and provides a bound for its worst-case second order moment; this constitutes a key technical tool for characterizing the performance of RATQ.
\begin{thm}[Performance of RATQ]\label{t:RATQ_alpha0_beta0}
Let $Q_{{\tt at}, R}$ be
the quantizer RATQ with $M_j$s set by~\eqref{e:RATQ_levels}. Then, 
for all $s,k\in \N$,
\new{\begin{align} \label{e:RATQ_alpha_bound}
\alpha_0(Q_{{\tt at}, R})&\leq B\sqrt{
\frac{9+3\ln s}{(k-1)^2}+1}, \quad 
\beta_0(Q_{{\tt at}, R})= 0. 
\end{align}}
\end{thm}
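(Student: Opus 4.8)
The plan is to prove the two claims separately: unbiasedness ($\beta_0 = 0$) first, since it is the cleaner statement, then the second-moment bound for $\alpha_0$.

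For \emph{unbiasedness}, I would argue as follows. The rotation $R$ is orthogonal, so it suffices to show that the composition of ATUQ decoders applied to the subvectors of $\tilde Y = RY$ returns an unbiased estimate of $\tilde Y$; unbiasedness is then preserved under the linear map $R^{-1}$, and $\E{Y - Q(Y)} = R^{-1}\E{\tilde Y - (\text{ATUQ output})} = 0$. For a single subvector, CUQ with dynamic range $[-M,M]$ is coordinate-wise unbiased \emph{provided every coordinate lies in $[-M,M]$} — this is the standard randomized-rounding computation from Algorithm~\ref{a:E_CUQ}: the two rounding probabilities are exactly the linear-interpolation weights, so $\E{\hat Y(i)} = Y(i)$ whenever $|Y(i)| \le M$. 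ATUQ chooses $M^* = M_{j^*}$ to be the smallest $M_j$ that dominates $\norm{Y}_\infty$; the only way this could fail to contain all coordinates is if $\norm{\tilde Y_i}_\infty > M_{h-1}$ for some subvector, in which case ATUQ falls back to $M_{h-1}$ and the output may be biased. So the key point is that with the choice \eqref{e:RATQ_levels} we have $M_{h-1} \ge B \ge \norm{\tilde Y}_2 \ge \norm{\tilde Y_i}_\infty$ almost surely, so the overflow branch is never triggered. Concretely, $M_{h-1}^2 = m\cdot e^{*(h-1)} + m_0 \ge m\cdot e^{*(h-1)}$, and with $\log h = \ceil{\log(1 + \ln^\ast(d/3))}$ we get $h - 1 \ge \ln^\ast(d/3)$, hence $e^{*(h-1)} \ge d/3$, hence $M_{h-1}^2 \ge (3B^2/d)\cdot(d/3) = B^2$. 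This gives $\beta_0(Q_{{\tt at},R}) = 0$.

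For the \emph{second-moment bound}, since $R$ is orthogonal and the quantizer is unbiased, I would write $\E{\norm{Q(Y)}_2^2} = \E{\norm{\tilde Y}_2^2} + \E{\norm{Q(Y) - Y}_2^2} \le B^2 + \E{\norm{(\text{ATUQ output}) - \tilde Y}_2^2}$, using Pythagoras (the quantization error is conditionally mean-zero given $\tilde Y$). It remains to bound the total quantization MSE by $B^2 \cdot (9 + 3\ln s)/(k-1)^2$. Decomposing over coordinates and subvectors, for the $i$-th subvector the MSE of CUQ with dynamic range $[-M^*,M^*]$ is at most $(\text{bin width}/2)^2$ per coordinate times the number of coordinates — i.e. at most $s\cdot (M^*)^2/(k-1)^2$ when $M^* = M_{j^*}$, since the bin width is $2M^*/(k-1)$. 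The subtlety is that $M^* = M_{j^*}$ is random and correlated with $\tilde Y$; the standard trick (alluded to in the "Remarks on techniques" section) is: on the event $\{j^* = j\}$ we have $\norm{\tilde Y_i}_\infty > M_{j-1}$, so $(M_j)^2 \le$ (something controlled by $M_{j-1}$) and $P(j^* \ge j) \le P(\norm{\tilde Y_i}_\infty > M_{j-1})$. So I would bound
\[
\E{(M^*)^2 \,\big|\, \text{subvector } i} \;\le\; M_0^2 + \sum_{j=1}^{h-1} (M_j^2 - M_{j-1}^2)\, P\big(\norm{\tilde Y_i}_\infty > M_{j-1}\big),
\]
and then use the subgaussian tail bound: after the randomized Hadamard rotation, each coordinate of $\tilde Y$ is (conditionally, given $\norm{Y}_2 \le B$) subgaussian with variance proxy $O(B^2/d)$ — this is the classic property of $R = \tfrac{1}{\sqrt d}HD$ with random signs — so a union bound over the $\le s$ coordinates of a subvector gives $P(\norm{\tilde Y_i}_\infty > M) \le 2s\,e^{-cd M^2/B^2}$ for an appropriate constant. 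Plugging $M_j^2 - M_{j-1}^2 = m(e^{*j} - e^{*(j-1)}) \le m\, e^{*j}$ and $M_{j-1}^2 \ge m\, e^{*(j-1)} + m_0$ into the sum, the tetration growth makes $e^{-cd M_{j-1}^2/B^2}$ decay doubly exponentially in $j$ while $M_j^2 - M_{j-1}^2$ grows only like $e^{*j}$; the product $e^{*j}\cdot e^{-c\, e^{*(j-1)}}$ is summable with a small constant, and the factor $e^{-cd m_0/B^2} = e^{-c\cdot 2\ln s} = s^{-2c}$ kills the $s$ from the union bound. The $m_0 = (2B^2/d)\ln s$ term was chosen precisely to make this cancellation work. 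Collecting terms, $\E{(M^*)^2} \le m(9 + 3\ln s)/3 \cdot$ (constant) — more carefully, one gets $\E{\sum_i s_i (M^*_i)^2} \le B^2(9 + 3\ln s)$ after summing over the $\ceil{d/s}$ subvectors each of size $\le s$, and dividing by $(k-1)^2$ gives the claimed bound.

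\textbf{Main obstacle.} The routine parts (randomized-rounding unbiasedness, Pythagoras) are straightforward; the real work is the tail computation controlling $\E{(M^*)^2}$. Two things need care there: (i) establishing the correct subgaussian constant for coordinates of $RY$ conditioned only on $\norm{Y}_2 \le B$ — one must be careful that $Y$ itself may be adversarial/random, and the randomness being exploited is that of $D$; the clean statement is that for fixed $y$ with $\norm y_2 \le B$, each coordinate $(Ry)(i) = \tfrac{1}{\sqrt d}\sum_j H_{ij} D_j y(j)$ is a sum of independent bounded mean-zero terms, hence subgaussian with proxy $\norm y_2^2/d \le B^2/d$ by Hoeffding; and (ii) verifying that the tetration levels and the specific constants $m = 3B^2/d$, $m_0 = (2B^2/d)\ln s$ are exactly strong enough to make the infinite sum of tail contributions collapse into the constants $9$ and $3$ in \eqref{e:RATQ_alpha_bound}. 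I expect this bookkeeping — tracking constants through the sum $\sum_j m\,e^{*j}\cdot 2s\,e^{-c(e^{*(j-1)} + 2\ln s)}$ — to be the delicate step, and the choice $\log h = \ceil{\log(1+\ln^\ast(d/3))}$ to be exactly what is needed both for unbiasedness ($M_{h-1}\ge B$) and to ensure the truncated sum still covers the whole range.
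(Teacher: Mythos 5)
Your proposal is correct and follows essentially the same route as the paper: unbiasedness from $M_{h-1}\ge B$ forcing the overflow branch to never trigger, Pythagoras to reduce $\alpha_0$ to the post-rotation quantization MSE, the Hoeffding argument that each coordinate of $RY$ is subgaussian with variance proxy $B^2/d$ (the paper's Lemma~\ref{l:concentration_a.s.}), a level-by-level tail decomposition for the adaptive range (the paper's Lemma~\ref{l:sup2}), and then the tetration-plus-$m_0$ bookkeeping to sum the tails. The only deviation is cosmetic: you bound $\E{(M^*)^2}$ via the telescoping identity $M_0^2+\sum_j(M_j^2-M_{j-1}^2)P(\norm{\tilde Y_i}_\infty>M_{j-1})$, which is a slight refinement of the paper's cruder $M_0^2+\sum_j M_j^2\,P(\cdot)$ and in fact yields the marginally sharper constant $9+2\ln s$; both collapse to the stated bound.
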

\new{\noindent The proof is deferred to Section~\ref{s:ProofRATQ}.}

Thus, $\alpha_0$ is lower when $s$ is small, but the overall precision needed
grows since the number of subvectors increases. The following choice of
parameters yields almost optimal performance:
\begin{align}
  s=\log h,\quad \log(k+1) = \ceil{\log (2  + \sqrt{9  + 3\ln s })}. 
\label{e:RATQ_bits}
\end{align}
For these choices, we obtain the following.
\begin{cor}\label{c:PSGD_RATQ_0} The overall precision $r$ used by the quantizer $Q=Q_{{\tt at},R}$
  with parameters set as in~\eqref{e:RATQ_levels},~\eqref{e:RATQ_bits} satisfies
\[
r\leq d(1+\Delta_1)+\Delta_2,
\] 
where
\new{$\Delta_1=\ceil{\log \left(2+  \sqrt{  9 + 3 \ln \Delta_2 }\right)}$ and 
$\Delta_2=\ceil{\log (1+\ln^*({d}/{3})) }$.}\\
Furthermore, the optimization protocol $\pi$
  given in Algorithm~\ref{a:SGD_Q} satisfies 
  \[\sup_{(f,O)\in \oO_0}\ep(f, \pi^{QO}) \leq \frac{\sqrt{2}DB}{\sqrt{T}}.\]
\end{cor}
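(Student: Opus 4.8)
The plan is to obtain both claims by feeding the parameter choices~\eqref{e:RATQ_levels} and~\eqref{e:RATQ_bits} into the bit-budget bookkeeping for RATQ and into the generic guarantees already in hand: the PSGD bound of Theorem~\ref{t:basic_convergence} and the moment/bias bound of Theorem~\ref{t:RATQ_alpha0_beta0}. No new idea is needed in the corollary itself; the content is just verifying that the ceilings work out.

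For the precision bound, I would start from the earlier observation that RATQ spends at most $\ceil{d/s}\ceil{\log h} + d\ceil{\log(k+1)}$ bits in total ($\ceil{\log h}$ bits per subvector for the chosen dynamic range, and at most $s\ceil{\log(k+1)}$ bits per subvector for the CUQ output, over $\ceil{d/s}$ subvectors). By~\eqref{e:RATQ_levels}, $\log h = \Delta_2 = \ceil{\log(1+\ln^*(d/3))}$ is a positive integer, and~\eqref{e:RATQ_bits} sets $s = \log h = \Delta_2$, so $\ceil{d/s}\ceil{\log h} = \ceil{d/\Delta_2}\,\Delta_2 \le (d/\Delta_2 + 1)\Delta_2 = d + \Delta_2$. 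Since $s = \Delta_2$, equation~\eqref{e:RATQ_bits} also makes $\ceil{\log(k+1)} = \ceil{\log(2+\sqrt{9+3\ln s})} = \Delta_1$. Adding the two pieces gives $r \le d + \Delta_2 + d\Delta_1 = d(1+\Delta_1) + \Delta_2$, as claimed.

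For the optimization error, Theorem~\ref{t:basic_convergence} gives $\sup_{(f,O)\in\oO_0}\ep(f,\pi^{QO}) \le D\left(\alpha_0(Q)/\sqrt{T} + \beta_0(Q)\right)$ for $Q = Q_{{\tt at},R}$ with the prescribed learning rate, and Theorem~\ref{t:RATQ_alpha0_beta0} gives $\beta_0(Q) = 0$ together with $\alpha_0(Q) \le B\sqrt{(9+3\ln s)/(k-1)^2 + 1}$. So it remains only to check that the choice of $k$ in~\eqref{e:RATQ_bits} forces $(k-1)^2 \ge 9 + 3\ln s$, equivalently $\alpha_0(Q) \le \sqrt{2}B$: indeed $k+1 = 2^{\ceil{\log(2+\sqrt{9+3\ln s})}} \ge 2+\sqrt{9+3\ln s}$, hence $k-1 \ge \sqrt{9+3\ln s}$. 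Plugging $\alpha_0(Q)\le\sqrt{2} B$ and $\beta_0(Q)=0$ into the PSGD bound yields $\sup_{(f,O)\in\oO_0}\ep(f,\pi^{QO}) \le \sqrt{2}DB/\sqrt{T}$.

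Since the substantive work lives upstream in Theorem~\ref{t:RATQ_alpha0_beta0}, whose proof is deferred, there is no real obstacle in the corollary. The only points needing mild care are that $s = \log h$ is a genuine integer (being a ceiling, it is), so that the subvector partition and the bit count are well defined, and that $\ln^*(d/3) \ge 1$, so $h \ge 2$ and the level set $\{M_i\}$ is nonempty; the choice~\eqref{e:RATQ_levels} then guarantees $M_{h-1} \ge B$, which is what underlies the unbiasedness $\beta_0(Q)=0$ used above.
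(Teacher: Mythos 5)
Your proof is correct and follows the same route as the paper's: plug the parameter settings~\eqref{e:RATQ_levels},~\eqref{e:RATQ_bits} into the bit budget $\ceil{d/s}\ceil{\log h}+d\ceil{\log(k+1)}$ for the precision claim, and combine Theorem~\ref{t:basic_convergence} with Theorem~\ref{t:RATQ_alpha0_beta0} for the error claim. You merely spell out the ceiling arithmetic and the check $(k-1)^2 \geq 9+3\ln s$ that the paper compresses into ``upon substituting the parameters,'' which is harmless and accurate.
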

\new{
\begin{proof}
By the description RATQ, it encodes the subgradients using a fixed-length code of at the most 
$ \ceil{d/s}\cdot \ceil{\log h}+ d \ceil{\log (k+1)}$ bits. Upon substituting $s$, $\log h$, and $\log(k+1)$
as in \eqref{e:RATQ_bits} and \eqref{e:RATQ_levels}, we obtain that
the total precision is bounded above by $d(1+\Delta_1)+\Delta_2$. 

For the second statement of the corollary, we have
\eq{
\sup_{(f, O) \in \oO_0}\ep(f, \pi^{QO}) &
\leq
D\left(\frac{\alpha_0(Q_{{\tt at}, R})}{\sqrt{ T}}+ \beta_0(Q_{{\tt at}, R})\right)\\
&\leq \frac{DB}{\sqrt{ T}}\cdot\sqrt{ \frac{9+3 \ln s}{(k-1)^2}+1 }\hspace{0.1cm}\\
&\leq \frac{\sqrt{2}DB }{\sqrt{T}},
}
where the first inequality follows by Theorem \ref{t:basic_convergence}, the second inequality follows by upper bounding $\alpha_0(Q_{{\tt at}, R})$ and $\beta_0(Q_{{\tt at}, R})$ using Theorem \ref{t:RATQ_alpha0_beta0}, and the third follows by substituting the parameters in the corollary statement.
\end{proof}
}

{
\begin{rem}
The precision requirement in Corollary \ref{c:PSGD_RATQ_0} matches the $d$ bit lower bound of Corollary \ref{c:OmegaD} upto a multiplicative factor of $O\left(\log \log \log \ln^*(d/3) \right).$
\end{rem}
}

\subsection{Comparison with Quantized Stochastic Gradient Descent
  (QSGD)} \label{s:QSGD}
At this point, it will be instructive to compare our results with a
state of the art scheme from~\cite{alistarh2017qsgd} -- Quantized Stochastic
Gradient Descent(QSGD). We have the following result as a
  consequence of  \cite[Lemma 3.1, Corollary 3.3]{alistarh2017qsgd}.

\begin{thm}
Under the assumption that the $\ell_2 $norm of the subgradient estimate
can be communicated using $F$ bits of communication, QSGD achieves for
any $(f, O)\in \oO_0$
  \[
  \ep(f, \pi^{QO}) \leq \frac{\sqrt{2}DB}{\sqrt{T}},
  \]
  using a variable-length code of expected precision at every
  iteration less than $F+2.8d$.
\end{thm}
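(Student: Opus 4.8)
The plan is to derive this as a direct consequence of the QSGD construction and analysis in~\cite{alistarh2017qsgd}, fed into our Theorem~\ref{t:basic_convergence}. Recall that QSGD quantizes a vector $v\in\R^d$ by transmitting its Euclidean norm $\norm{v}_2$ separately (the $F$ bits in the statement), and then, for each coordinate, randomly rounding $|v(i)|/\norm{v}_2$ to one of $s+1$ uniformly spaced levels in $[0,1]$ in an unbiased way, finally entropy-coding the sign bits and the integer level indices. I would instantiate QSGD with $s=\sqrt d$ quantization levels.

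First I would invoke~\cite[Lemma 3.1]{alistarh2017qsgd}: the resulting quantizer $Q$ satisfies $\E{Q(v)}=v$ and $\E{\norm{Q(v)}_2^2}\leq\big(1+\min\{d/s^2,\sqrt d/s\}\big)\norm{v}_2^2$. With $s=\sqrt d$ the blow-up factor equals $2$, so for every $Y$ with $\norm{Y}_2\leq B$ almost surely we get $\E{\norm{Q(Y)}_2^2}\leq 2B^2$; in the notation of~\eqref{e:alpha0,beta0} this reads $\alpha_0(Q)\leq\sqrt2\,B$, while unbiasedness gives $\beta_0(Q)=0$. Substituting these two bounds into the first inequality of Theorem~\ref{t:basic_convergence}, with the learning rate $\eta=D/(\alpha_0(Q)\sqrt T)$ fixed in the accompanying remark, immediately yields
\[
\sup_{(f,O)\in\oO_0}\ep(f,\pi^{QO})\leq D\Big(\frac{\alpha_0(Q)}{\sqrt T}+\beta_0(Q)\Big)\leq\frac{\sqrt2\,DB}{\sqrt T},
\]
which is in fact stronger than the per-instance bound in the statement.

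It remains to account for the expected per-iteration communication. Here I would quote the coding-length estimate underlying~\cite[Corollary 3.3]{alistarh2017qsgd}: with $s=\sqrt d$ levels, the expected sparsity bound $\E{\norm{Q(v)}_0}\leq s(s+\sqrt d)$ from~\cite[Lemma 3.1]{alistarh2017qsgd}, combined with Elias coding of the nonzero positions and level indices, shows that $Q(v)$ is described using at most $2.8d$ bits in expectation, beyond the $F$ bits spent on $\norm{v}_2$. Adding the two contributions gives the claimed expected precision $F+2.8d$ per iteration, and since Algorithm~\ref{a:SGD_Q} runs for $T$ iterations the convergence guarantee above applies as stated.

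I do not anticipate a genuine obstacle: every ingredient is either quoted from~\cite{alistarh2017qsgd} or is our own Theorem~\ref{t:basic_convergence}. The only point needing a little care is the choice $s=\sqrt d$, which is exactly the value that simultaneously forces the second-moment blow-up to be $2$ — so that the constant matches the $\sqrt2\,DB/\sqrt T$ of Corollary~\ref{c:PSGD_RATQ_0} and makes QSGD directly comparable to RATQ — and keeps the expected description length at $O(d)$ rather than $O(d\log d)$; any other value of $s$ degrades one of the two quantities.
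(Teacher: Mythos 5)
Your proof is correct and takes the same route the paper intends: the paper itself offers no explicit proof but simply cites~\cite[Lemma 3.1, Corollary 3.3]{alistarh2017qsgd}, and your reconstruction — instantiating QSGD with $s=\sqrt d$ to get $\alpha_0(Q)\leq\sqrt2\,B$ and $\beta_0(Q)=0$, then plugging these into Theorem~\ref{t:basic_convergence}, and using Corollary~3.3 of~\cite{alistarh2017qsgd} for the $F+2.8d$ expected code length — is exactly the chain of reasoning being invoked. The only cosmetic quibble is that the sparsity estimate $\E{\norm{Q(v)}_0}\leq s(s+\sqrt d)$ is vacuous at $s=\sqrt d$ (it gives $2d>d$), so the $2.8d$ figure really comes straight out of the coding-length calculation in Corollary~3.3 rather than from the sparsity bound, but this does not affect the validity of your argument.
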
 

To compare this performance with that of our proposed RATQ, we will show that unless the dimension $d$ exceeds  a astronomically large  we will only need around $4d$ bits of a fixed-length code to communicate the subgradient at each round. That is, our fixed-length requires only 4 bits dimension. Formally, we have the following corollary. 
\begin{cor}\label{c:PSGD_RATQ_0_1}Suppose we have 
$d \geq 3$ and $\ln^*(\frac{d}{3}) \leq 2^{8\cdot 10^3}$. Then,  the overall precision $r$ used by the quantizer $Q=Q_{{\tt at},R}$
  with parameters set as in~\eqref{e:RATQ_levels},~\eqref{e:RATQ_bits} satisfies
\[
r\leq d\left(4+\frac{\ceil{\log (1+\ln^*({d}/{3})) }}{d}\right).
\] 
Furthermore, the optimization protocol $\pi$
  given in Algorithm~\ref{a:SGD_Q} satisfies 
  \[\sup_{(f,O)\in \oO_0}\ep(f, \pi^{QO}) \leq \frac{\sqrt{2}DB}{\sqrt{T}}.\]
\end{cor}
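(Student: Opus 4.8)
The plan is to specialize Corollary~\ref{c:PSGD_RATQ_0} to the regime described by the hypotheses and observe that the multiplicative factor $1+\Delta_1$ then collapses to at most $4$. Recall that Corollary~\ref{c:PSGD_RATQ_0}, applied with the parameters of~\eqref{e:RATQ_levels} and~\eqref{e:RATQ_bits}, already gives both
\[
r \leq d(1+\Delta_1)+\Delta_2, \qquad \sup_{(f,O)\in\oO_0}\ep(f,\pi^{QO}) \leq \frac{\sqrt{2}DB}{\sqrt{T}},
\]
with $\Delta_1=\ceil{\log(2+\sqrt{9+3\ln\Delta_2})}$ and $\Delta_2=\ceil{\log(1+\ln^\ast(d/3))}$. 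The accuracy statement is thus inherited verbatim, and the only thing left to prove is that $\Delta_1 \leq 3$, since then $r \leq 4d + \Delta_2 = d\big(4 + \ceil{\log(1+\ln^\ast(d/3))}/d\big)$.

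To bound $\Delta_1$ I would first control $\Delta_2$. Since $d\geq 3$ we have $d/3\geq 1$, so $\ln^\ast(d/3)$ is a well-defined positive integer, and the hypothesis $\ln^\ast(d/3)\leq 2^{8\cdot10^3}$ gives $1+\ln^\ast(d/3) \leq 1+2^{8000} < 2^{8001}$; hence $\Delta_2 = \ceil{\log(1+\ln^\ast(d/3))}\leq 8001$. The threshold $2^{8\cdot 10^3}$ is chosen precisely so that this forces $\ln\Delta_2 < 9$ (indeed $\ln 8001 \approx 8.99 < 9$, while $e^9 \approx 8103$), whence $9+3\ln\Delta_2 < 36$ and therefore $2+\sqrt{9+3\ln\Delta_2} < 2+6 = 8 = 2^3$. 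Taking $\log$ and then the ceiling yields $\Delta_1 \leq 3$, as needed.

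Putting the pieces together, $r \leq d(1+\Delta_1)+\Delta_2 \leq 4d+\Delta_2$, which rearranges to the claimed bound $r\leq d\big(4+\ceil{\log(1+\ln^\ast(d/3))}/d\big)$, while the optimization-accuracy bound is exactly the one from Corollary~\ref{c:PSGD_RATQ_0}. There is no genuinely hard step here; the only point requiring care is that the chain $\ln^\ast(d/3)\leq 2^{8000}\Rightarrow \Delta_2\leq 8001\Rightarrow \ln\Delta_2<9\Rightarrow \Delta_1\leq 3$ be checked so that none of the intermediate ceilings tips a quantity over an integer power of $2$ (or over $e^9$), which is why the hypothesis is stated with the specific constant $2^{8\cdot 10^3}$ rather than a rounder one.
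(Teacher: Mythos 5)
Your proof is correct and follows essentially the same route as the paper's: invoke Corollary~\ref{c:PSGD_RATQ_0}, then use monotonicity of $\Delta_1$ in $\ln^\ast(d/3)$ together with the hypothesis $\ln^\ast(d/3)\leq 2^{8\cdot 10^3}$ to conclude $\Delta_1\leq 3$. Where the paper just asserts this bound in one line, you carry out the explicit numeric chain $\Delta_2\leq 8001 \Rightarrow \ln\Delta_2<9 \Rightarrow 2+\sqrt{9+3\ln\Delta_2}<8 \Rightarrow \Delta_1\leq 3$, which is a welcome expansion of the same calculation rather than a different approach.
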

\new{\begin{proof}
    To see the first statement, we have from Corollary \ref{c:PSGD_RATQ_0} that
$r \leq d(1+\Delta_1)+\Delta_2$. Note that $\Delta_1 $ is a monotonic function of $\ln^*(d/3)$. Therefore, by  upper bounding $\ln^*(d/3)$ by $2^{8\cdot 10^3}$ in the formula for $\Delta_1$ completes the proof. The second statement follows from Corollary \ref{c:PSGD_RATQ_0}. 
\end{proof}}

Thus, our proposed RATQ offers roughly the same compression as
  QSGD from \cite{alistarh2017qsgd} and, in addition, has the
  advantage of using only a fixed-length code,  whereas QSGD uses a
  variable-length code whose 
worst-case length becomes significantly higher than the worst-case length
of RATQ as $d$ increases.

\subsection{RATQ in the low-precision regime}\label{s:LowPrec}
We present a general method for reducing precision to much below
$r$. This scheme is applicable when the output of the quantizer's encoder is a $d$ length vector, where each coordinate is a separate fixed-length code. We simply reduce the length of the output message vector from the
quantizer's encoder by sub-sampling a subset of coordinates using shared
randomness.  The decoder obtains the values of these coordinates using
the decoder for the original quantizer and sets the rest of the
coordinate-values to zero.  This subsampling layer, which we call the
{\em Random Coordinate Sampler} (RCS), can be added to 
RATQ after applying random rotation. In particular, 
 $RATQ$ we need the parameter $s$ of these quantizers to be set to 1. This requirement of setting $s=1$ ensures that the subsampled coordinates of the rotated vector can be decoded separately. This is a randomized scheme and
requires the encoder and the decoder to share a random set
$S\subset[d]$ distributed uniformly over all subsets of $[d]$ of
cardinality $\mu d$.
 
The encoder $\Qenc_{S}$ of RCS simply outputs the vector
\[\Qenc_{S}(Y)\eqdef\{Y(i), i\in S\},\] and the decoder
$\Qdec_{S}(\tilde{Y})$, when applied to a vector $\tilde{Y}\in \R^{\mu
  d}$, outputs\[\Qdec_S(\tilde{Y})\eqdef\mu^{-1}\sum_{i\in
  S}\tilde{Y}(i)e_i,\] where $e_i$ denotes the $i$th element of
standard basis for $\R^d$.

We can compose RCS with  RATQ with parameter $s=1$  by setting the encoder to $\Qenc_{S}\circ \Qenc$, and setting the
decoder to $\Qdec\circ \Qdec_{S}$. Here we follow the convention that
all $0$-coordinates outputted by $\Qdec_{S}$ are decoded as $0$ by
$\Qdec$.    
 Note that 
since we need to retain RATQ encoder output for only $\mu d$ coordinates, the overall precision of the quantizer is reduced by a factor of $\mu$. \new{We analyze the performance of this combined quantizer in the following theorem.
\begin{thm}\label{t:RCS_RAQ_alpha_beta}
Let $Q_{{\tt at}, R}$ be RATQ  with $s=1$ and $\tilde{Q}$ be the combination of RCS
and $Q_{{\tt at}, R}$ as described above. Then,

\[
\E{\tilde{Q}(Y) | Y }   = \E{
  Q_{{\tt at}, R}(RY)| Y}   \quad \text{and} \quad  \E{\norm{\tilde{Q}(Y) }_2^2| Y }   = \frac{1}{\mu}\E{\norm{Q_{{\tt at}, R}(RY)}_2^2 |Y},
\] 
which further leads to
\[ \alpha_0(\tilde{Q})\leq
\frac{\alpha_0(Q_{{\tt at}, R})}{\sqrt{\mu}} \quad \text{and}
\quad \beta_0(\tilde{Q})=\beta_0(Q_{{\tt at}, R}).\] 
\end{thm}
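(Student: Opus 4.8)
The plan is to condition on all of the quantizer's randomness except the random coordinate set $S$, reduce both assertions to two one-line identities about uniform subsampling, and then take expectations back.

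First I would expose the vector that sits ``between'' the two quantizers. Fix the input $Y$, write $\tilde Y\eqdef RY$ for the rotated input, and let $U\in\R^d$ be the vector whose $i$-th coordinate is the ATUQ output $\Qdec_{\tt at}(\Qenc_{\tt at}(\tilde Y(i)))$ (recall $s=1$, so each coordinate is its own subvector). By construction the RATQ output is $Q_{{\tt at},R}(RY)=R^{-1}U$, whereas the composed quantizer, which merely discards the ATUQ codewords outside $S$, decodes the rest, and rescales by $\mu^{-1}$ before the inverse rotation, outputs $\tilde Q(Y)=R^{-1}\bigl(\mu^{-1}\sum_{i\in S}U(i)e_i\bigr)$. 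Crucially, $S$ is drawn uniformly over the $\mu d$-subsets of $[d]$ from shared randomness, independently of $(Y,R,\text{dithering})$ — hence independently of $U$, which is a deterministic function of these.

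Next I would record the two facts I need: for any fixed $v\in\R^d$,
\[
\EE{S}{\mu^{-1}\sum_{i\in S}v(i)e_i}=v
\qquad\text{and}\qquad
\EE{S}{\norm{\mu^{-1}\sum_{i\in S}v(i)e_i}_2^2}=\frac{1}{\mu}\,\norm{v}_2^2 ,
\]
both immediate from $\EE{S}{\indic{i\in S}}=P(i\in S)=\mu$; the second identity needs no covariance terms, because $\{e_i\}$ is orthonormal and hence $\norm{\mu^{-1}\sum_{i\in S}v(i)e_i}_2^2=\mu^{-2}\sum_{i\in S}v(i)^2$. Applying these with $v=U$ after conditioning on $(Y,R,\text{dithering})$, and using that the inverse rotation $R^{-1}$ preserves the Euclidean norm, gives $\E{\tilde Q(Y)\mid Y,R,\text{dithering}}=R^{-1}U=Q_{{\tt at},R}(RY)$ and $\E{\norm{\tilde Q(Y)}_2^2\mid Y,R,\text{dithering}}=\mu^{-1}\norm{R^{-1}U}_2^2=\mu^{-1}\norm{Q_{{\tt at},R}(RY)}_2^2$. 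Taking $\E{\cdot\mid Y}$ (the tower property, averaging out $R$ and the dithering) yields exactly the two displayed identities of the theorem.

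Finally I would read off the consequences. Since $\norm{RY}_2=\norm{Y}_2\le B$ almost surely whenever $\norm{Y}_2\le B$ a.s., taking one further expectation over $Y$ in the second identity gives $\E{\norm{\tilde Q(Y)}_2^2}=\mu^{-1}\E{\norm{Q_{{\tt at},R}(RY)}_2^2}\le\mu^{-1}\alpha_0(Q_{{\tt at},R})^2$ for every such $Y$, and taking the supremum over admissible inputs yields $\alpha_0(\tilde Q)\le\alpha_0(Q_{{\tt at},R})/\sqrt{\mu}$. For the bias, Theorem~\ref{t:RATQ_alpha0_beta0} together with the choice of levels in~\eqref{e:RATQ_levels} (which forces $M_{h-1}\ge B$, so ATUQ never overflows on inputs of norm at most $B$) gives $\E{Q_{{\tt at},R}(RY)\mid Y}=Y$; combined with the first identity this shows $\E{\tilde Q(Y)\mid Y}=Y$ for every admissible $Y$, hence $\beta_0(\tilde Q)=0=\beta_0(Q_{{\tt at},R})$. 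The argument is essentially bookkeeping, so I do not anticipate a genuine obstacle; the one point to handle with care is the independence structure — that $S$ is independent of both the random rotation and the dithering, so the two subsampling identities apply with $U$ held fixed — together with the orthonormality of $R^{-1}$, which is what transports the RCS analysis (naturally phrased on the post-rotation coordinates) to the final output without altering $\ell_2$ norms, and the observation $\norm{RY}_2=\norm{Y}_2$, which legitimizes applying the bound $\alpha_0(Q_{{\tt at},R})$ to the rotated vector.
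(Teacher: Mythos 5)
Your proposal is correct and follows essentially the same route as the paper: you expand $\tilde Q(Y)=\mu^{-1}R^{-1}\sum_{i\in S}Q_{{\tt at},I}(RY)(i)e_i$, exploit the independence of $S$ from the rotation and dithering together with $\E[\indic{i\in S}]=\mu$, and use the orthonormality of $\{e_i\}$ and of $R^{-1}$ to get the two conditional identities, from which the $\alpha_0,\beta_0$ consequences follow by taking expectations and suprema. Isolating the two elementary subsampling identities and conditioning on all non-$S$ randomness first is a clean way to organize the same computation; note only that, per the paper's convention $Q_{{\tt at},R}(Y)=R^{-1}Q_{{\tt at},I}(RY)$, the quantity $\E\bigl[\norm{Q_{{\tt at},R}(RY)}_2^2\bigr]$ in the theorem's (slightly abused) notation is just $\E\bigl[\norm{Q_{{\tt at},R}(Y)}_2^2\bigr]$, so your appeal to $\norm{RY}_2=\norm{Y}_2$ at the end, while harmless, is not actually needed to invoke $\alpha_0(Q_{{\tt at},R})$.
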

}
\begin{proof}
By the description of $Q_{{\tt at},R}$, we have  
\[
\tilde{Q}(Y) = \frac{1}{\mu}
      R^{-1} \sum_{i \in S} Q_{{\tt at},I}(RY) (i)e_i,
\]
\newest{where $Q_{{\tt at},I}$ is the output vector formed by combining the $d$
  quantized values outputted by ATUQ ($Q_{\tt at}$) when input is the 
rotated vector. Namely,   
\[
Q_{{\tt at}, I}(RY)=[Q_{{\tt at}}(RY(1)), \cdots,  Q_{{\tt
      at}}(RY(d))]^T.
\]}

For the mean of $\tilde{Q}(Y)$, 
it holds that
\begin{align}\label{e:mean_RCS}
 \nonumber
\E{\tilde{Q}(Y) | Y } & = \E{
      R^{-1} \sum_{i \in d} Q_{{\tt at}, I}(RY)(i) e_i \frac{1}{\mu}\mathbbm{1}_{i \in S} |  Y}\\ \nonumber
  & = \sum_{i \in d} \E{
      R^{-1}  Q_{{\tt at}, I}(RY) (i)e_i|  Y}\cdot   \frac{1}{\mu} \E{\mathbbm{1}_{i \in S}|  Y }\\ \nonumber
      & = \sum_{i \in d} \E{
      R^{-1}  Q_{{\tt at}, I}(RY) (i)e_i| Y} \\ \nonumber
       & =  \E{
  R^{-1} \sum_{i \in d} Q_{{\tt at}, I}(RY) (i)e_i| Y}  \\ \nonumber
       & =  \E{
  R^{-1} Q_{{\tt at}, I}(RY)| Y}  \\
     & =  \E{
Q_{{\tt at}, R}(RY)| Y},
\end{align}
where the second identity follows from the fact that randomness used
to generate a set $S$ is independent of the randomness
used in the quantizer and the randomness of $Y$;  the third identity holds since
$P(i \in S)= \mu$. 

Next, moving to the computation of the second moment of the output of $\tilde{Q}$,
we have
\begin{align} \nonumber
\E{\norm{\tilde{Q}(Y)}_2^2|Y} &=\E{\norm{ \frac{1}{\mu}
      R^{-1} \sum_{i \in S} Q_{{\tt at}, I}(RY)(i) e_i}_2^2|Y} \\ \nonumber
   &=   
      \frac{1}{\mu^2}\E{\norm{
     \sum_{i \in S} Q_{{\tt at}, I}(RY)(i) e_i}_2^2|Y}\\ \nonumber
&=\frac{1}{\mu^2}\sum_{i \in [d]}\E{Q_{{\tt at}, I}(RY)(i)^2 |Y}\E{\mathbbm{1}_{i \in S}|Y}\\ \nonumber
&=\frac{1}{\mu}\E{\norm{Q_{\tt at}(RY)}_2^2 |Y} \\ 
&=\frac{1}{\mu}\E{\norm{Q_{{\tt at}, R}(RY)}_2^2 |Y},
\end{align}
where the second identity follows from the fact that $R$ is a unitary matrix and
the remaining steps follow simply by the description of the quantizers used.
It follows that
\eq{
\alpha(\tilde{Q})= \frac{1}{\sqrt{\mu}}\alpha(Q_{{\tt at}, R}),\quad  \beta(\tilde{Q})=\beta(Q_{{\tt at}, R}).
}
\end{proof}


{We now set  the parameter  $k$ to be a constant and sample roughly $r$ coordinates. Specifically,
we set
\begin{align}
  s&=1,\quad \log( k+1)=3,
\nonumber
\\ 
\mu d&=\min\{d, \floor{r/({3+\ceil{\log (1+\ln^*({d}/{3}) )
  }})}\}.
\label{e:RATQ_RCS_params}
\end{align}

For these choices, we have the following corollary.
 \begin{cor}\label{c:PSGD_RCS_RATQ}
For $r \geq 3+\ceil{\log (1+\ln^*({d}/{3}) ) }$, let $Q$
be the composition of RCS and RATQ with parameters set as in~\eqref{e:RATQ_levels},~\eqref{e:RATQ_RCS_params}.
 Then, the optimization protocol $\pi$ in
Algorithm~\ref{a:SGD_Q} satisfies
 \[\sup_{(f, O)\in \oO_0} \ep(f, \pi^{QO}) \leq 
    \frac{\sqrt{2}DB}{\sqrt{\mu T}}\]
\end{cor}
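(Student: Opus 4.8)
The plan is to derive everything from Theorem~\ref{t:basic_convergence}, Theorem~\ref{t:RCS_RAQ_alpha_beta}, and Theorem~\ref{t:RATQ_alpha0_beta0}, after a short bookkeeping check that the composed quantizer is an $r$-bit quantizer. First I would verify the precision. With $s=1$, RATQ treats each coordinate of the rotated vector as its own subvector, so ATUQ spends $\ceil{\log h}$ bits on the dynamic-range index and $\ceil{\log(k+1)}=3$ bits on the CUQ output per coordinate; by~\eqref{e:RATQ_levels} we have $\log h=\ceil{\log(1+\ln^\ast(d/3))}$, so each retained coordinate costs $3+\ceil{\log(1+\ln^\ast(d/3))}$ bits. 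Composing with RCS retains only the $\mu d$ sampled coordinates (the set $S$ is shared randomness, so no extra bits are needed), hence the total length is $\mu d\bigl(3+\ceil{\log(1+\ln^\ast(d/3))}\bigr)$; substituting $\mu d=\min\{d,\floor{r/(3+\ceil{\log(1+\ln^\ast(d/3))})}\}$ from~\eqref{e:RATQ_RCS_params} bounds this by $r$, and the hypothesis $r\ge 3+\ceil{\log(1+\ln^\ast(d/3))}$ is precisely what makes $\mu d\ge 1$, so $\mu\in(0,1]$ is well defined and $Q\in\Q_r$.

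Next I would apply Theorem~\ref{t:basic_convergence} with the prescribed learning rate $\eta=D/(\alpha_0(Q)\sqrt T)$, giving $\sup_{(f,O)\in\oO_0}\ep(f,\pi^{QO})\le D\bigl(\alpha_0(Q)/\sqrt T+\beta_0(Q)\bigr)$. Since $s=1$, Theorem~\ref{t:RCS_RAQ_alpha_beta} applies verbatim and yields $\alpha_0(Q)\le\alpha_0(Q_{{\tt at},R})/\sqrt\mu$ and $\beta_0(Q)=\beta_0(Q_{{\tt at},R})$, so the bound becomes $D\bigl(\alpha_0(Q_{{\tt at},R})/\sqrt{\mu T}+\beta_0(Q_{{\tt at},R})\bigr)$, reducing everything to the plain-RATQ quantities.

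Finally I would substitute the RATQ estimates from Theorem~\ref{t:RATQ_alpha0_beta0} at $s=1$ and $k=7$ (so that $\ln s=0$ and $(k-1)^2=36$): this gives $\beta_0(Q_{{\tt at},R})=0$ and $\alpha_0(Q_{{\tt at},R})\le B\sqrt{9/36+1}=B\sqrt5/2$. Plugging in, $\sup_{(f,O)\in\oO_0}\ep(f,\pi^{QO})\le DB\sqrt5/(2\sqrt{\mu T})\le\sqrt2\,DB/\sqrt{\mu T}$, where the last inequality is just $5/4\le 2$. I do not expect a genuine obstacle: the only delicate point is the precision accounting in the first step (confirming the per-coordinate cost and that the floor in the definition of $\mu d$ keeps the total within $r$), while the numerical slack between $\sqrt5/2\approx1.118$ and $\sqrt2\approx1.414$ is comfortable, so the constant $\sqrt2$ in the statement is not even tight.
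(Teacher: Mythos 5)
Your proposal is correct and follows essentially the same route as the paper: chain Theorem~\ref{t:basic_convergence} with Theorem~\ref{t:RCS_RAQ_alpha_beta} to reduce to $\alpha_0(Q_{{\tt at},R})$ and $\beta_0(Q_{{\tt at},R})$, then invoke Theorem~\ref{t:RATQ_alpha0_beta0} at $s=1$, $k=7$ and note $\sqrt{5/4}\le\sqrt2$. Your explicit precision accounting at the start (showing the composed quantizer really is an $r$-bit quantizer and that $\mu d\ge1$) is a welcome addition that the paper's own proof leaves implicit.
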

\begin{proof}

When $Q$ is a composition of 
 RCS and RATQ, from Theorem \ref{t:RCS_RAQ_alpha_beta}
 $
 \alpha(Q) \leq \frac{1}{\sqrt{\mu}}\alpha(Q_{{\tt at}, R}), \quad \beta(Q) \leq  \beta(Q_{{\tt at}, R}),
$
which by Theorem \ref{t:basic_convergence} yields
\eq{
\sup_{(f, O) \in \oO_0}\ep(f, \pi^{QO}) &
\leq
D\left(\frac{\alpha_0(Q_{{\tt at}, R})}{\sqrt{\mu T}}+ \beta_0(Q_{{\tt at}, R})\right)\\
&\leq \frac{DB}{\sqrt{\mu T}}\cdot\sqrt{ \frac{9}{(k-1)^2}+1 }\hspace{0.1cm}\\
&\leq \frac{\sqrt{2}DB }{\sqrt{T}}\cdot\frac{\sqrt{d}}{\sqrt{\min{\{d, \floor{r/(3+\log \ln^*(d/3))}\}} }},
}
where the second  inequality follows from Theorem \ref{t:RATQ_alpha0_beta0} with $s=1$, and the final inequality is obtained upon substituting the parameters as in the statement of the result. 

\end{proof}
\begin{rem}
Note that the convergence rate slows down by a $\mu$ specified in~\eqref{e:RATQ_RCS_params}, which matches the lower bound in Theorem \ref{t:e_r_LB} upto a multiplicative factor of $O(\log \ln ^* (d/3))$
\end{rem}
}

\section{Main results for mean square bounded oracles}\label{s:ms} 

{
Moving to oracles satisfying the mean square  bounded assumption, 
we now need to quantize 
random vectors $Y$ such that $\E{\norm{Y}_2^2} \leq B^2$. 
We take recourse to the standard {\em gain-shape} quantization
paradigm in vector quantization ($cf.$\cite{gersho2012vector}). 
\begin{defn}[Gain-shape quantizer]\label{Gain-shape quantizer}
A Quantizer Q is defined to be a gain-shape quantizer if it has the following form
\[Q(Y)=Q_g(\norm{Y}_2)\cdot Q_s(Y/\norm{Y}_2),\]
where $Q_g$ is any $\R \rightarrow \R$ quantizer and $Q_s$ is any $\R^d \rightarrow \R^d$ quantizer.
\end{defn}
Specifically, we separately quantize the {\em gain}
$\norm{Y}_2$ and the {\em shape}{\footnote{\newest{For the event $\norm{Y}_2=0$, we  follow the convention that $Y/\norm{Y}_2=e_1$.}} $Y/\norm{Y}_2$ of $Y$, and form the
estimate of $Y$ by simply multiplying the estimates for the gain and
the shape.
 Note that we already have a good shape quantizer: RATQ. 
We only need to modify the parameters in~\eqref{e:RATQ_levels} to make
it work for the unit sphere; we set
\begin{align}
m=\frac{3}{d}, \quad m_0=\frac{2}{d} \cdot \ln s,  \quad \log h=\ceil{\log(1+\ln^\ast(d/3))}.
\label{e:RATQ_unit_levels}
\end{align}

}
\new{
\newest{
  We now proceed to derive the worst-case $\alpha$ and $\beta$ for  a general gain-shape. In order to make clear the dependence on $B$ and $d$, we refine our notations for  $\{\alpha(Q), \beta(Q)\}$ and $\{\alpha_0(Q), \beta_0(Q)\}$, defined in \eqref{e:alpha,beta} and \eqref{e:alpha0,beta0}, respectively, to   $\{\alpha(Q; B, d), \beta(Q; B, d)\}$  and  $\{\alpha_0(Q; B, d), \beta_0(Q; B, d)\}$.
  }

\begin{thm}\label{t:gain_RATQ}
Let $Q(Y)= Q_{1}(\norm{Y}_2)\cdot Q_{2}(Y/\norm{Y}_2) ,$ where $Q_{1}$ is any gain quantizer and   $Q_{2}$ is any shape quantizer.
Also, suppose $Q_{1}(\norm{Y}_2)$ and  $Q_{2}(Y/\norm{Y}_2)$ are conditionally independent given $Y$. Then,
\begin{align*}  
\alpha(Q; B, d)&\leq   \alpha(Q_{1}; B, 1) \cdot \alpha_0(Q_{ 2};  1, d).
\end{align*}
Furthermore, suppose that $Q_2$ satisfies
\[
\E{Q_{2}(y_s) } = y_s, \quad \forall y_s \quad  s.t. \quad \norm{y_s}_2^2 \leq  1 .
\]
Then, we have\footnote{\newest{The quantity on the right-side of this bound exceeds the bias $\beta(Q_1; B,1)$. Nonetheless, in all our bounds for bias, this is the quantity we have been handling.}}
\eq{\beta(Q; B, d)&\leq \sup_{Y \in \R^d: \E{\norm{Y}_2^2} \leq B^2 }\E{\bigg|\E{Q_{1}(\norm{Y}_2)- \norm{Y}_{2} \mid Y} \bigg|}.}

\end{thm}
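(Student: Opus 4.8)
The plan is to handle the two claims separately, in each case reducing the $d$-dimensional gain-shape bound to the one-dimensional gain quantizer and the unit-sphere shape quantizer via the conditional independence hypothesis.

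For the second-moment bound, I would start by writing, for any $Y$ with $\E{\norm{Y}_2^2}\leq B^2$,
\[
\E{\norm{Q(Y)}_2^2}
= \E{Q_1(\norm{Y}_2)^2 \cdot \norm{Q_2(Y/\norm{Y}_2)}_2^2}.
\]
Conditioning on $Y$ and using that $Q_1(\norm{Y}_2)$ and $Q_2(Y/\norm{Y}_2)$ are conditionally independent given $Y$, this factors as
\[
\E{\,\E{Q_1(\norm{Y}_2)^2\mid Y}\cdot \E{\norm{Q_2(Y/\norm{Y}_2)}_2^2\mid Y}\,}.
\]
Since $Y/\norm{Y}_2$ has unit norm almost surely, the inner shape-expectation is at most $\alpha_0(Q_2;1,d)^2$ by the definition of $\alpha_0$ (this is the $\ell_2$-norm-one special case of an almost surely bounded input). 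Pulling that constant out leaves $\alpha_0(Q_2;1,d)^2\cdot \E{\E{Q_1(\norm{Y}_2)^2\mid Y}} = \alpha_0(Q_2;1,d)^2\cdot \E{Q_1(\norm{Y}_2)^2}$. Finally, $Z\eqdef\norm{Y}_2$ is a scalar random variable with $\E{Z^2}\leq B^2$, so $\E{Q_1(Z)^2}\leq \alpha(Q_1;B,1)^2$ by definition; taking square roots gives the claimed bound.

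For the bias bound, I would expand $\E{Y - Q(Y)}$. Write $y_g=\norm{Y}_2$ and $y_s = Y/\norm{Y}_2$, so $Y = y_g\, y_s$ and $Q(Y) = Q_1(y_g)\,Q_2(y_s)$. Conditioning on $Y$ and using conditional independence, $\E{Q(Y)\mid Y} = \E{Q_1(y_g)\mid Y}\cdot \E{Q_2(y_s)\mid Y} = \E{Q_1(y_g)\mid Y}\, y_s$, where the last step uses the hypothesis that $Q_2$ is unbiased on the unit ball (and $y_s$ is $Y$-measurable). Hence
\[
\E{Y - Q(Y)\mid Y} = \big(y_g - \E{Q_1(y_g)\mid Y}\big)\, y_s,
\]
and since $\norm{y_s}_2 = 1$, taking norms gives $\norm{\E{Y-Q(Y)\mid Y}}_2 = \big|\,\E{Q_1(\norm{Y}_2)-\norm{Y}_2\mid Y}\,\big|$. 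Now $\norm{\E{Y-Q(Y)}}_2 = \norm{\E{\E{Y-Q(Y)\mid Y}}}_2 \leq \E{\norm{\E{Y-Q(Y)\mid Y}}_2}$ by Jensen's inequality (convexity of the norm), which equals $\E{\,\big|\E{Q_1(\norm{Y}_2)-\norm{Y}_2\mid Y}\big|\,}$; taking the supremum over admissible $Y$ yields the result.

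The only genuinely delicate point is bookkeeping the conditioning correctly: $y_s = Y/\norm{Y}_2$ is a deterministic function of $Y$, so it comes out of conditional expectations given $Y$, whereas the extra randomness of the quantizers must be integrated against. The conditional independence hypothesis is exactly what lets the gain and shape expectations decouple after conditioning on $Y$; once that is in place both claims are short. I would also note in passing the degenerate case $\norm{Y}_2 = 0$, where the convention $Y/\norm{Y}_2 = e_1$ makes all the expressions well-defined and the identities above still hold trivially.
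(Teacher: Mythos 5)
Your proposal is correct and follows essentially the same route as the paper's proof: conditioning on $Y$, factoring via conditional independence, bounding the shape term by $\alpha_0(Q_2;1,d)^2$ (resp.\ replacing $\E{Q_2(Y/\norm{Y}_2)\mid Y}$ by $Y/\norm{Y}_2$ via unbiasedness), and then invoking the definition of $\alpha(Q_1;B,1)$ and the conditional Jensen inequality with $\norm{Y/\norm{Y}_2}_2=1$. The bookkeeping of the conditioning and the degenerate case $\norm{Y}_2=0$ is handled just as in the paper.
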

}
\new{\begin{proof}
    Denote by $Y_s$ the shape of the vector $Y$ given by
    \[
    Y_s := \frac{Y}{\norm{Y}_2}.
    \]
\paragraph{The worst-case second moment:}
Towards evaluating $\alpha(Q; B, d)$, we have 
\eq{
\E{\norm{Q(Y)}_2^2} 
& =\E{Q_{1}(\norm{Y}_2)^2 \norm{Q_{2}(Y_s)}_2^2} \\
&=\E{\E{Q_{1}(\norm{Y}_2)^2 \norm{Q_{2}(Y_s)}_2^2|Y}}\\
&=\E{\E{Q_{1}(\norm{Y}_2)^2 |Y} \E{ \norm{Q_{2}(Y_s)}_2^2|Y}}\\
&=\E{\E{Q_{1}(\norm{Y}_2)^2 |Y} \E{ \norm{Q_{2}(Y_s)}_2^2|Y_s}},
}
where the third identity follows by conditional independence of $Q_{1}(\norm{Y}_2)^2$ and $ \norm{Q_{2}(Y_s)}_2^2$
given $Y$ and the fourth follows from the  law of iterated expectations. 

Consider the random variable $\E{ \norm{Q_{2}(Y_s)}_2^2|Y_s}$. We claim that this is less than $\alpha_{0}(Q_{2}; 1, d)$ almost surely. Towards this end, note that
\[
\E{ \norm{Q_{2}(Y_s)}_2^2|Y_s=y} =\E{\norm{Q_{2}(y)}_2^2},
\]
since the randomness used in implementation of $Q_2$ is independent of the input random variable $Y$.
Moreover, for any $y$ with $\norm{y}_2^2 \leq 1$, we have from the definition of $\alpha_0(Q_{2}; 1, d)$ that
$\E{\norm{Q_{{2}}(y)}_2^2 }  \leq  \alpha_0(Q_{2}; 1, d)^2$.
Therefore, for any $Y$ with $\E{\norm{Y}_2^2} \leq B^2$, we have
\begin{align}\label{e:cond_secmom_gain-RATQ}
\nonumber
\E{\norm{Q(Y)}_2^2} &=\E{\E{Q_{1}(\norm{Y}_2)^2 |Y} \E{ \norm{Q_{2}(Y_s)}_2^2|Y_s}}\\
&\leq \E{\E{Q_{1}(\norm{Y}_2)^2 |Y} } \cdot \alpha_0(Q_{2}; 1, d)^2
\nonumber
\\
&= \E{Q_{1}(\norm{Y}_2)^2 } \cdot \alpha_0(Q_{2}; 1, d)^2
\nonumber
\\
&\leq \alpha(Q_{1}; B, 1)^2 \cdot \alpha_0(Q_{2}; 1, d)^2
 .
 \end{align}
Taking the supremum  of the left-side over all random vectors $Y$ with $\E{\norm{Y}_2^2}\leq B^2$ gives us the desired bound for $\alpha(Q)$.

\paragraph{The worst-case bias:}  Towards evaluating $\beta(Q)$, we note from  our hypothesis that
 $\E{ Q_{2}(Y_s)| Y} = \E{ Q_{2}(Y_s)| Y_s} = Y_s$, which further yields 
\begin{align} \label{e:RCSbias-gain-ratq}
\E{Q(Y)-Y} &=\E{\E{Q_{1}(\norm{Y}_2) Q_{ 2}(Y_s)-Y|Y}} \nonumber
\\ \nonumber
&=\E{\E{Q_{1}(\norm{Y}_2)|Y} \E{Q_{2}(Y_s)|Y}-Y}\\
&=\E{\E{Q_{1}(\norm{Y}_2)|Y}  Y_s -\norm{Y}_2Y_s} \nonumber
\\
&=\E{\E{Q_{1}(\norm{Y}_2)- \norm{Y}_{2} |Y} Y_s},
\end{align}
where the second identity uses conditional independence of 
$Q_{1}(\norm{Y}_2)$ and $Q_{ 2}(Y_s)$.
By using the conditional Jensen's inequality, we get
 \eq{ 
\norm{ \E{Q(Y)-Y}}_2 &=
\norm{\E{\E{Q_{1}(\norm{Y}_2)- \norm{Y}_{2} |Y} Y_s}}_2\\
&\leq \E{\norm{\E{Q_{1}(\norm{Y}_2)- \norm{Y}_{2} |Y} Y_s}_2}\\
&= \E{\bigg|\E{Q_{1}(\norm{Y}_2)- \norm{Y}_{2} |Y} \bigg|}.}
 \end{proof}

}
}

We remark that
quantizers proposed in most of the prior work can be cast in this
gain-shape framework. Most works simply state that gain is a
single parameter which can be quantized using a fixed number of bits;
for instance, a single double precision number is prescribed for
storing the gain. However, the quantizer is not specified. We carefully
analyze this problem and establish lower bounds when a uniform quantizer
with a fixed dynamic range is used for quantizing the gain. Further, we
present our own quantizer which significantly outperforms  uniform gain
quantization.

\subsection{Limitation of uniform gain quantization}\label{s:ug}
We establish lower bounds for a general class of gain-shape quantizers 
$Q(y)=Q_g(\norm{y}_2)Q_s(y/\norm{y}_2)$ of precision $r$ that satisfy the following {\em structural assumptions}: 
\begin{enumerate}\label{en:Assumptions}
\item {\bf (Independent gain-shape quantization)} For any given $y\in \R^d$, the output of the gain and the shape quantizers are independent. 

\item {\bf (Bounded dynamic-range)} There exists $M>0$ such that $y\in \R^d$ such that whenever $\norm{y}_2> M$, $Q(y)$ has a fixed distribution $P_\emptyset$. 
\item {\bf (Uniformity)} There exists $m \in [M/{2^r}, M]$ such that for every  $t$ in $[0,  m ]$,
\begin{enumerate}
\item ${\tt supp}(Q_g(t))\subseteq \{0,m\}$;
\item If $P(Q_g(t)=m)>0$, then 
\[
\frac{P(Q_g(t_2)=m)}{ P(Q_g(t_1)=m)}\leq~ \frac{t_2}{t_1}, \quad \forall\, 0\leq t_1 \leq t_2 \leq m.
\]
\end{enumerate}
\end{enumerate}
The first two assumptions are perhaps clear and hold for a large class of quantizers. The third one is the true limitation and is satisfied by different forms of uniform gain quantizers. For instance, for the one-dimensional version of CUQ  with dynamic range $[0,M]$, which is an unbiased, uniform gain quantizer with $k_g$ levels, it holds with $m=M/(k_g-1)$ (corresponding to the innermost level $[0,M/(k_g-1)]$). It can also be shown to include a deterministic uniform quantizer that rounds-off at the mid-point. The third condition, in essence, captures the unbiasedness requirement that the probability of declaring higher level is proportional to the value. Note that $(t_2/t_1)$ on the right-side can be replaced with any constant multiple of $(t_2/t_1)$.

Below we present lower bounds for performance of any optimization
protocol using a gain-shape quantizer that satisfies the assumptions
above. We present separate results for high-precision and
low-precision regimes, but both are obtained using a 
general construction that 
exploits the admissibility of heavy-tail  
 distributions for mean square  bounded oracles. This  construction
  is new and may be of independent interest.   
 
\begin{thm}\label{t:lb_1}
Consider a gain-shape quantizer $Q$ satisfying the assumptions above. 
 Suppose that for $\X=\{x:\norm{x}_2\leq D/2\}$ we can find an optimization protocol
$\pi$ which, using at most $T$ iterations, achieves
$
\sup_{f, O \in \oO}\mathcal{E}(f, \pi^{QO}) \leq
\frac{3DB}{\sqrt{T}}.
$
Then, we can find a universal constant $c$ such that the overall precision $r$ of the quantizer must satisfy 
\[
r \geq c(d+\log T).
\] 
 \end{thm}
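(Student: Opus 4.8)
The plan is to establish the two bounds $r\ge c_1 d$ and $r\ge c_2\log T$ separately and then take $c=\tfrac12\min\{c_1,c_2\}$, so that $r\ge\tfrac12(c_1d+c_2\log T)\ge c(d+\log T)$. The bound $r\ge c_1 d$ follows from the construction behind Theorem~\ref{t:e_r_LB}, carried out on the domain $\X=\{x:\norm{x}_2\le D/2\}$ at hand: the hypothesised protocol--quantizer pair witnesses $\sup_{(f,O)\in\oO_0}\ep(f,\pi^{QO})\le 3DB/\sqrt T$ on this domain, which via the strong data processing inequality forces $\sqrt{d/\min\{d,r\}}=O(1)$, i.e.\ $r=\Omega(d)$. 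The real content is therefore $r\ge c_2\log T$. For this we use \emph{only} the structural relation $m\ge M/2^r$ built into the Uniformity assumption --- equivalently $2^r\ge M/m$ --- together with two facts, proved using heavy-tailed oracles (admissible precisely because we are in the mean square bounded world): (i) a quantizer admitting a $3DB/\sqrt T$ protocol must have a large dynamic range $M\gtrsim B\sqrt T$; and (ii) it must have a small innermost cell, $m\lesssim B$ (any bound of the form $m=O(BT^{1/2-\delta})$ with $\delta>0$ fixed would suffice). Granting (i) and (ii), $2^r\ge M/m=\Omega(\sqrt T)$, so $r\ge\tfrac12\log T-O(1)\ge c_2\log T$ once $T$ exceeds an absolute constant (the statement being trivial for bounded $T$).

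For (i) I would take the one-dimensional family $f_v(x)=v\gamma x$ on $\X=[-D/2,D/2]$, $v\in\{-1,+1\}$, with oracle outputting $\hat g=va$ with probability $p=\gamma/a$ and $0$ otherwise. This satisfies \eqref{e:asmp_unbiasedness}, and since $\E{\hat g^2}=a\gamma$ it is mean square bounded whenever $a\gamma\le B^2$. Choosing $\gamma$ slightly above $6B/\sqrt T$ and $a=B^2/\gamma$ (so $a\asymp B\sqrt T$), note that if $M<a$ then the only nonzero value the oracle ever produces has $\norm{\hat g}_2=a>M$ and is mapped to the fixed distribution $P_\emptyset$; hence $Q(\hat g)$ is independent of $v$, the output $x_T$ of any protocol has a $v$-independent law, and averaging over $v$ gives $\sup_{(f,O)\in\oO}\ep(f,\pi^{QO})\ge \gamma D/2>3DB/\sqrt T$, a contradiction. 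Thus $M\ge a$ for every admissible $a\le B^2/(6B/\sqrt T)$, i.e.\ $M\gtrsim B\sqrt T$.

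For (ii) the Uniformity assumption is what bites. Suppose $m\ge B$ and consider the canonical noisy instance $f_v(x)=v\gamma'x$ with $\gamma'=c_3B/\sqrt T$ and oracle $\hat g=\pm B$ with probabilities $\tfrac12(1\pm v\gamma'/B)$, so $\norm{\hat g}_2=B$; since $B\le m$, assumption (a) forces $Q_g(B)$ to be supported on $\{0,m\}$, and we set $\pi:=P(Q_g(B)=m)$. Using data processing for the $\chi^2$-divergence between the two output laws together with a Le Cam/Fano argument, a protocol achieving $\le 3DB/\sqrt T$ on this instance exists only if $\pi$ exceeds an absolute constant $\epsilon_0$; then the monotonicity in (b) upgrades this to $\E{Q_g(t)}=m\,P(Q_g(t)=m)\ge \epsilon_0(m/B)t$ for every $t\in[0,B]$, i.e.\ the gain quantizer over-estimates \emph{every} input in the innermost cell by the factor $\epsilon_0 m/B$. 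The main obstacle --- the delicate part of the proof --- is to convert this systematic over-estimation into a contradiction when $m$ is large: the linear instances above are insensitive to it (learning a sign is all that is needed), so I would instead construct a second heavy-tailed instance with an \emph{interior} minimizer placed near the boundary of $\X$, on which the inflated gain estimates, together with the projection step, drive the quantized iterates into a biased oscillation whose time-average is pushed away from the optimum by more than $3DB/\sqrt T$ unless $m=O(B)$. Carrying out this last step, and verifying throughout that the constructed functions are convex and the oracles genuinely mean square bounded with the claimed suboptimality lower bounds, is the technical heart; the packing and $\chi^2$ computations in (i) and the first half of (ii) are routine by comparison.
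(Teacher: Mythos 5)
Your high-level decomposition matches the paper's: $r\gtrsim d$ is inherited from the almost surely bounded construction (Theorem~\ref{t:e_r_LB} applied to the $\ell_\infty$ cube inscribed in $\X$), and the new content is $r\gtrsim\log T$. Your part (i) --- forcing $M\gtrsim B\sqrt T$ by choosing a two-point heavy-tailed oracle whose rare spike has norm exceeding $M$ and invoking Assumption 2 --- is exactly the paper's bound \eqref{e:bound1} and is fine.

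The gap is in part (ii). Your stated target, $m=O(B)$ (or $m=O(BT^{1/2-\delta})$), is in fact a correct consequence of the paper's Theorem~\ref{t:lb}, but the route you sketch to prove it does not go through. Step (b), showing $P(Q_g(B)=m)\ge\epsilon_0$ for an absolute constant $\epsilon_0$, is a lower bound on the quantizer's informativeness, and a Le Cam/Fano argument naturally delivers the \emph{opposite} direction --- an upper bound on $T\chi^2$ in terms of a target accuracy. More seriously, step (d) proposes to exploit ``inflated gain estimates, together with the projection step, [driving] the quantized iterates into a biased oscillation.'' This analyzes the trajectory of a particular algorithm (PSGD with projection), but the theorem quantifies over \emph{all} optimization protocols $\pi\in\Pi_T$; an argument about the iterates of one algorithm cannot yield a protocol-independent lower bound. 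The paper avoids this entirely by staying information-theoretic: it builds a one-parameter family of heavy-tailed oracles (the spike at $\pm B e_1/(\sqrt2\,\delta^y)$ with probability $\delta^{1+y}$, $y\in[0,1]$), uses Assumption 3b to bound the ratio $W_g\!\left(m\mid \tfrac{B}{\sqrt2\delta^y}\right)/W_g\!\left(m\mid\tfrac{B}{\sqrt2}\right)\le\delta^{-y}$, and thereby bounds $\chi^2(P_1W,P_{-1}W)\lesssim\delta^{2+y}$. Le Cam then gives $\ep\gtrsim DB\,T^{-1/(2+y)}$ whenever $y$ satisfies the feasibility constraint $\tfrac{B}{\sqrt2}(16T)^{y/(2+y)}<m$. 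Optimizing over $y$ produces a bound increasing in $m$ (roughly $\sqrt{m/(BT)}$, capped at $T^{-1/3}$); combining with \eqref{e:bound1} via a max and using $M/m\le 2^r$ cancels the $m$-dependence, yielding the $r\gtrsim\log T$ conclusion directly without ever isolating a clean $m=O(B)$ lemma. You would need to replace step (d) with an information-theoretic argument of this kind for the proof to close.
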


 \begin{thm}\label{t:lb_2}
Consider a gain-shape quantizer $Q$ satisfying the assumptions above
. Suppose that the number of bits \newest{$r_g$} used by the gain quantizer are fixed
independently of $T$. Then, for
$\X=\{x: \norm{x}_2\leq D/2\}$, there exists $(f,O)\in \oO$ such that
for any optimization protocol $\pi$ using at most $T$ iterations, we
must have
\[
\mathcal{E}(f, \pi^{QO}) \geq \frac{\newest{c(r_g)}DB}{T^{1/3}},
\]
where \newest{$c({r_g})$} is a constant depending only on the number of bits used by
the gain quantizer (but not on $T$).
 \end{thm}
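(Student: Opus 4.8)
The plan is to prove Theorem~\ref{t:lb_2} by exhibiting, for the given gain‑shape quantizer $Q$ (with dynamic range $M$ and uniformity parameter $m\ge M/2^{r_g}$ supplied by assumption~(3)), a single heavy‑tailed instance $(f,O)\in\oO$ and reducing $\ep(f,\pi^{QO})$ to a two‑point hypothesis test. I work on $\X=\{x:\norm{x}_2\le D/2\}$, embedding a one‑dimensional instance in $\R^d$ with the other coordinates carrying no gradient, and take the two linear functions $f_\alpha(x)=|c|\,\alpha\,x_1$, $\alpha\in\{\pm1\}$, whose minimizers sit at the antipodal corners $-\alpha(D/2)e_1$. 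For any output $x_T$ one has $\max_\pm\bigl(f_\pm(x_T)-\min_{x\in\X}f_\pm(x)\bigr)\ge\tfrac{|c|D}{2}\bigl(1-\mathrm{TV}(\Pi_+,\Pi_-)\bigr)$, where $\Pi_\pm$ are the laws of the full interaction transcript; since $f_\pm$ have constant gradient, the $T$ oracle outputs are i.i.d.\ given the hypothesis, so by the data‑processing inequality $\mathrm{TV}(\Pi_+,\Pi_-)\le\mathrm{TV}(P_+^{\otimes T},P_-^{\otimes T})$ with $P_\pm$ the law of one quantized gradient. The oracle $O_\alpha$ outputs $\hat g=G\,s_\alpha e_1$, where $s_\alpha\in\{\pm1\}$ is a $\delta$‑biased sign ($P(s_\alpha=\alpha)=1/2+\delta$) carrying the hypothesis and $G\ge0$ is an independent nonnegative \emph{gain} whose law I choose to be heavy‑tailed; then $|c|=2\delta\,\E{G}$ and the mean‑square constraint reads exactly $\E{G^2}\le B^2$.

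I then split on the size of $M$. \emph{Small $M$ (overflow regime).} Take $G=B/\sqrt p$ with probability $p\asymp B^2/M^2$ (chosen so $B/\sqrt p>M$) and $G=0$ otherwise, so $\E{G^2}\le B^2$ and $\E{G}\asymp B^2/M$. Whenever $G\neq0$ the gradient norm exceeds $M$, so $Q(\hat g)$ has the fixed law $P_\emptyset$ (assumption~(2)); whenever $G=0$ the shape is the fixed vector $e_1$, so $Q(\hat g)$ again does not depend on $\alpha$. Hence $P_+=P_-$ exactly, $\mathrm{TV}=0$, and with $\delta=1/2$ the error is at least $\tfrac{|c|D}{2}\gtrsim B^2D/M$. \emph{Large $M$ (innermost‑level regime).} Here $m\ge M/2^{r_g}$ is large, and assumption~(3a) forces $Q_g$ to collapse any gain of size $\le m$ onto $\{0,m\}$. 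Take $G=g_1\le m$ with probability $p$ and $G=0$ otherwise, $pg_1^2\le B^2$. The single‑observation law is a mixture $P_\alpha=(1-\eta)P_0+\eta R_\alpha$ with $\eta=p\cdot P(Q_g(g_1)=m)$, where $P_0$ is the common, hypothesis‑free part ($Q_g$ returns $0$, or $G=0$) and $R_\alpha$ is the law of $m\,Q_s(s_\alpha e_1)$; joint convexity of KL together with data processing give $\mathrm{KL}(P_+\Vert P_-)\le\eta\,\mathrm{KL}(s_+\Vert s_-)\lesssim\eta\delta^2$, hence $\mathrm{TV}(P_+^{\otimes T},P_-^{\otimes T})\lesssim\delta\sqrt{T\eta}$. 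Choosing $\delta$ so that this equals a fixed constant $<1$ yields $\ep(f,\pi^{QO})\gtrsim \tfrac{\E{G}}{\sqrt{\eta T}}\,D\asymp D\sqrt{\tfrac{pg_1^2}{P(Q_g(g_1)=m)\,T}}$; using assumption~(3b) to control $P(Q_g(g_1)=m)$ against a reference level (so that a suitably small $g_1$ makes it comparably small, as for the uniform quantizer where it is $g_1/m$) and then optimizing under $pg_1^2\le B^2$, $g_1\le m$, $p\le1$ (taking $p=1$, $g_1=B$) gives a bound of order $\sqrt{Bm/T}\,D\ge\sqrt{BM/(2^{r_g}T)}\,D$.

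Finally I combine the two regimes: $B^2D/M$ is decreasing in $M$ while $\sqrt{BM/(2^{r_g}T)}\,D$ is increasing, and they cross at $M^3\asymp 2^{r_g}B^3T$, where the common value is of order $DB/(2^{r_g/3}T^{1/3})$; since for every value of $M$ at least one of the two constructions is available, we get $\ep(f,\pi^{QO})\ge c(r_g)\,DB/T^{1/3}$ with $c(r_g)$ of order $2^{-r_g/3}$, after verifying that the side constraints ($\delta\le1/2$, $p\le1$) hold in each regime for $T$ above a constant depending on $r_g$ (small $T$ is absorbed into $c(r_g)$). The balance $M^3\asymp B^3T$ is exactly what produces the exponent $1/3$, rather than the $1/2$ of Theorem~\ref{t:e_infty}. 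The main obstacle is the large‑$M$ analysis for a \emph{general} quantizer obeying only assumptions~(1)--(3): one must use the uniformity assumption~(3b) carefully to certify that $Q_g$ genuinely loses the gain on the innermost level — in particular to handle heavily biased $Q_g$ for which $P(Q_g(g_1)=m)$ is not bounded above by $g_1/m$ — which is why the KL bound is kept in terms of the mixing weight $\eta$, and why a short case split on whether $\eta$ is small or moderate (a small $\eta$ only making the transcript less informative) is needed to conclude cleanly.
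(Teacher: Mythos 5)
Your overflow (small-$M$) construction and the reduction to a two-point test are fine, and they parallel what the paper does via Assumption~(2). The gap is entirely in the large-$M$ regime, and it is not a fixable bookkeeping issue: the construction you use there is structurally too weak to produce the $T^{-1/3}$ rate.

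Concretely, you make the gain $G$ independent of the informative sign $s_\alpha$. Then every observation carries at most $O(\delta^2)$ bits about $\alpha$ (the gain quantizer's output is literally $\alpha$-free; only the shape quantizer sees $s_\alpha$), so $\mathrm{KL}(P_+\|P_-)\le \eta\,O(\delta^2)\le O(\delta^2)$ and the Le Cam step forces $\delta\lesssim 1/\sqrt{T}$. The resulting error bound is $\gtrsim \delta\,\E{G}\,D\le DB/\sqrt{T}$, which is the trivial (un-quantized) lower bound and \emph{weaker}, not stronger, than the claimed $DB/T^{1/3}$. In other words, even if you could show $P(Q_g(g_1)=m)\lesssim g_1/m$, your framework tops out at $T^{-1/2}$, because you maximize $\E{G}/\sqrt{\E{\mathbbm 1\{Q_g(G)=m\}}}$ subject to $\E{G^2}\le B^2$, and the worst case $P(Q_g(\cdot)=m)\equiv 1$ collapses this to $\E{G}\le B$. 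Your proposed ``case split on $\eta$'' does not rescue this: small $\eta$ helps, but moderate $\eta$ only returns $DB/\sqrt{T}$.

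Moreover, the step where you invoke Assumption~(3b) to \emph{upper bound} $P(Q_g(g_1)=m)$ by $\approx g_1/m$ reads the assumption backwards. Assumption~(3b) says $P(Q_g(t_2)=m)/P(Q_g(t_1)=m)\le t_2/t_1$ for $0\le t_1\le t_2\le m$; taking $t_1=g_1,t_2=m$ gives $P(Q_g(g_1)=m)\ge (g_1/m)P(Q_g(m)=m)$, a \emph{lower} bound. The assumption is perfectly consistent with $P(Q_g(t)=m)\equiv 1$, so it never certifies that ``$Q_g$ loses the gain on the innermost level.''

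The paper's proof gets the extra power of $\delta$ from a genuinely different heavy-tailed construction: the discriminating information is carried by a \emph{rare, large-gain outlier} correlated with $\alpha$. Under $O_\alpha$ the gradient equals $\pm(B/\sqrt2)e_1$ with total probability $1-\delta^{1+y}$ (uninformative) and $\alpha(B/(\sqrt2\,\delta^y))e_1$ with probability $\delta^{1+y}$ (informative, huge gain). The outlier gain scales like $\delta^{-y}$, so it sits near $m$ as $\delta$ is tuned. The $\chi^2$ divergence is then $\lesssim \delta^{2+2y}\cdot W_g(m\mid B/(\sqrt2\,\delta^y))^2/W_g(m\mid B/\sqrt2)$, and Assumption~(3b) is used \emph{as a ratio bound at two $T$-dependent levels}, $W_g(m\mid B/(\sqrt2\,\delta^y))/W_g(m\mid B/\sqrt2)\le\delta^{-y}$, which yields $\chi^2\lesssim\delta^{2+y}$. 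Setting $\delta\asymp T^{-1/(2+y)}$ subject to the outlier fitting in $[0,m]$, and then optimizing $y\in[0,1]$ (Lemma~\ref{l:elemineq}), gives precisely the $\min\{\sqrt{m/(BT)},\,T^{-1/3}\}$ middle regime that you wanted, and combining with the overflow bound $\gtrsim \min\{1,m/M\}$ via $\max\{a,b\}\ge a^{1/3}b^{2/3}$ produces $\gtrsim (m/M)^{1/3}T^{-1/3}\ge 2^{-r_g/3}T^{-1/3}$. Your final balancing $M^3\asymp 2^{r_g}B^3T$ is the right numerology, but it requires the $y$-parameterized family (gain correlated with sign, outlier magnitude growing with $T$), not an independent gain; without that correlation the extra $\delta^y$ never appears in the divergence, and the exponent $1/3$ is unreachable.
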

 \noindent The proofs of Theorems~\ref{t:lb_1} and~\ref{t:lb_2} are technical and long; we defer
them to Section \ref{s:lbproof}.

 \subsection{A-RATQ in the high precision regime}\label{s:ARATQh}
Instead of quantizing the gain uniformly, we propose to use an
adaptive quantizer termed {\em Adaptive Geometric Uniform Quantizer} (AGUQ) for
gain. AGUQ operates similar to the one-dimensional
ATUQ, except the possible dynamic-ranges $M_{g,0}, \ldots, M_{g,h}$ grow
geometrically (and not using tetra-iterations of ATUQ) 
as follows:
\begin{align}
M_{g,j}^2= B^2 \cdot a_g^{j}, \quad 0\leq j \leq h_g-1.
\label{e:AGUQ_levels}
\end{align}
Specifically, for a given gain $G\geq 0$, AGUQ first identifies the
smallest $j$ such that $G\leq M_{g,j}$ and then represents $G$ using the one-dimensional version of CUQ with a dynamic range $[0, M_{g,j}]$ and
$k_g$ uniform levels
\eq{ &B_{M_{g,j},k}(\ell) := \ell \cdot
    \frac{M_{g, j}}{k_g-1}, \quad \forall\,  \ell \in \{0,\ldots,
    k-1\}.} 
 As in ATUQ, if $G>M_{h_g -1}$,
the overflow $\emptyset$ symbol is used and the decoder simply outputs
$0$. The overall procedure is the similar to
Algorithms~\eqref{a:E_ATQ} and~\eqref{a:D_ATQ} for $s=1, h=h_g$,  and
$M_j=M_{g,j}$, $0\leq j\leq h_g-1$; the only changes is that now we
restrict to nonnegative interval $[0,M_{g,j}]$ for the one-dimensional version of CUQ with uniform levels $k_g$.
\new{
\newest{The following result characterizes the performance of one-dimensional quantizer AGUQ; it is
the only component missing in the analysis of A-RATQ.}
\begin{lem}\label{l:AGUQ_alpha_beta}
Let $Q_{{\tt a}}$ be
the quantizer AGUQ described above, with $h_g \geq 2$. Then,
\begin{align*}
  &\alpha(Q_{{\tt a}}; B, 1)\leq B\sqrt{ \frac{1 }{4(k_g-1)^2} + 
    {\frac{a_g(h_g-1)}{4(k_g-1)^2}} + 1 },\\
  &\beta(Q_{{\tt a}}; B, 1) \leq \sup_{Y \geq 0 \text{ a.s.  } : \E{Y^2} \leq B^2 }\E{\bigg|\E{Q_{1}(Y)- Y |Y} \bigg|}
\leq \frac{B^2}{M_{g, h_g-1}}.
\end{align*}
\end{lem}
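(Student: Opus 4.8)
The plan is to analyze the one-dimensional quantizer AGUQ directly on the nonnegative scalar $G\eqdef\norm{Y}_2$, treating the worst-case bias $\beta(Q_{\tt a};B,1)$ and the worst-case second moment $\alpha(Q_{\tt a};B,1)$ separately. In each case I would condition on $G$ and split according to whether $G$ lies below the largest dynamic range $M_{g,h_g-1}$ or overflows it; the hypothesis $h_g\ge 2$ and the fact that $a_g>1$ (so the $M_{g,j}$ are strictly increasing) will be used implicitly.

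For the bias, I would first observe that AGUQ inherits unbiasedness from the randomized rounding inside CUQ: if $G\le M_{g,h_g-1}$ then the chosen level $j^\ast$ satisfies $G\le M_{g,j^\ast}$, so the one-dimensional CUQ on $[0,M_{g,j^\ast}]$ returns $\E{Q_{\tt a}(G)\mid G}=G$; and if $G>M_{g,h_g-1}$ the overflow symbol is used and the decoder outputs $0$. Hence $\E{Q_{\tt a}(G)-G\mid G}=-G\,\indic{G>M_{g,h_g-1}}$. By conditional Jensen (exactly as in the proof of Theorem~\ref{t:gain_RATQ}), $\beta(Q_{\tt a};B,1)\le\sup_{Y\ge 0:\,\E{Y^2}\le B^2}\E{\abs{\E{Q_{\tt a}(Y)-Y\mid Y}}}=\sup_{Y\ge 0:\,\E{Y^2}\le B^2}\E{Y\,\indic{Y>M_{g,h_g-1}}}$, and then $Y\,\indic{Y>M_{g,h_g-1}}\le Y^2/M_{g,h_g-1}$ together with $\E{Y^2}\le B^2$ gives the claimed bound $B^2/M_{g,h_g-1}$.

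For the second moment I would condition on $G$ and write $\E{Q_{\tt a}(G)^2\mid G}=\big(\E{Q_{\tt a}(G)\mid G}\big)^2+\mathrm{Var}\big(Q_{\tt a}(G)\mid G\big)$. On $\{G>M_{g,h_g-1}\}$ this is $0\le G^2$; on $\{G\le M_{g,h_g-1}\}$ it equals $G^2+\mathrm{Var}(Q_{\tt a}(G)\mid G)$, and since $[0,M_{g,j^\ast}]$ is split into $k_g-1$ intervals of width $M_{g,j^\ast}/(k_g-1)$, the two-point randomized-rounding variance estimate gives $\mathrm{Var}(Q_{\tt a}(G)\mid G)\le\tfrac14\big(M_{g,j^\ast}/(k_g-1)\big)^2$. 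Taking expectations yields $\E{Q_{\tt a}(G)^2}\le\E{G^2}+\tfrac{1}{4(k_g-1)^2}\,\E{M_{g,j^\ast}^2\,\indic{G\le M_{g,h_g-1}}}$. The one genuinely substantive step is bounding $\E{M_{g,j^\ast}^2\,\indic{G\le M_{g,h_g-1}}}=\sum_{j=0}^{h_g-1}M_{g,j}^2\,P(j^\ast=j)$: the $j=0$ term contributes $M_{g,0}^2=B^2$, while for $j\ge 1$ the event $\{j^\ast=j\}$ forces $G>M_{g,j-1}$, so Markov gives $P(j^\ast=j)\le P(G^2>B^2a_g^{\,j-1})\le\E{G^2}/(B^2a_g^{\,j-1})\le a_g^{-(j-1)}$, whence $M_{g,j}^2\,P(j^\ast=j)\le B^2a_g^{\,j}\cdot a_g^{-(j-1)}=B^2a_g$; summing the $h_g-1$ such terms gives $\E{M_{g,j^\ast}^2\,\indic{\cdot}}\le B^2\big(1+a_g(h_g-1)\big)$. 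Substituting this back and using $\E{G^2}\le B^2$ produces $\E{Q_{\tt a}(G)^2}\le B^2\big(1+\tfrac{1}{4(k_g-1)^2}+\tfrac{a_g(h_g-1)}{4(k_g-1)^2}\big)$; taking square roots and the supremum over admissible $Y$ gives the stated bound on $\alpha$.

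I expect the only delicate point to be the bookkeeping around the overflow level: confirming that on $\{G\le M_{g,h_g-1}\}$ the quantizer is genuinely unbiased so that no spurious bias term enters either computation, while on $\{G>M_{g,h_g-1}\}$ substituting the output $0$ simultaneously removes the bias complication and makes the conditional second moment $0\le G^2$. The remaining ingredients — the two-point variance bound and the geometric Markov estimate on the level probabilities — are standard.
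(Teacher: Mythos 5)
Your proof is correct and follows essentially the same approach as the paper: both split on the overflow event $\{G > M_{g,h_g-1}\}$, exploit unbiasedness of CUQ to reduce the second moment to $\E{G^2}$ plus a conditional-variance/MSE term (your two-point variance bound is the paper's Lemma~\ref{l:sup0m}), and then apply Markov's inequality level-by-level with the geometric growth of the $M_{g,j}$. The only cosmetic difference is in the bias step, where you use the elementary bound $Y\indic{Y>M}\le Y^2/M$ while the paper reaches the same $B^2/M_{g,h_g-1}$ via Cauchy--Schwarz followed by Markov.
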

}
\noindent Proof of this result, too, is deferred to Section~\ref{s:AGUQ}. Note that we have derived a bound for a quantity that is slightly larger than the bias of $Q_{\tt a}$, since we want to use this result along with Theorem~\ref{t:gain_RATQ}.  

Thus, our overall quantizer termed the {\em adaptive-RATQ} (A-RATQ) is given by 
\[Q(Y) := Q_{a}(\norm{Y}_2) \cdot Q_{{\tt at}, R}(Y/\norm{Y}_2),\]
where $Q_{a}$ denotes the one dimensional AGUQ and $Q_{{\tt at}, R}$
denotes the $d\text{-dimensional}$ RATQ. Note that we use independent
randomness for  $Q_{a}(\norm{Y}_2)$ and $Q_{{\tt at},
  R}(Y/\norm{Y}_2)$, rendering them  conditionally independent given $Y$. 

The parameters $s,k$ for RATQ and $a_g, k_g$ for AGUQ are yet
to be set. We first present a result which holds for all choices of
these parameters.

\mnote{$Q_{at, I}$ instead of $Q_{at}$}
\begin{thm}[Performance of A-RATQ]\label{t:NRATQ_alpha_beta}
For $Q$ set to A-RATQ with parameters set as
in~\eqref{e:RATQ_unit_levels},~\eqref{e:AGUQ_levels}, we have 
\begin{align*}  
&\alpha(Q; B, d)\leq  B\sqrt{ \frac{1}{4(k_g-1)^2} + 
    {\frac{a_g(h_g-1)}{4(k_g-1)^2}} + 1 } \cdot \sqrt{
\frac{9+3\ln s}{(k-1)^2}+1} ,\\
  &\beta(Q; B, d)\leq \frac{B^2}{M_{g, h_g-1}}.
\end{align*}
\end{thm}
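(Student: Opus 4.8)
The plan is to obtain Theorem~\ref{t:NRATQ_alpha_beta} as a direct composition of three earlier results: the generic gain-shape bound in Theorem~\ref{t:gain_RATQ}, the performance guarantee for the one-dimensional AGUQ in Lemma~\ref{l:AGUQ_alpha_beta}, and the performance guarantee for RATQ in Theorem~\ref{t:RATQ_alpha0_beta0}. Concretely, I would apply Theorem~\ref{t:gain_RATQ} with $Q_1 = Q_a$ (the one-dimensional AGUQ, levels set by~\eqref{e:AGUQ_levels}) and $Q_2 = Q_{{\tt at}, R}$ (the $d$-dimensional RATQ with the unit-sphere parameters~\eqref{e:RATQ_unit_levels}). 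Its two hypotheses must first be verified. Conditional independence of $Q_1(\norm{Y}_2)$ and $Q_2(Y/\norm{Y}_2)$ given $Y$ holds by construction, since A-RATQ uses independent internal randomness for the gain and shape quantizers. The unbiasedness of $Q_2$ on the unit ball, i.e.\ $\E{Q_2(y_s)} = y_s$ whenever $\norm{y_s}_2^2 \leq 1$, follows from the fact that $M_{h-1}\geq 1$ under~\eqref{e:RATQ_unit_levels} (exactly as in the discussion of RATQ's parameters on the $\ell_2$ ball), so RATQ never overflows on such inputs; equivalently, it is the statement $\beta_0(Q_{{\tt at}, R}) = 0$ of Theorem~\ref{t:RATQ_alpha0_beta0} applied with $B = 1$, specialized to deterministic inputs.

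For the second-moment bound, Theorem~\ref{t:gain_RATQ} yields $\alpha(Q; B, d) \leq \alpha(Q_1; B, 1)\cdot\alpha_0(Q_2; 1, d)$. I would bound the first factor using Lemma~\ref{l:AGUQ_alpha_beta}, which gives $\alpha(Q_a; B, 1) \leq B\sqrt{\frac{1}{4(k_g-1)^2} + \frac{a_g(h_g-1)}{4(k_g-1)^2} + 1}$, and the second factor using Theorem~\ref{t:RATQ_alpha0_beta0} with its $B$ set to $1$ --- this being precisely the effect of replacing~\eqref{e:RATQ_levels} by~\eqref{e:RATQ_unit_levels} --- which gives $\alpha_0(Q_{{\tt at}, R}; 1, d) \leq \sqrt{\frac{9 + 3\ln s}{(k-1)^2} + 1}$. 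Multiplying the two factors produces exactly the claimed bound for $\alpha(Q; B, d)$.

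For the bias bound, with the unbiasedness hypothesis on $Q_2$ in hand, Theorem~\ref{t:gain_RATQ} gives $\beta(Q; B, d) \leq \sup_{Y \in \R^d: \E{\norm{Y}_2^2}\leq B^2}\E{\big|\E{Q_1(\norm{Y}_2) - \norm{Y}_2 \mid Y}\big|}$. Since the internal randomness of $Q_1 = Q_a$ is independent of $Y$, conditioning on $Y$ in this expression is the same as conditioning on the scalar $\norm{Y}_2 \geq 0$; hence each term on the right has the form appearing in Lemma~\ref{l:AGUQ_alpha_beta}, with the nonnegative random variable there taken to be $\norm{Y}_2$. As every nonnegative random variable $G$ with $\E{G^2}\leq B^2$ arises as $\norm{Y}_2$ for a suitable $Y$ (for instance $Y = Ge_1$), the supremum is at most $\sup_{G\geq 0 \text{ a.s.}: \E{G^2}\leq B^2}\E{\big|\E{Q_a(G) - G \mid G}\big|}$, which Lemma~\ref{l:AGUQ_alpha_beta} bounds by $B^2/M_{g, h_g-1}$. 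This closes the chain.

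The argument is essentially bookkeeping: all the genuine technical content lives in the three results being composed, so I do not anticipate a real obstacle. The one place needing a little care is keeping the normalization consistent --- the shape quantizer is applied to a unit-norm vector, so the relevant RATQ bound is the one with parameters~\eqref{e:RATQ_unit_levels}, i.e.\ with $B$ equal to $1$ --- together with confirming that RATQ is unbiased on the entire unit ball (not merely on the unit sphere), which is what makes the bias half of Theorem~\ref{t:gain_RATQ} applicable; both points reduce to the inequality $M_{h-1}\geq 1$.
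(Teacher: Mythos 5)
Your proof is correct and follows essentially the same route as the paper's: invoke Theorem~\ref{t:gain_RATQ} with $Q_1 = Q_a$ and $Q_2 = Q_{{\tt at},R}$, bound $\alpha(Q_a;B,1)$ via Lemma~\ref{l:AGUQ_alpha_beta} and $\alpha_0(Q_{{\tt at},R};1,d)$ via Theorem~\ref{t:RATQ_alpha0_beta0} with $B=1$, and bound the bias by feeding the scalar $\norm{Y}_2$ into Lemma~\ref{l:AGUQ_alpha_beta}. If anything, your write-up is slightly more careful than the paper's, since you explicitly verify the hypotheses of Theorem~\ref{t:gain_RATQ} (unbiasedness of RATQ on the unit ball via $M_{h-1}\geq 1$, and conditional independence) and explain why the supremum over $Y\in\R^d$ reduces to a supremum over nonnegative scalars $G=\norm{Y}_2$, steps the paper leaves implicit.
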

\begin{proof}~\paragraph{The worst-case second moment  of A-RATQ:}
By Theorem~\ref{t:gain_RATQ} we have
 \eq{
 \alpha(Q; B, d) 
&\leq  \alpha(Q_{\tt a}; B, 1) \cdot   \alpha_0(Q_{{\tt at}, R}; 1, d)\\
& \leq \alpha(Q_{\tt a}; B, 1) \cdot \sqrt{
\frac{9+3\ln s}{(k-1)^2}+1}\\
& \leq B\sqrt{ \frac{1}{4(k_g-1)^2} + 
    {\frac{a_g(h_g-1)}{4(k_g-1)^2}} + 1 }  \cdot \sqrt{
\frac{9+3\ln s}{(k-1)^2}+1} ,
}
where the second inequality used Theorem~\ref{t:RATQ_alpha0_beta0} with $B=1$,
and the third follows by Lemma~\ref{l:AGUQ_alpha_beta}. 
\paragraph{The worst-case bias of A-RATQ:}
With parameters of RATQ set as in~\eqref{e:RATQ_unit_levels}, we have that 
\eq{ 
\E{Q_{{\tt at}, R}(y)}=y, \quad \forall y \quad s.t \quad \norm{y}_2^2 \leq 1. }
 Therefore, by Theorem~\ref{t:gain_RATQ} it follows that
 \eq{ 
\beta(Q; B, d)&
\leq
\sup_{Y: \E{\norm{Y}_2^2} \leq B^2 }\E{\bigg|\E{Q_{\tt a}(\norm{Y}_2)- \norm{Y}_{2} |Y} \bigg|}
\leq \frac{B^2}{M_{g,h_g-1}},}
 where the second  inequality follows from Lemma \ref{l:AGUQ_alpha_beta}.
\end{proof}

Note that RATQ yields an unbiased estimator; the bias in A-RATQ arises
from AGUQ since the gain is not bounded. Further, AGUQ uses a precision
of $\ceil{\log h_g} + \ceil{\log (k_g+1)}$ bits, and therefore, the
overall precision of A-RATQ is $\ceil{\log h_g} + \ceil{\log (k_g+1)}+
\ceil{d/s}\ceil{\log h}+ d \ceil{\log (k+1)}$ bits.

In the high-precision regime, we set 
\begin{align}
a_g&=2,\quad \log h_g =
\ceil{\log(1+\frac{1}{2}\log T)}, 
\nonumber
\\
\log(k_g+1)&=\ceil{\log \left( 2+\frac{1}{2}\sqrt{\log T +1} \right)}.
\label{e:ARATQ_gain_bits}
\end{align}

\begin{cor}\label{c:PSGD_NRATQ} Denote by $Q$ the quantizer A-RATQ
  with parameters set as
  in~\eqref{e:RATQ_unit_levels},~\eqref{e:RATQ_bits},
  and~\eqref{e:ARATQ_gain_bits}. Then,  
the overall precision $r$ used by $Q$
 is less than 
\[
d(1+\Delta_1)+\Delta_2+  \ceil{\log \bigg( 2+\sqrt{\log T +1} \bigg)},
\] 
where $\Delta_1=\ceil{\log \left(2+  \sqrt{  9 + 3 \ln \Delta_2 }\right)}$ and 
$\Delta_2=\ceil{\log (1+\ln^*({d}/{3})) }$, the same as
Corollary~\ref{c:PSGD_RATQ_0}. 
Furthermore, the optimization protocol $\pi$
  given in algorithm \ref{a:SGD_Q} satisfies
 $\sup_{(f,O)\in \oO}  \ep(f, \pi^{QO}) \leq {3DB}/{\sqrt{T}}$.
\end{cor}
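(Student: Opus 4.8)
The plan is to split the claim into two parts: the precision bound and the optimization-error bound, and to handle each by simply substituting the parameter choices into the already-established formulas.

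For the precision bound, I would start from the general expression for the precision of A-RATQ derived just before the corollary statement, namely $\ceil{\log h_g} + \ceil{\log (k_g+1)}+ \ceil{d/s}\ceil{\log h}+ d \ceil{\log (k+1)}$. The last two terms, $\ceil{d/s}\ceil{\log h}+ d\ceil{\log(k+1)}$, are exactly the precision of RATQ analyzed in Corollary~\ref{c:PSGD_RATQ_0}, which is at most $d(1+\Delta_1)+\Delta_2$ with $\Delta_1,\Delta_2$ as in that corollary; I would just quote this. For the two gain terms, I would substitute $\log h_g = \ceil{\log(1+\tfrac12\log T)}$ and $\log(k_g+1)=\ceil{\log(2+\tfrac12\sqrt{\log T+1})}$ from~\eqref{e:ARATQ_gain_bits}, observe that $\ceil{\log h_g}+\ceil{\log(k_g+1)}$ is bounded by a single ceiling $\ceil{\log(2+\sqrt{\log T+1})}$ (using the elementary inequality $(1+\tfrac12\log T)(2+\tfrac12\sqrt{\log T+1})\le 2+\sqrt{\log T+1}$ for the relevant range, or more simply bounding each term and combining), giving the stated bound $d(1+\Delta_1)+\Delta_2+\ceil{\log(2+\sqrt{\log T+1})}$.

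For the optimization-error bound, I would invoke Theorem~\ref{t:basic_convergence} for mean square bounded oracles, which gives $\sup_{(f,O)\in\oO}\ep(f,\pi^{QO})\le D\big(\alpha(Q;B,d)/\sqrt T+\beta(Q;B,d)\big)$, and then plug in the bounds from Theorem~\ref{t:NRATQ_alpha_beta}. With the choices $s=\log h$, $\log(k+1)=\ceil{\log(2+\sqrt{9+3\ln s})}$ from~\eqref{e:RATQ_bits}, the shape factor $\sqrt{(9+3\ln s)/(k-1)^2+1}$ is at most $\sqrt 2$, exactly as in Corollary~\ref{c:PSGD_RATQ_0}. With the choices $a_g=2$, $\log h_g=\ceil{\log(1+\tfrac12\log T)}$, $\log(k_g+1)=\ceil{\log(2+\tfrac12\sqrt{\log T+1})}$ from~\eqref{e:ARATQ_gain_bits}, the gain factor $\sqrt{\tfrac{1}{4(k_g-1)^2}+\tfrac{a_g(h_g-1)}{4(k_g-1)^2}+1}$ should be bounded by a small constant: the term $\tfrac{a_g(h_g-1)}{4(k_g-1)^2}$ has numerator $\Theta(\log T)$ and denominator $\Theta(\log T)$ by the choice of $k_g$, so this whole factor is $O(1)$ and in fact can be made at most, say, $\sqrt{3/2}$ by checking the constants. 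Finally the bias term: $M_{g,h_g-1}^2=B^2 a_g^{h_g-1}=B^2 2^{h_g-1}\ge B^2\cdot\tfrac12\log T$ by the choice of $h_g$, so $\beta(Q;B,d)\le B^2/M_{g,h_g-1}\le B\sqrt{2/\log T}$, which is $O(B/\sqrt{\log T})$ and certainly dominated by $B/\sqrt T$ only for large $T$ — so here I would be a bit more careful and note that what matters is $D\beta(Q;B,d)\le DB\sqrt2/\sqrt{\log T}$; combining with $D\alpha(Q;B,d)/\sqrt T\le (\text{const})DB/\sqrt T$ and keeping track of the constants, the sum is at most $3DB/\sqrt T$ provided the constants in $\alpha$ and $\beta$ are pinned down correctly.

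The main obstacle is purely bookkeeping of constants: one must verify that the product of the two constant-order factors in $\alpha(Q;B,d)$ together with the bias contribution really sums to at most $3$ (rather than some larger constant), and in particular that the bias term $B^2/M_{g,h_g-1}$, which decays only like $1/\sqrt{\log T}$, is genuinely small enough — this forces the threshold $h_g$ to scale with $\log T$ and is the reason for the specific choice in~\eqref{e:ARATQ_gain_bits}. No conceptually hard step is involved; all the analytical work has already been done in Theorems~\ref{t:basic_convergence},~\ref{t:NRATQ_alpha_beta} and Corollary~\ref{c:PSGD_RATQ_0}.
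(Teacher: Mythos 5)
Your plan is the same as the paper's: bound the precision by summing the gain-quantizer budget $\ceil{\log h_g}+\ceil{\log(k_g+1)}$ with the RATQ budget from Corollary~\ref{c:PSGD_RATQ_0}, then bound the optimization error by plugging the $\alpha,\beta$ estimates of Theorem~\ref{t:NRATQ_alpha_beta} into Theorem~\ref{t:basic_convergence} and substituting the parameter choices. The paper's own proof is exactly this sketch with no arithmetic shown, so the comparison hinges on whether your arithmetic actually closes. It does not, in two places.

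For the precision bound, the elementary inequality you propose, namely $(1+\tfrac12\log T)(2+\tfrac12\sqrt{\log T+1})\le 2+\sqrt{\log T+1}$, is false: the left side grows like $(\log T)^{3/2}$ and the right like $(\log T)^{1/2}$. Even more simply, $\ceil{\log h_g}=\ceil{\log(1+\tfrac12\log T)}\approx\log\log T$ by itself already exceeds $\ceil{\log(2+\sqrt{\log T+1})}\approx\tfrac12\log\log T$ once $T$ is moderately large, so neither factoring trick can establish the single-ceiling bound. The honest conclusion of the substitution is the sum of the two ceilings (of order $\tfrac32\log\log T$), which is the same asymptotic order but not literally the stated expression.

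For the error bound, the bias computation is where the argument genuinely breaks. You write $M_{g,h_g-1}^2=B^2\,2^{h_g-1}\ge B^2\cdot\tfrac12\log T$, which treats the exponential $2^{h_g-1}$ as if it were the linear quantity $h_g-1$. Since $h_g-1\ge\tfrac12\log T$, the correct consequence is $2^{h_g-1}\ge 2^{(\log T)/2}=\sqrt T$, so $M_{g,h_g-1}\ge B\,T^{1/4}$ and $\beta(Q;B,d)\le B/T^{1/4}$. This is much stronger than your $B\sqrt{2/\log T}$, but observe it is still not $O(B/\sqrt T)$: for $D\beta\le 3DB/\sqrt T$ one would need $M_{g,h_g-1}\gtrsim B\sqrt T$, i.e.\ $a_g^{h_g-1}\gtrsim T$, i.e.\ $h_g-1\gtrsim\log T$ when $a_g=2$, whereas the choice in~\eqref{e:ARATQ_gain_bits} guarantees only $h_g-1\ge\tfrac12\log T$ (so $a_g^{h_g-1}\ge\sqrt T$), a factor two short in the exponent. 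Thus the concluding appeal to "pinning down the constants" does not rescue the proof; $D\beta$ decays like $T^{-1/4}$, not $T^{-1/2}$, and dominates the target $3DB/\sqrt T$ for all $T$ larger than a small constant. To make the substitution argument actually deliver $3DB/\sqrt T$ one must increase the gain-range budget (e.g.\ choose $h_g\ge 1+\log T$, or keep $h_g$ but take $a_g=4$), which also changes the precision bookkeeping in the first part.
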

\new{\begin{proof}
The proof is similar to the proof of Corollary \ref{c:PSGD_RATQ_0}. The first statement follows by simply upper bounding the precision of the fixed-length code for A-RATQ with parameters as in the statement. The second statement follows by bounding  
$\sup_{(f,O)\in \oO}  \ep(f, \pi^{QO})$ using Theorem \ref{t:basic_convergence}, using the upper bounds for $\alpha$ and $\beta$ given in Theorem~\ref{t:NRATQ_alpha_beta}, and finally substituting the parameters.
\end{proof}
}
\new{
\begin{rem}
The  precision used in Corollary \ref{c:PSGD_NRATQ} 
  matches the lower bound in Corollary \ref{c:OmegaD} upto an additive
  factor  of $\log \log T$ (ignoring the mild factor of  $\log \log
  \ln^* (d/3)$), which is much lower than the $\log T$ lower bound we established for uniform
  gain quantizers.
\end{rem}
}
 \subsection{A-RATQ in the low precision regime}\label{s:ARATQl}
  In order to operate with a fixed precision $r$, we combine
 A-RATQ with RCS. 
  We simply combine RCS with RATQ as 
 in Section \ref{s:LowPrec} to limit the precision and use AGUQ as the gain quantizer. Note that we use independent
randomness in our gain quantizer  $Q_{a}(\norm{Y}_2)$ and our shape quantizer $\tilde{Q}(Y/\norm{Y}_2)$, rendering them  conditionally independent given $Y$. We have the following theorem characterizing $\alpha$ and $\beta$ for this  quantizer. 

 \begin{thm}\label{t:RCS_Gain-RATQ}
Let $Q(Y)=Q_a(\norm{Y})\cdot \tilde{Q}(Y/\norm{Y}_2)$, where $\tilde{Q}$ is the composition of RCS and RATQ described in Theorem \ref{t:RCS_RAQ_alpha_beta} with parameters $m$, $m_0$, and $h$ of RATQ as in \eqref{e:RATQ_unit_levels}  and $Q_a$ is AGUQ.  Then, 
\eq{ \alpha(Q; B, d)&\leq  B\sqrt{ \frac{1}{4(k_g-1)^2} + 
    {\frac{a_g(h_g-1)}{4(k_g-1)^2}} + 1 }  \cdot \frac{1}{\sqrt{\mu}}  \sqrt{
\frac{9+3\ln s}{(k-1)^2}+1 },\\
  \beta(Q; B, d)&\leq \frac{B^2}{M_{g, h_g-1}}.}

\end{thm}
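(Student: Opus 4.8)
The plan is to combine Theorem~\ref{t:gain_RATQ} with the already-established bounds for the two constituent quantizers, exactly mirroring the proof of Theorem~\ref{t:NRATQ_alpha_beta} but with the RCS-augmented shape quantizer $\tilde Q$ in place of plain RATQ. First I would observe that the quantizer under consideration has precisely the product form $Q(Y) = Q_1(\norm{Y}_2)\cdot Q_2(Y/\norm{Y}_2)$ demanded by Theorem~\ref{t:gain_RATQ}, with $Q_1 = Q_a$ (AGUQ) and $Q_2 = \tilde Q$ (the RCS$\,\circ\,$RATQ composition), and that by construction the randomness in $Q_a$ is independent of that in $\tilde Q$, so the conditional independence hypothesis of Theorem~\ref{t:gain_RATQ} holds.

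For the second-moment bound, Theorem~\ref{t:gain_RATQ} gives
\[
\alpha(Q; B, d) \leq \alpha(Q_a; B, 1)\cdot \alpha_0(\tilde Q; 1, d).
\]
I would then bound $\alpha(Q_a; B,1)$ by Lemma~\ref{l:AGUQ_alpha_beta}, which yields the factor $B\sqrt{\tfrac{1}{4(k_g-1)^2} + \tfrac{a_g(h_g-1)}{4(k_g-1)^2} + 1}$. For the shape factor, Theorem~\ref{t:RCS_RAQ_alpha_beta} states $\alpha_0(\tilde Q) \leq \alpha_0(Q_{{\tt at},R})/\sqrt{\mu}$ (applied with $B=1$, since the shape vector lies in the unit ball and $s=1$ as required by RCS), and Theorem~\ref{t:RATQ_alpha0_beta0} with $B=1$ bounds $\alpha_0(Q_{{\tt at},R})$ by $\sqrt{\tfrac{9+3\ln s}{(k-1)^2}+1}$. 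Multiplying the three estimates gives exactly the claimed bound on $\alpha(Q;B,d)$.

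For the bias, I would invoke the second half of Theorem~\ref{t:gain_RATQ}: with RATQ parameters set as in~\eqref{e:RATQ_unit_levels}, RATQ is unbiased on the unit ball, and Theorem~\ref{t:RCS_RAQ_alpha_beta} shows RCS preserves this (the reconstruction $\Qdec_S$ rescales by $\mu^{-1}$ precisely so that $\E{\tilde Q(y_s)} = \E{Q_{{\tt at},R}(R y_s)} = y_s$). Hence the hypothesis $\E{Q_2(y_s)} = y_s$ for all $\norm{y_s}_2 \leq 1$ is satisfied, and Theorem~\ref{t:gain_RATQ} reduces $\beta(Q;B,d)$ to $\sup_{Y:\,\E{\norm{Y}_2^2}\leq B^2}\E{|\E{Q_a(\norm{Y}_2)-\norm{Y}_2 \mid Y}|}$, which Lemma~\ref{l:AGUQ_alpha_beta} bounds by $B^2/M_{g,h_g-1}$. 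This completes the proof.

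Since every ingredient is a black-box application of prior results, there is no real obstacle — the only point requiring a moment of care is the bookkeeping of the normalization/scaling constants: making sure that (i) the shape vector is handled with the "$B=1$" instances of Theorems~\ref{t:RATQ_alpha0_beta0} and~\ref{t:RCS_RAQ_alpha_beta}, (ii) the RCS requirement $s=1$ is compatible with the parameter choice~\eqref{e:RATQ_unit_levels}, and (iii) the unbiasedness of RATQ on the unit ball (which follows from $M_{h-1}\geq 1$ under~\eqref{e:RATQ_unit_levels}) genuinely transfers through the $\mu^{-1}$ rescaling in RCS. Once these are checked, the two displayed inequalities follow by direct substitution.
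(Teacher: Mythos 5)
Your proposal follows exactly the paper's own proof: both apply Theorem~\ref{t:gain_RATQ} to reduce to $\alpha(Q_a;B,1)\cdot\alpha_0(\tilde Q;1,d)$, then bound $\alpha_0(\tilde Q;1,d)$ via Theorem~\ref{t:RCS_RAQ_alpha_beta} and Theorem~\ref{t:RATQ_alpha0_beta0}, bound $\alpha(Q_a;B,1)$ and the bias term via Lemma~\ref{l:AGUQ_alpha_beta}, and invoke the unbiasedness of $\tilde Q$ on the unit ball (from~\eqref{e:RATQ_unit_levels} and the mean-preservation property in Theorem~\ref{t:RCS_RAQ_alpha_beta}) to satisfy the hypothesis of the bias half of Theorem~\ref{t:gain_RATQ}. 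The only difference is cosmetic — you spell out the conditional-independence and $s=1$ bookkeeping that the paper leaves implicit — so the two arguments are the same.
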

\begin{proof}~\paragraph{The worst-case second moment:} 
Starting by applying Theorem~\ref{t:gain_RATQ}, we have
 \eq{
 \alpha(Q; B, d) 
 &\leq  \alpha(Q_{\tt a}; B, 1) \cdot   \alpha_0(\tilde{Q}; 1, d)\\
&\leq  \alpha(Q_{\tt a}; B, 1) \cdot   \frac{1}{\sqrt{\mu}}\alpha_0(Q_{{\tt at}, R}; 1, d)\\
& \leq B\sqrt{ \frac{1}{4(k_g-1)^2} + 
    {\frac{a_g(h_g-1)}{4(k_g-1)^2}} + 1 }  \cdot \frac{1}{\sqrt{\mu}}\sqrt{
\frac{9+3\ln s}{(k-1)^2}+1} ,
}
where the second inequality follows by Theorem~\ref{t:RCS_RAQ_alpha_beta} and the third follows by Theorem~\ref{t:RATQ_alpha0_beta0} and Lemma~\ref{l:AGUQ_alpha_beta}.
\paragraph{The worst-case bias:} With parameters of RATQ set as in~\eqref{e:RATQ_unit_levels}, we have that 
\eq{ 
\E{\tilde{Q}(y)}=y, \quad \forall y \quad s.t \quad \norm{y}_2^2 \leq 1. }
 Therefore, by Theorem \ref{t:gain_RATQ} we get
 \eq{ 
\beta(Q; B, d)&
\leq
\sup_{Y: \E{\norm{Y}_2^2} \leq B^2 }\E{\bigg|\E{Q_{\tt a}(\norm{Y}_2)- \norm{Y}_{2} |Y} \bigg|}\\
&\leq \frac{B^2}{M_{g,h_g-1}},}
 where the second  inequality follows from Lemma \ref{l:AGUQ_alpha_beta}.

\end{proof}
 
 We divide the total precision $r$ into $r_g$ and $r_s$ bits: $r_g$ to quantize the gain, $r_s$ to quantize the subsampled shape vector.
We set  
\begin{align}
 & s, k, \text{~and~} \mu d \text{~as in \eqref{e:RATQ_RCS_params}, with $r_s$ replacing $r$,}
\nonumber 
 \\ 
&{\log h_g}=\log (k_g+1)={\frac{r_g}{2}}, 
~a_g=\left(\mu T\right)^{\frac{1}{h_g+1}}
\label{e:ARATQ_RCS_params}
\end{align}
That is, our shape quantizer simply quantizes $\mu d$ randomly chosen
coordinates of the rotated vector using ATUQ with $r_s$ bits, and the
remaining bits are used by the gain quantizer AGUQ. 
The result below shows the performance of this quantizer.

 \begin{cor}\label{c:PSGD_RCS_ARATQ_fixed}
For any $r$ with gain quantizer being assigned  $r_g \geq 4$ bits and shape quantizer being assigned $r_s \geq 3+\ceil{\log (1+\ln^*({d}/{3}) ) }$, let $Q$
be the combination of RCS and A-RATQ with parameters set as
in~\eqref{e:RATQ_unit_levels},~\eqref{e:AGUQ_levels},~\eqref{e:ARATQ_RCS_params}. Then
for $\mu T\geq 1$, the optimization protocol $\pi$ in
Algorithm~\ref{a:SGD_Q} 
can obtain  
\[\sup_{(f, O)\in \oO} \ep(f, \pi^{QO})   \leq
O\left(DB \left(\frac{d}{T \min\{d, \frac{r_s}{\log \ln^*(d/3)}\}}\right)^{\frac 12\cdot\frac {2^{r_{g}/2}-1}{2^{r_{g}/2}+1}} \right).
\]
 \end{cor}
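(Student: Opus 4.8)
The plan is to substitute the bounds on $\alpha(Q;B,d)$ and $\beta(Q;B,d)$ from Theorem~\ref{t:RCS_Gain-RATQ} into the convergence guarantee of Theorem~\ref{t:basic_convergence}, then plug in the parameter choices~\eqref{e:RATQ_RCS_params} and~\eqref{e:ARATQ_RCS_params} and simplify. For the composed quantizer $Q$ of the statement (whose gain and shape parts are conditionally independent given $Y$ by construction), Theorem~\ref{t:basic_convergence} gives
\[
\sup_{(f,O)\in\oO}\ep(f,\pi^{QO})\leq D\Big(\tfrac{\alpha(Q;B,d)}{\sqrt T}+\beta(Q;B,d)\Big),
\]
so everything reduces to tracking the two terms on the right and deciding which one dominates.

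First I would dispose of the easy factors. With $s=1$ and $\log(k+1)=3$ we have $\ln s=0$ and $k-1=6$, so the shape factor $\sqrt{(9+3\ln s)/(k-1)^2+1}$ appearing in Theorem~\ref{t:RCS_Gain-RATQ} is the absolute constant $\sqrt5/2$; also $1/\mu=d/(\mu d)=d/\min\{d,\lfloor r_s/(3+\lceil\log(1+\ln^*(d/3))\rceil)\rfloor\}$, which up to absolute constants equals $d/\min\{d,r_s/\log\ln^*(d/3)\}$, and the hypothesis $r_s\geq 3+\lceil\log(1+\ln^*(d/3))\rceil$ guarantees $\mu d\geq1$. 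For the gain factor, $r_g\geq4$ forces $h_g=2^{r_g/2}\geq4$ and $k_g=2^{r_g/2}-1\geq3$, so $1/(4(k_g-1)^2)$ and $(h_g-1)/(4(k_g-1)^2)$ are $O(1)$, and since $a_g\geq1$ (because $\mu T\geq1$) one gets
\[
\sqrt{\tfrac{1}{4(k_g-1)^2}+\tfrac{a_g(h_g-1)}{4(k_g-1)^2}+1}\leq C\sqrt{a_g}
\]
for an absolute constant $C$.

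Next I would substitute $a_g=(\mu T)^{1/(h_g+1)}$. The variance term then becomes $D\alpha(Q;B,d)/\sqrt T=O\big(DB\,(\mu T)^{\frac1{2(h_g+1)}-\frac12}\big)=O\big(DB\,(1/\mu T)^{\frac{h_g}{2(h_g+1)}}\big)$, while for the bias we have $M_{g,h_g-1}=B\,a_g^{(h_g-1)/2}=B\,(\mu T)^{\frac{h_g-1}{2(h_g+1)}}$ by~\eqref{e:AGUQ_levels}, so Theorem~\ref{t:RCS_Gain-RATQ} gives $D\beta(Q;B,d)\leq DB^2/M_{g,h_g-1}=DB\,(1/\mu T)^{\frac{h_g-1}{2(h_g+1)}}$. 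Since $\mu T\geq1$ and $\frac{h_g}{2(h_g+1)}\geq\frac{h_g-1}{2(h_g+1)}$, the variance term is, up to the absolute constant, at most the bias term, so the whole bound is $O\big(DB\,(1/\mu T)^{\frac{h_g-1}{2(h_g+1)}}\big)$. Writing $1/(\mu T)=\tfrac1T\cdot\tfrac1\mu=d/\big(T\min\{d,r_s/\log\ln^*(d/3)\}\big)$ up to constants and $h_g=2^{r_g/2}$, and recognising $\tfrac{h_g-1}{2(h_g+1)}=\tfrac12\cdot\tfrac{2^{r_g/2}-1}{2^{r_g/2}+1}$, yields exactly the claimed estimate; the outer multiplicative constant is harmless since the exponent lies in $(0,\tfrac12]$.

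The step I expect to be the crux is the balancing behind the choice $a_g=(\mu T)^{1/(h_g+1)}$: one has to check that this geometric ratio makes the gain-quantizer bias $B/a_g^{(h_g-1)/2}$ of the same order as (indeed slightly larger than) the variance contribution $\sqrt{a_g}/\sqrt{\mu T}$, so that the final exponent is exactly $\tfrac12\cdot\tfrac{2^{r_g/2}-1}{2^{r_g/2}+1}$ rather than worse. Everything else is routine bookkeeping about floors, ceilings, and the absolute constants absorbed into $O(\cdot)$.
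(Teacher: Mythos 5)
Your proof is correct and follows the same route as the paper: apply Theorem~\ref{t:basic_convergence} and Theorem~\ref{t:RCS_Gain-RATQ}, then substitute the parameter choices from~\eqref{e:RATQ_RCS_params} and~\eqref{e:ARATQ_RCS_params}. The paper leaves the final substitution implicit; your explicit bookkeeping, including the observation that the bias term $(1/\mu T)^{(h_g-1)/(2(h_g+1))}$ dominates the variance term $(1/\mu T)^{h_g/(2(h_g+1))}$ when $\mu T\geq 1$, fills in exactly the computation the authors intended.
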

 \begin{proof} By using Theorem~\ref{t:basic_convergence} to upper bound $\sup_{(f, O) \in \oO}\ep(f, \pi^{QO})$ and then Theorem \ref{t:RCS_Gain-RATQ} to upper-bound $\alpha$ and $\beta$, we get
\eq{
\sup_{(f, O) \in \oO}\ep(f, \pi^{QO}) &
\leq  D\left(\frac{1}{\sqrt{\mu T}}\sqrt{ \frac{B^2}{4(k_g-1)^2}+\frac{a_g(h_g-1)B^2}{4(k_g-1)^2}
+B^2}\sqrt{
\frac{9+3\ln s}{(k-1)^2}+1}+ \frac{B^2}{M_{g, h-1}}\right).
}
 By substituting the parameters as in the statement and using the fact that $\mu T \geq 1$ completes the proof.
\end{proof}
 
 \begin{rem}
 \new{Our fixed precision quantizer in
  Corollary~\ref{c:PSGD_RCS_ARATQ_fixed} establishes that using only a constant number of bits for gain-quantization, we get very close to the lower  bound in Theorem~\ref{t:e_r_LB}. For instance, given access to a large enough precision $r$,  if we set $r_g$ to be $16$ bits, we get
  \[\sup_{(f, O)\in \oO} \ep(f, \pi^{QO})   \leq
O\left(DB \left(\frac{d}{T \min\{d, \frac{r -16}{\log \ln^*(d/3)}\}}\right)^{\frac{1}{2}\cdot \frac{255}{257}}\right).
\]
Here, the ratio of $d/(\min\{d, \frac{r -16}{\log \ln^*(d/3)}\})$ is very close to the optimal ratio of $d/(\min\{d, r\})$, and the exponent $255/(2\cdot 257)$ is close to the optimal exponent $1/2$.
 }
  \end{rem}

\begin{rem}
  We remark that A-RATQ satisfies Assumptions (1) and (2) in Section \ref{s:ug} but not (3), and breaches the lower bound  for uniform gain quantizers established in Section~\ref{s:ug}.
  \end{rem}

\section{Main proofs}\label{s:proof}
\subsection{Proof of Theorem~\ref{t:RATQ_alpha0_beta0}}\label{s:ProofRATQ}
\paragraph{Step 1: Analysis of CUQ.}
We first prove a result for CUQ (with a dynamic range of $[-M, M]$)
which will bound the expected value of
\[
\sum_{i\in[d]}\big(Q_{{\tt
    u}}(Y)(i)-{Y(i)}\big)^2\mathbbm{1}_{\{{Y(i)}\leq M\}},
\]
namely the mean square error when there is no overflow. This will be
useful in the analysis of RATQ, too.     
\begin{lem}\label{l:sup}
  For an $\R^d$-valued random variable $Y$ and $Q_{{\tt u}}$ denoting the quantizer CUQ  
  with parameters $M$ (with dynamic range $[-M,M]$)
 and $k$, let $Q_{{\tt u}}(Y)$ be the quantized value of $Y$.  Then, 
 \eq{ \E{ \sum_{i\in[d]}\big(Q_{{\tt
        u}}(Y)(i)-{Y(i)}\big)^2\mathbbm{1}_{\{{|Y(i)|}\leq M\}} \mid Y
   } \leq \frac{dM^2}{(k-1)^2} \left( \frac{1}{d}\sum_{j\in[d]}
   \mathbbm{1}_{\{{|Y(j)|}\leq M\}}\right).  }
\end{lem}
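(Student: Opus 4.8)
The plan is to condition on $Y$ throughout and treat each coordinate separately, so that only linearity of (conditional) expectation is needed and no independence across coordinates is invoked. Recall that the randomness internal to CUQ is independent of $Y$, so conditioning on $Y$ simply fixes, for each coordinate $i$ with $|Y(i)|\le M$, the unique index $\ell\in\{0,\dots,k-1\}$ with $Y(i)\in(B_{M,k}(\ell),B_{M,k}(\ell+1)]$; the encoder then outputs either $B_{M,k}(\ell+1)$ or $B_{M,k}(\ell)$ with the prescribed probabilities, which are chosen precisely so that $\E{Q_{{\tt u}}(Y)(i)\mid Y}=Y(i)$. Hence, conditionally on $Y$, the error $Q_{{\tt u}}(Y)(i)-Y(i)$ is a mean-zero random variable supported on the two points $B_{M,k}(\ell)-Y(i)\le 0$ and $B_{M,k}(\ell+1)-Y(i)\ge 0$. (If $Y(i)$ coincides with a cell endpoint the error is identically zero, and the bound below holds trivially.)

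The key elementary input I would use is that a mean-zero random variable $W$ supported on $\{-a,b\}$ with $a,b\ge 0$ satisfies $\E{W^2}=ab\le\big((a+b)/2\big)^2$. Applying this with $a+b=B_{M,k}(\ell+1)-B_{M,k}(\ell)=\tfrac{2M}{k-1}$, which is the common width of all the uniform cells (this uniformity is the one structural point worth noting, since it makes the per-coordinate bound independent of which cell $Y(i)$ falls into), gives
\[
\E{\big(Q_{{\tt u}}(Y)(i)-Y(i)\big)^2\;\middle|\;Y}\le\frac14\left(\frac{2M}{k-1}\right)^2=\frac{M^2}{(k-1)^2}
\qquad\text{whenever }|Y(i)|\le M.
\]

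Finally, I would multiply by $\mathbbm{1}_{\{|Y(i)|\le M\}}$ (which forces the conditioning event $|Y(i)|\le M$ wherever the term is nonzero) and sum over $i\in[d]$, using linearity of conditional expectation:
\[
\E{\sum_{i\in[d]}\big(Q_{{\tt u}}(Y)(i)-Y(i)\big)^2\mathbbm{1}_{\{|Y(i)|\le M\}}\;\middle|\;Y}\le\frac{M^2}{(k-1)^2}\sum_{i\in[d]}\mathbbm{1}_{\{|Y(i)|\le M\}},
\]
which is exactly the claimed inequality once the right-hand side is rewritten as $\dfrac{dM^2}{(k-1)^2}\Big(\dfrac1d\sum_{j\in[d]}\mathbbm{1}_{\{|Y(j)|\le M\}}\Big)$. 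There is no genuine obstacle here; the only steps requiring a moment of care are verifying the two-point variance identity $\E{W^2}=ab$ for the dithered error and observing that all quantization cells share the width $2M/(k-1)$, so the per-coordinate error bound $M^2/(k-1)^2$ is uniform.
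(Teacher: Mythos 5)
Your proof is correct and is essentially the paper's argument: the paper also conditions on $Y$, computes the conditional second moment of the per-coordinate dither error explicitly as $(B_{M,k}(\ell+1)-Y(j))(Y(j)-B_{M,k}(\ell))$ --- which is exactly your $ab$ --- and then applies the AM--GM inequality to bound it by $\frac14(B_{M,k}(\ell+1)-B_{M,k}(\ell))^2=M^2/(k-1)^2$. The only cosmetic difference is that you package the calculation as the two-point variance identity $\E{W^2}=ab$ rather than writing out the weighted sum directly.
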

\noindent The proof is relatively straightforward with the
calculations similar to \cite[Theorem 2]{suresh2017distributed}; it is
deferred to Appendix \ref{ap:CUQ}.

Also, the quantizer AGUQ in Section~\ref{s:ARATQh} uses the
one-dimensional CUQ with dynamic range $[0, M]$ as a subroutine. The
uniform levels for this variant of CUQ are given by
\[
B_{M,k}(\ell)=\ell\cdot\frac{M}{k-1}, \forall  \ell \in [k-1].
\]
We have the following lemma for this variant of CUQ.
\begin{lem}\label{l:sup0m}
  For an $\R$-valued random variable $Y$ which is almost surely nonnegative
  and the quantizer $Q_{\tt u}$ with dynamic range $[0,M]$ and parameter $k$, let $Q_{{\tt u}}(Y)$
denote the quantized value of $Y$. Then, 
\eq{
  \E{\big(Q_{{\tt
        u}}(Y)-{Y}\big)^2\mathbbm{1}_{\{{|Y|}\leq M\}} \mid Y } 
  \leq \frac{M^2}{4(k-1)^2}  \left( 
  \mathbbm{1}_{\{{|Y|}\leq M\}}\right).
} 
\end{lem}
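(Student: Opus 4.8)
The plan is to reduce the bound to a pointwise (in $Y$) statement and then carry out a one-line variance computation for a two-point random variable. First I would dispose of the trivial case: if $Y > M$ then $\mathbbm{1}_{\{|Y|\le M\}} = 0$, so both sides of the claimed inequality are $0$; hence it suffices to bound $\E{(Q_{\tt u}(Y)-Y)^2 \mid Y = y}$ for each fixed $y$ with $0 \le y \le M$.

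Next I would use the explicit description of the scalar CUQ with dynamic range $[0,M]$: for $0 \le y \le M$ there is a unique $\ell \in \{0,\dots,k-2\}$ with $y \in (B_{M,k}(\ell), B_{M,k}(\ell+1)]$ (taking $\ell = 0$ when $y = 0$), the two adjacent levels being at distance $\Delta := M/(k-1)$, and $Q_{\tt u}(y)$ equals $B_{M,k}(\ell+1)$ with probability $p := (y - B_{M,k}(\ell))/\Delta \in [0,1]$ and $B_{M,k}(\ell)$ with probability $1-p$. A direct check shows $\E{Q_{\tt u}(Y) \mid Y = y} = y$, so the conditional mean square error equals the variance of a two-point random variable whose two values are $\Delta$ apart, which is
\[
p(1-p)\,\Delta^2 \;\le\; \frac{\Delta^2}{4} \;=\; \frac{M^2}{4(k-1)^2}.
\]
Multiplying through by $\mathbbm{1}_{\{|Y| \le M\}}$ and taking the conditional expectation given $Y$ then yields the lemma.

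I do not expect any real obstacle: this is simply the one-dimensional specialization of the calculation behind Lemma~\ref{l:sup} (itself modelled on~\cite[Theorem 2]{suresh2017distributed}), the only difference being that the interval $[0,M]$ has length $M$ rather than $2M$, which is exactly what produces the extra factor $1/4$ relative to the per-coordinate bound there. The one minor point to keep an eye on is the degenerate configurations — $y = 0$, $y$ landing exactly on a level, or $y = M$ — but each corresponds to $p \in \{0,1\}$, hence zero conditional error, and is automatically covered; for this reason I would present the argument as a single compact paragraph rather than belabor it.
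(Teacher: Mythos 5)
Your proposal is correct and takes essentially the same approach as the paper: the paper proves Lemma~\ref{l:sup0m} by repeating the computation in the proof of Lemma~\ref{l:sup} with $d=1$ and $B_{M,k}(\ell+1)-B_{M,k}(\ell)=M/(k-1)$, which reduces to exactly the pointwise two-point-variance bound $p(1-p)\Delta^2\le \Delta^2/4$ that you write down (the paper just phrases the final step as an AM--GM inequality applied to $(B_{M,k}(\ell+1)-y)(y-B_{M,k}(\ell))$, an algebraically identical fact).
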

\noindent The proof is very similar to the proof of Lemma \ref{l:sup} and is deferred to Appendix \ref{ap:CUQ}.

\paragraph{Step 2: Mean square error for adaptive quantizers.}
The quantizers RATQ and A-RATQ use ATUQ as subroutine; in addition, A-RATQ uses AGUQ for gain quantization. Thus, in order to analyze RATQ and A-RATQ,  we need to analyze
 ATUQ  and AGUQ first. 

In this step we provide a general bound on the mean square error of adaptive quantizers. We capture the performances of ATUQ and AGUQ in two separate results below. 
\begin{lem}\label{l:sup2}
For an $\R^d$-valued random variable  $Y$ and $Q$ denoting the quantizer ATUQ with dynamic-range
parameters $M_j$s, we have
\eq{
\E{ \sum_{i\in[d]}\big(Q(Y)(i)-{Y(i)}\big)^2\mathbbm{1}_{\{{|Y(i)|}\leq M_{h-1}\}}}  \leq \frac{d}{(k-1)^2}\left(m+m_0+ \sum_{j=1}^{h-1}
M_j^2P\left(\norm{Y}_{\infty} > M_{j-1}\right)\right). 
}
\end{lem}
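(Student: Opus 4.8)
I would prove Lemma~\ref{l:sup2} by conditioning on the dynamic range selected by ATUQ and then applying the per-coordinate bound from Lemma~\ref{l:sup}. The key observation is that ATUQ selects the smallest level $M_{j^*}$ that bounds $\norm{Y}_\infty$ (or $M_{h-1}$ if none does), and on the indicator set $\{|Y(i)| \le M_{h-1}\}$ the selected range always contains $Y(i)$, so the "overflow" indicator in Lemma~\ref{l:sup} becomes identically $1$ for the coordinates we care about. Thus, conditionally on $\{j^* = j\}$ (equivalently, on the event $E_j := \{M_{j-1} < \norm{Y}_\infty \le M_j\}$ for $1 \le j \le h-1$, together with $E_0 := \{\norm{Y}_\infty \le M_0\}$), the quantizer behaves like CUQ with dynamic range $[-M_j, M_j]$, and Lemma~\ref{l:sup} gives a conditional mean square error of at most $dM_j^2/(k-1)^2$.

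First I would write, using the tower property and the partition $\{E_0, E_1, \dots, E_{h-1}\}$ of the sample space (with the convention that $E_{h-1}$ also absorbs the case $\norm{Y}_\infty > M_{h-1}$ since then $M^* = M_{h-1}$),
\[
\E{ \sum_{i\in[d]}\big(Q(Y)(i)-{Y(i)}\big)^2\mathbbm{1}_{\{{|Y(i)|}\leq M_{h-1}\}}} = \sum_{j=0}^{h-1} \E{ \mathbbm{1}_{E_j}\sum_{i\in[d]}\big(Q(Y)(i)-{Y(i)}\big)^2\mathbbm{1}_{\{{|Y(i)|}\leq M_{h-1}\}}}.
\]
On $E_j$ with $j \le h-2$, we have $\norm{Y}_\infty \le M_j$, so every coordinate satisfies $|Y(i)| \le M_j$ and the indicator $\mathbbm{1}_{\{|Y(i)| \le M_{h-1}\}}$ is redundant; applying Lemma~\ref{l:sup} with $M = M_j$ (and the fact that $\frac{1}{d}\sum_j \mathbbm{1}_{\{|Y(j)|\le M_j\}} \le 1$) bounds the conditional expectation given $Y$ by $dM_j^2/(k-1)^2$. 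On $E_{h-1}$, ATUQ uses dynamic range $M_{h-1}$ and I apply Lemma~\ref{l:sup} with $M = M_{h-1}$, now keeping the indicator $\mathbbm{1}_{\{|Y(i)|\le M_{h-1}\}}$, to get the conditional bound $dM_{h-1}^2/(k-1)^2$. Taking expectations,
\[
\E{ \sum_{i\in[d]}\big(Q(Y)(i)-{Y(i)}\big)^2\mathbbm{1}_{\{{|Y(i)|}\leq M_{h-1}\}}} \le \frac{d}{(k-1)^2}\sum_{j=0}^{h-1} M_j^2\, P(E_j).
\]

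To finish, I would bound $P(E_j)$ for $j \ge 1$ by $P(\norm{Y}_\infty > M_{j-1})$ (since $E_j \subseteq \{\norm{Y}_\infty > M_{j-1}\}$), bound $M_{h-1}^2 P(E_{h-1})$ using $E_{h-1} \subseteq \{\norm{Y}_\infty > M_{h-2}\}$, and bound $P(E_0) \le 1$ so that the $j=0$ term contributes $M_0^2 = m + m_0$. Collecting terms gives exactly
\[
\frac{d}{(k-1)^2}\left(m+m_0+ \sum_{j=1}^{h-1} M_j^2\,P\left(\norm{Y}_{\infty} > M_{j-1}\right)\right),
\]
as claimed. The main obstacle — really a bookkeeping subtlety rather than a deep difficulty — is handling the boundary case $j = h-1$ correctly: ATUQ does not overflow out, it clamps to $M_{h-1}$, so one must be careful that Lemma~\ref{l:sup}'s overflow indicator is retained there (and only there), and that the event where $\norm{Y}_\infty$ exceeds all the $M_j$'s is correctly merged into $E_{h-1}$. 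Everything else is a direct substitution of the CUQ bound into each piece of the partition.
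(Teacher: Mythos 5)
Your proposal is correct and takes essentially the same approach as the paper: decompose over the events that determine which dynamic range $M_j$ is selected, apply Lemma~\ref{l:sup} conditionally on each event (with the overflow indicator kept only in the top event), and then bound $P(\text{level } j \text{ selected}) \le P(\norm{Y}_\infty > M_{j-1})$ for $j \ge 1$ and by $1$ for $j=0$. The paper's proof is the same partition-and-substitute argument; if anything you are a bit more careful about the base event (you use $\{\norm{Y}_\infty \le M_0\}$ with $M_0^2 = m + m_0$, whereas the paper's displayed definition of $A_0$ has a small typo writing $m$ in place of $M_0$, though the final bound is the same).
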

\begin{proof}
  Consider the events $A_j$s corresponding to different levels used by
  the adaptive quantizer of the norm, defined as follows:
  \begin{align*}
    A_0 &:=   \{\norm{Y}_{\infty} \leq m \},
    \\
    A_j &:= \{M_{j-1}  <\norm{Y}_{\infty} \leq M_{j} \}, \quad \forall
    j \in [h-2],\\
     A_{h-1} &:= \{M_{h-2}  <\norm{Y}_{\infty} \}
.  \end{align*}
By construction, $\sum_{j =0}^{h-1}\indic{A_j}=1 \text{ a.s.}$. Therefore, we have
 \eq{ 
\E{\sum_{i\in[d]}\big(Q(Y)(i)-{Y(i)}\big)^2\mathbbm{1}_{\{{|Y(i)|}\leq M_{h-1}\}}}
&= \E{\norm{Q_{{\tt }}(Y)-{Y}}_2^2\indic{A_0}} + \sum_{j=1}^{h-2}\E{\norm{Q_{{\tt }}(Y)-{Y}}_2^2\indic{A_j}} \\
&\hspace{1cm} 
+\E{\sum_{i\in[d]}\big(Q_{{\tt
        u}}(Y)(i)-{Y(i)}\big)^2\mathbbm{1}_{\{{|Y(i)|}\leq M_{h-1}\}}\indic{A_{h-1}}}.
  }
Note that $ \mathbbm{1}_{A_0}$ implies
that we are using a $k$-level uniform quantization with a dynamic
range of $[-m, m]$.  Therefore, this term can be bounded by
Lemma~\ref{l:sup} as follows:
\eq{
\E{\norm{Q_{{\tt }}(Y)-{Y}}_2^2\indic{A_0}}\ \leq \frac{dm}{(k-1)^2}.
}  
Under the event $A_j$ with $j \in [h-1]$, we use   a $k$-level uniform
quantization with a dynamic range of $[-M_{j}, M_{j} ].$ Therefore, by
Lemma~\ref{l:sup}, we have 
\eq{
 \E{\norm{Q_{{\tt }}(Y)-{Y}}_2^2\indic{A_j}}
\leq & \frac{dM_j^2}{(k-1)^2}  \E{\mathbbm{1}_{A_j}}\\
\leq & \frac{dM_j^2}{(k-1)^2}  P\left(\norm{{Y}}_{\infty} > M_{j-1}\right).
}
\end{proof}
Note that the proof above does note use specific form of $M_j$'s and therefore applies as it is for the one-dimensional AGUQ gain quantizer used in A-RATQ; the only change is the fact that instead of using Lemma \ref{l:sup} for uniform quantization we use Lemma \ref{l:sup0m}. This leads to the following lemma, which will be useful later in the analysis of A-RATQ.

\begin{lem}\label{l:sup3}
For an $\R$-valued random variable $Y$ which is almost surely nonnegative
and $Q$ denoting the quantizer \newest{AGUQ} with dynamic-range
parameters $M_{g,j}$s, we have
\eq{
\E{(Q_{{\tt }}(Y)-{Y})^2\mathbbm{1}_{\{|Y| \leq
    M_{g, h-1}\}}}  \leq \frac{1}{4(k-1)^2}\left(B^2+ \sum_{j=1}^{h-1}
M_{g,j}^2P\left(|Y| > M_{g,j-1}\right)\right). 
}
\end{lem}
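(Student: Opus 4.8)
The plan is to transcribe the proof of Lemma~\ref{l:sup2} almost verbatim. That argument used only two features of ATUQ: that it selects one of finitely many nested dynamic ranges, and that within the selected range it performs uniform quantization; neither depends on the tetra-iterated (as opposed to geometric) choice of the $M_j$'s, so the same skeleton works for AGUQ. The single substitution is that the quantity being quantized is now a scalar, almost surely nonnegative random variable, so the per-level error should be controlled via the one-dimensional bound of Lemma~\ref{l:sup0m} rather than via Lemma~\ref{l:sup}; this is precisely what replaces the constant $1$ by $1/4$.

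Concretely, I would first define the events $A_j$ indexed by the dynamic range AGUQ selects,
\[
A_0 := \{Y \le M_{g,0}\}, \quad A_j := \{M_{g,j-1} < Y \le M_{g,j}\} \ \ (1 \le j \le h-2), \quad A_{h-1} := \{M_{g,h-2} < Y\},
\]
so that $\sum_{j=0}^{h-1} \indic{A_j} = 1$ almost surely. Splitting $\E{(Q(Y)-Y)^2 \indic{\{|Y| \le M_{g,h-1}\}}}$ over these events by linearity yields a sum of $h$ terms. On $A_j$ (intersected, for $j=h-1$, with $\{|Y| \le M_{g,h-1}\}$), AGUQ coincides with the one-dimensional CUQ on $[0, M_{g,j}]$ with $k$ levels, so Lemma~\ref{l:sup0m} bounds the $j$-th term by $\frac{M_{g,j}^2}{4(k-1)^2} \E{\indic{A_j}}$.

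It then remains to collect the terms. For the innermost one I use $\E{\indic{A_0}} \le 1$ and $M_{g,0}^2 = B^2$ to get $\frac{B^2}{4(k-1)^2}$; for each $j \ge 1$ the inclusion $A_j \subseteq \{Y > M_{g,j-1}\}$ gives $\E{\indic{A_j}} \le P(|Y| > M_{g,j-1})$, so the $j$-th term is at most $\frac{M_{g,j}^2}{4(k-1)^2} P(|Y| > M_{g,j-1})$. Summing the $h$ bounds produces exactly the claimed inequality.

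There is no real obstacle; the argument is a direct reworking of the proof of Lemma~\ref{l:sup2}. The only point needing a little care is the overflow level $A_{h-1}$: there AGUQ uses the range $[0, M_{g,h-1}]$, and Lemma~\ref{l:sup0m} applies precisely on the event $\{|Y| \le M_{g,h-1}\}$ already present in the indicator of the statement, so the true-overflow contribution is dropped automatically and no spurious term appears.
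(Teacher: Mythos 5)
Your proposal is correct and is exactly the argument the paper intends: the paper omits the proof of this lemma, remarking only that it is the proof of Lemma~\ref{l:sup2} with Lemma~\ref{l:sup} replaced by Lemma~\ref{l:sup0m}, which is precisely what you carry out. Your handling of the overflow event via the indicator $\mathbbm{1}_{\{|Y|\le M_{g,h-1}\}}$ and the identification $M_{g,0}^2=B^2$ are both right, so nothing is missing.
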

\noindent The proof is similar to that of Lemma~\ref{l:sup2} and is omitted.

\paragraph{Step 3: Mean square error of ATUQ for a subgaussian input vector.}
\newest{In our analysis, we need to evaluate the performance of ATUQ for {\em subgaussian} input vectors.}
\begin{defn}[$cf.$~\cite{boucheron2013concentration}]\label{d:subg}
 A centered random variable $X$ is said to be {subgaussian} with {variance factor $v$} if for all $\lambda$ in $\R$, we have
 \[
 \ln \E{e^{\lambda X}}\leq \frac{\lambda^2 v}{2}.
 \]
\end{defn}
The following well-known fact ($cf.$~\cite[Chapter 2]{boucheron2013concentration}) will be used throughout. 
 \begin{lem}\label{l:standar_subg}
For a centered subgaussian random variable $X$ with variance factor $v$ the 
\begin{align*}
&P(|X| > x) \leq 2 e^{-x^2/2v}, \quad\forall\, x>0,
\\
&\E{X^2} \leq 4 v, \quad \E{X^4} \leq 32 v^2.
\end{align*}
\end{lem}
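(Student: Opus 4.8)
The plan is to establish the tail bound by the standard Chernoff argument and then deduce the moment bounds by integrating that tail estimate.

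First I would record that $-X$ is again centered subgaussian with the same variance factor $v$: the defining inequality $\ln\E{e^{\lambda X}}\le \lambda^2 v/2$ holds for every real $\lambda$, hence also with $\lambda$ replaced by $-\lambda$. Next, for any $\lambda>0$ and $x>0$, Markov's inequality applied to $e^{\lambda X}$ gives $P(X>x)\le e^{-\lambda x}\,\E{e^{\lambda X}}\le e^{-\lambda x+\lambda^2 v/2}$; optimizing by taking $\lambda=x/v$ yields $P(X>x)\le e^{-x^2/(2v)}$. Applying the same reasoning to $-X$ gives $P(-X>x)\le e^{-x^2/(2v)}$, and a union bound produces $P(|X|>x)\le 2e^{-x^2/(2v)}$, which is the first claim.

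For the moment bounds I would use the layer-cake identity $\E{|X|^p}=\int_0^\infty p\,x^{p-1}\,P(|X|>x)\,dx$ with $p=2$ and $p=4$, substitute the tail bound just proved, and evaluate the resulting integrals after the change of variables $u=x^2/(2v)$. This reduces everything to $\int_0^\infty x\,e^{-x^2/(2v)}\,dx=v$ and $\int_0^\infty x^3\,e^{-x^2/(2v)}\,dx=2v^2$, giving $\E{X^2}\le 4v$ and $\E{X^4}\le 16v^2$, the latter being even stronger than the stated $32v^2$.

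Honestly there is no real obstacle in this argument — the statement is a classical fact ($cf.$~\cite[Chapter 2]{boucheron2013concentration}). The only two points needing a little care are that the subgaussian property must be invoked for $-X$ as well, so that both tails are controlled by the same constant, and that the constants in the two gamma-type integrals be tracked correctly; since the resulting bound on $\E{X^4}$ comes out below $32v^2$, the stated inequality is comfortably implied.
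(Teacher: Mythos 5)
Your argument is correct and is the standard one; the paper does not prove this lemma itself, citing it as a well-known fact from~\cite[Chapter~2]{boucheron2013concentration}, and your write-up simply supplies the usual Chernoff-plus-layer-cake derivation behind that citation. The computations check out: $\int_0^\infty x e^{-x^2/(2v)}\,dx = v$ gives $\E{X^2}\le 4v$, and $\int_0^\infty x^3 e^{-x^2/(2v)}\,dx = 2v^2$ gives $\E{X^4}\le 16v^2$, which indeed improves on the stated $32v^2$.
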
 
 

 Next, consider the quantizer $Q_{{\tt at}, I}$ which is similar to RATQ but skips the rotation step. Specifically, $Q_{{\tt at}, I}$ is obtained by replacing the random matrix $R$ in the encoder and decoder of RATQ (given in Algorithms \ref{a:E_RATQ} and \ref{a:D_RATQ}, respectively) by the identity matrix $I$. Symbolically, the quantizer $Q_{{\tt at}, I}$ can be described as follows for the $d$-dimensional input vector $Y$
 \begin{equation}\label{e:Q_at_I}
 Q_{{\tt at}, I}(Y)=[Q_{\tt at}(Y_1)^T, \cdots, Q_{\tt at}(Y_{\ceil{d/s}})^T],
 \end{equation}
where $Q_{\tt at}$ is the quantizer ATUQ and $Y_i$ is the $i^{th}$  subvector of $Y$. Recall that the $i^{th}$ subvector $Y_i$ comprises
the coordinates $\{(i-1)s+1,\cdots, \min\{is,d\}\}$,
 for all $i \in [d/s].$ Also, recall that the dimension of all the sub
 vectors except the last one is $s$, with the last one having 
 dimension $d-s\floor{d/s}$.

 Notice that like RATQ,  $Q_{at, I}$ has parameters $k$, $h$, $s$, $m$, and $m_0$ which need to be set.
 We set the parameters $m$ and $m_0$ to be $3v$ and $2v \ln s$,
 respectively, and prove a general lemma in terms of the other
 parameters of $Q_{at, I}$ for a subgaussian input vector.
\begin{lem}\label{l:subg_mse}
Consider $Y=[Y(1),\ldots, Y(d))]^T,$ where for all $i$ in $[d]$,
$Y(i)$ is a centered subgaussian random variable with variance factor
$v$. Let $Q$ denote the quantizer $Q_{{\tt at}, I}$ with parameters
$m$ and $m_0$ set to $3v$ and $2v \ln s$, respectively.
Then, for every $s,k, h \in \N$, we have 
\[
\frac{1}{d}\cdot \E{\sum_{i \in [d]}(Y(i)-Q(Y)(i))^2  \indic{\{|Y(i)|\leq M_{h-1}}\}} \leq  v \cdot \frac{9+3\ln s}{(k-1)^2}.
\] 
\end{lem}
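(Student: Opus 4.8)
The plan is to reduce the statement to Lemma~\ref{l:sup2} applied separately to each of the $\ceil{d/s}$ subvectors that $Q_{{\tt at},I}$ processes, and then to control the tail probabilities appearing in that lemma by a union bound together with the subgaussian estimate of Lemma~\ref{l:standar_subg}.

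First I would write $Q=Q_{{\tt at},I}$ and use~\eqref{e:Q_at_I} to express $Q(Y)$ as the concatenation of $Q_{\tt at}(Y_\ell)$ over the blocks $\ell\in[\ceil{d/s}]$, where $Y_\ell$ has dimension $s_\ell$ (equal to $s$ for every block except possibly the last, and $\sum_\ell s_\ell=d$). Applying Lemma~\ref{l:sup2} to $Y_\ell$, with $s_\ell$ in place of the ambient dimension, bounds the overflow-truncated mean square error on block $\ell$ by $\frac{s_\ell}{(k-1)^2}\bigl(m+m_0+\sum_{j=1}^{h-1}M_j^2\,P(\norm{Y_\ell}_\infty>M_{j-1})\bigr)$. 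Next, since every coordinate of $Y_\ell$ is centered subgaussian with variance factor $v$, a union bound over the at most $s$ coordinates of the block (no independence is needed) and Lemma~\ref{l:standar_subg} give $P(\norm{Y_\ell}_\infty>M)\le 2s\,e^{-M^2/2v}$. The role of $m_0=2v\ln s$ is precisely that $s\,e^{-m_0/2v}=1$, so this union-bound factor is cancelled: using also $m=3v$ and $M_i^2=m\,e^{*i}+m_0$ (with the convention $e^{*0}:=1$, so that $M_0^2=3v+2v\ln s$ fits the same formula), we obtain $P(\norm{Y_\ell}_\infty>M_{j-1})\le 2\,e^{-\frac32 e^{*(j-1)}}$ for all $j\ge1$.

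The crux is then a short calculation with tetrations. Since $e^{*j}=e^{e^{*(j-1)}}$, we have $e^{*j}e^{-\frac32 e^{*(j-1)}}=e^{-\frac12 e^{*(j-1)}}$, hence $M_j^2\cdot 2e^{-\frac32 e^{*(j-1)}}=6v\,e^{-\frac12 e^{*(j-1)}}+4v\ln s\,e^{-\frac32 e^{*(j-1)}}$; both of the resulting series over $j$ are bounded by small absolute constants, namely $\sum_{j\ge1}e^{-\frac12 e^{*(j-1)}}\le e^{-1/2}+e^{-e/2}+\cdots\le1$ and $\sum_{j\ge1}e^{-\frac32 e^{*(j-1)}}\le e^{-3/2}+e^{-3e/2}+\cdots\le\tfrac14$. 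This gives $\sum_{j=1}^{h-1}M_j^2\,P(\norm{Y_\ell}_\infty>M_{j-1})\le 6v+v\ln s$ and therefore $m+m_0+\sum_{j=1}^{h-1}M_j^2\,P(\norm{Y_\ell}_\infty>M_{j-1})\le(3v+2v\ln s)+(6v+v\ln s)=v(9+3\ln s)$, uniformly over blocks. Summing the per-block bounds of Lemma~\ref{l:sup2} over $\ell$, using $\sum_\ell s_\ell=d$, and dividing by $d$ then yields the claimed inequality.

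I expect the tetration estimate to be the main obstacle --- not because it is technically deep, but because it is the one point where the specific constants must be used exactly: $m=3v>2v$ is what turns $e^{*j}e^{-\frac{m}{2v}e^{*(j-1)}}$ into the summable sequence $e^{-\frac12 e^{*(j-1)}}$, and $m_0=2v\ln s$ is exactly what absorbs the union-bound blow-up over a block of size $s$. Everything else is bookkeeping, and the constants $9$ and $3$ drop out of it.
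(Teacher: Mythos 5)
Your proposal is correct and follows essentially the same route as the paper's proof: decompose into blocks via~\eqref{e:Q_at_I}, apply Lemma~\ref{l:sup2} block-by-block, control the $\ell_\infty$ tail by union bound and Lemma~\ref{l:standar_subg}, and then run the tetration series estimate with exactly the constants you identify (with the $s$-cancellation from $m_0=2v\ln s$ and the exponent shift from $m=3v$). The only cosmetic difference is that you cancel the union-bound factor inside the tail probability before summing, whereas the paper carries the $2s$ factor until the final display; the calculation is otherwise identical.
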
 
\begin{proof} Since 
  \eq{
    \E{\sum_{i \in [d]}(Y(i)-Q(Y)(i))^2  \indic{\{|Y(i)|\leq M_{h-1}}\}}
    &= \sum_{i=1}^{ \ceil{\frac{d}{s}}}\sum_{j =  
    (i-1)s+1}^{\min\{is,d\}}
    \E{\left(Q_{{\tt at}}(Y)(j)-Y(j)\right)^2 \indic{\{|Y(j)| \leq M_{h-1}\}}},
  }
by using Lemma $\ref{l:sup2}$ for each of the $\ceil{d/s}$
subvectors, we get
\eq{
  \nonumber
  &\lefteqn{\E{\sum_{i \in [d]}(Y(i)-Q(Y)(i))^2  \indic{\{|Y(i)|\leq M_{h-1}}\}}}
\\
  &\leq
\frac{s}{(k-1)^2}
  \sum_{i \in
  1}^{\floor{\frac{d}{s}}} \left(  m + m_0 +\sum_{j
  \in [h-1]}  M_{ j}^2 P\left(\norm{Y_{i,
    s}}_{\infty} > M_{j-1}\right) \right)
\\ \nonumber
& \hspace{0.3cm}
+\frac{(d-s\floor{\frac{d}{s}})
  }{(k-1)^2}\left(m
+  m_0+\sum_{j \in [h-1]}
  M_{ j}^2P\left(\norm{Y_{\ceil{d/s}, s}}_{\infty} >
M_{j-1}\right) \right)  .
}
For all $i \in \floor{d/s}$, it follows from the union bound that
\eq{
P\left(\norm{Y_{1,
    s}}_{\infty} > M_{j-1}\right)
\leq 2s e^{\frac{-M_{j-1}^2}{2v}}. 
}
Also, since $d-s\floor{d/s}\leq s$, we have
\eq{
P\left(\norm{Y_{\ceil{d/s},
    s}}_{\infty} > M_{j-1}\right)
\leq  2s e^{\frac{-M_{j-1}^2}{2v}}.
} 
Using these tail bounds in the previous inequality, we get
\eq{
  \nonumber \E{\sum_{i \in [d]}(Y(i)-Q(Y)(i))^2  \indic{\{|Y(i)|\leq M_{h-1}}\}} \leq \frac{d}{(k-1)^2}\left(m+m_0+2s\sum_{j \in [h-1]} M_{ j}^2 e^{\frac{-M_{j-1}^2}{2v}}\right).
} 
Setting  $m=3v$ and $m_0=2v$, the summation
on the right-side is bounded further as
\eq{
  &\lefteqn{2s\left(\frac{3v}{s }\sum_{j=1}^{h-1}(e^{*j})\cdot e^{- 1.5{e^{*(j-1)}}}\right)+2s\left(\frac{2v}{s }\sum_{j=1}^{h-1}e^{- 1.5{e^{*(j-1)}}}\right)}
  \\
&= 6v\sum_{j=1}^{h-1}e^{-0.5{e^{*(j-1)}}} +4v\ln s \sum_{j=1}^{h-1}e^{-1.5{e^{*(j-1)}}}\\
& \leq 6 v \sum_{j=1}^{\infty}e^{-0.5{e^{*(j-1)}}}+ 4v\ln s  \sum_{j=1}^{h-1}e^{-1.5{e^{*(j-1)}}}\\
& \leq 6v+v\ln s ,
}
where we use a bound of $1$ for $\sum_{j=1}^{\infty}e^{-0.5{e^{*(j-1)}}}$, whose validity
can be seen as follows\footnote{In fact, these bounds motivate the use of tetration
as our choice for $M_j$s.}
\begin{align*}
\sum_{j=1}^{\infty}e^{-0.5{e^{*(j-1)}}}
&={e^{-0.5}}+  {e^{-0.5e}}+ {e^{-0.5e^e}}
+\sum_{j=3}^{\infty}e^{-0.5{e^{*(j)}}}
\\
&\leq {e^{-0.5}}+  {e^{-0.5e}}+ {e^{-0.5e^e}}
+\sum_{j=3}^{\infty}e^{-0.5{je^e}}
\\
&\leq {e^{-0.5}}+  {e^{-0.5e}}+ {e^{-0.5e^e}}+\frac1{e^{e^e}-1}
\\
&\leq 1,
\end{align*}
and $1/4$ for $\sum_{j=1}^{h-1}e^{-1.5{e^{*(j-1)}}}$, whose validity can be seen as follows
\eq{
\sum_{j=1}^{\infty}e^{-1.5{e^{*(j-1)}}}
&={e^{-1.5}}+  {e^{-1.5e}}+ {e^{-1.5e^e}}
+\sum_{j=3}^{\infty}e^{-1.5{e^{*(j)}}}
\\
&\leq {e^{-1.5}}+  {e^{-1.5e}}+ {e^{-1.5e^e}}
+\sum_{j=3}^{\infty}e^{-1.5{je^e}}
\\
&\leq {e^{-1.5}}+  {e^{-1.5e}}+ {e^{-1.5e^e}}+\frac1{e^{3e^e}-1/e^{1.5e^e}}
\\
&\leq 0.2401.
}
Therefore, we obtain 
\[\frac{1}{d}\cdot \E{\sum_{i \in [d]}(Y(i)-Q(Y)(i))^2  \indic{|Y(i)|\leq M_{h-1}}} \leq v \cdot  \frac{9+3\ln s}{(k-1)^2}.\]

\end{proof} 
We remark that calculations present in this lemma are at the heart of the analysis of RATQ.
Also, this lemma will be useful for other applications discussed in Section~\ref{s:appl}.  

\paragraph{Step 4: Completing the proof.}
Recall that the random matrix $R$ defined in~\eqref{e:R} is  used at the encoder of RATQ to randomly rotate the input vector. We observe that the rotated vector has subgaussian entries. 
\begin{lem}\label{l:concentration_a.s.}
For an $\R^d$-valued random variable $Y$ such that $\norm{Y}_2^2 \leq B^2 \text{ a.s.}$, all
coordinates of the rotated vector $RY$ are centered subgaussian random
variables with a variance factor of ${B^2}/{d}$, whereby
\[
P(|{RY}(j)|\geq M) \leq 2  e^{-dM^2/2B^2}, \quad \forall\,  j \in [d],
\]
where $RY(j)$ is the $j^{th}$ coordinate of the rotated vector.
\end{lem}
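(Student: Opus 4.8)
The plan is to condition on $Y$ and exploit that the random diagonal sign matrix $D$ is independent of $Y$. Writing the $j$-th coordinate as $RY(j) = \frac{1}{\sqrt d}\sum_{i=1}^d H_{ji}\, D_{ii}\, Y(i)$ and using that every entry of the Walsh--Hadamard matrix has $|H_{ji}| = 1$, the collection $\{H_{ji} D_{ii}\}_{i\in[d]}$ is, for each fixed $j$, a family of independent Rademacher signs (each $\pm 1$ with probability $1/2$), independent of $Y$. Thus, conditionally on $Y$, the coordinate $RY(j)$ is a weighted sum $\sum_{i} \epsilon_i a_i$ of independent Rademacher variables with weights $a_i = Y(i)/\sqrt d$.

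First I would apply Hoeffding's lemma for Rademacher variables, $\E{e^{\lambda \epsilon_i}} = \cosh(\lambda) \le e^{\lambda^2/2}$, together with conditional independence, to get
\[
\E{e^{\lambda RY(j)} \mid Y} = \prod_{i=1}^d \cosh(\lambda a_i) \le \exp\!\Big(\frac{\lambda^2}{2}\sum_{i=1}^d a_i^2\Big) = \exp\!\Big(\frac{\lambda^2 \norm{Y}_2^2}{2d}\Big).
\]
Since $\norm{Y}_2^2 \le B^2$ almost surely, the right-hand side is bounded by the deterministic quantity $\exp(\lambda^2 B^2/(2d))$; taking expectations over $Y$ via the tower property gives $\E{e^{\lambda RY(j)}} \le \exp(\lambda^2 B^2/(2d))$ for all $\lambda \in \R$. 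The same conditioning argument yields $\E{RY(j)} = \E{\frac{1}{\sqrt d}\sum_i \E{\epsilon_i}\, Y(i)} = 0$, so by Definition~\ref{d:subg} each coordinate $RY(j)$ is centered subgaussian with variance factor $v = B^2/d$. Applying Lemma~\ref{l:standar_subg} with this variance factor then gives $P(|RY(j)| \ge M) \le 2 e^{-dM^2/(2B^2)}$ for every $j \in [d]$.

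There is essentially no serious obstacle here; the one point that needs care is the conditioning bookkeeping -- the randomness of $D$ is independent of the (possibly random) input $Y$, which is what permits treating $Y$ as fixed when invoking the Rademacher bound and then uniformizing over $Y$ through the almost-sure norm bound $\norm{Y}_2 \le B$. An alternative to the explicit $\cosh$ computation would be to observe that a $\pm 1$ variable scaled by $a_i$ is subgaussian with variance factor $a_i^2$ and that variance factors of independent subgaussians add, but the direct product-of-$\cosh$ estimate is the cleanest route.
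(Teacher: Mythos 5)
Your proof is correct and follows essentially the same route as the paper's: both condition on $Y$, observe that $RY(j)$ is then a weighted sum of independent Rademacher signs with weights $Y(i)/\sqrt{d}$, bound the conditional MGF via $\cosh(\lambda a)\le e^{\lambda^2 a^2/2}$, uniformize over $Y$ using $\norm{Y}_2\le B$ a.s., and finish with the standard subgaussian tail bound from Lemma~\ref{l:standar_subg}. No meaningful differences.
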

\noindent The proof uses similar calculations as~\cite{ailon2006approximate} and~\cite{suresh2017distributed}; it is deferred to the Appendix \ref{ap:R_subg}. 

Intuitively, the Lemma~\ref{l:concentration_a.s.} highlights the fact that overall energy $\norm{Y}_2^2$ in the input vector $Y$ is divided equally among all the coordinates after random rotation.

\paragraph{The worst-case second moment of RATQ.}
Note that by the description of RATQ which will be denoted by $Q_{{\tt at}, R}(RY),$ we have that
\[Q_{{\tt at}, R}(Y) =R^{-1} Q_{{\tt at}, I}(RY), \]
where $Q_{{\tt at}, I}$ is as defined in \eqref{e:Q_at_I}. Thus,
\begin{align}\label{e:Q_at}
Q_{{\tt at}, I}(RY)=[Q_{{\tt at}}(RY_{1,s})^{T}, \cdots,  Q_{{\tt
      at}}(RY_{\ceil{d/s},s})^{T}]^T,
      \end{align} where
the subvector $RY_{i,s}$ is given by
\begin{align*}\label{e:ithsubvector}
RY_{i, s}=[RY((i-1)s+1), \cdots, RY(\min\{is,d\})]^T. 
\end{align*}

To compute $\alpha(Q_{{\tt at}, R}(Y))$, we will first compute the
second moment for the output of RATQ. Specifically, using the fact $R$
is a unitary transform, we obtain
\eq{
  \E{\norm{Q_{{\tt at}, R}(Y)}_2^2}&=\E{\norm{R^{-1}Q_{{\tt at}, I}(RY)}_2^2}
  \\
  &=\E{\norm{Q_{{\tt at}, I}(RY)}_2^2}
  \\ 
  &=\sum_{j \in [d] } \E{(Q_{{\tt at}, I}(RY)(j))^2}
  \\
&=\sum_{i=1}^{\ceil{\frac{d}{s}}}\sum_{j = (i-1)s+1}^{\min\{is,d\}} \E{(Q_{{\tt at}, I}(RY)(j))^2}.
}

\new{We now observe that for our choice of $m$ and $h$ for RATQ given by \eqref{e:RATQ_levels},  we have 
\[
M_{h-1}^2\geq m(e^{*{\log_{e}^*(d/3)}})=(3B^2/d).(d/3)=B^2.
\]
Using this observation and noting that $R$ is a unitary matrix, 
we have that 
\[
\indic{\{\norm{RY}_{2} \leq M_{h-1}\}}=1 \text{ a.s.}.
\]
Also, noting that $|RY(j)| \leq \norm{RY}_{2}=\norm{Y}_2=B$  a.s., for all $j\in [d]$,
we get
\begin{equation}\label{e:RATQ_Unbiased}
\indic{\{|RY(j)| \leq M_{h-1}\}} =1 \text{ a.s.}, \forall j \in [d].
\end{equation}
Proceeding with these observations, we get
\eq{
  \E{\norm{Q_{{\tt at}, R}(Y)}_2^2}
& \leq \sum_{i=1}^{ \ceil{\frac{d}{s}}}\sum_{j =
    (i-1)s+1}^{\min\{is,d\}}
  \E{(Q_{{\tt at}, I}(RY)(j))^2 \indic{\{|RY(j)| \leq M_{h-1}\}}}\\
  & = \sum_{i=1}^{ \ceil{\frac{d}{s}}}\sum_{j =
    (i-1)s+1}^{\min\{is,d\}}
  \E{(Q_{{\tt at}, I}(RY)(j)-RY(j)+RY(j))^2 \indic{\{|RY(j)| \leq M_{h-1}\}}}\\
  &\leq  \sum_{i=1}^{ \ceil{\frac{d}{s}}}\sum_{j =
    (i-1)s+1}^{\min\{is,d\}}
  \E{\left((Q_{{\tt at}, I}(RY)(j)-RY(j))^2+ RY(j)^2\right) \indic{\{|RY(j)| \leq M_{h-1}\}}} ,
}
where the previous inequality uses the fact that, under the event $\{|RY(j)|\leq M_{h-1}\}$,
$Q_{{\tt at}, I}(RY)(j)$ is an unbiased estimate of $RY(j)$.} Namely,
\[
\E{Q_{{\tt at}, I}(RY)(j)\indic{\{|RY(j)|
\leq M_{h-1}\}}\mid R, Y} = \E{RY(j)\indic{\{|RY(j)|
\leq M_{h-1}\}}\mid R, Y}.
\] 
Therefore, noting that $R$ is a unitary matrix, we have  
\begin{align*}
\E{\norm{Q_{{\tt at}, R}(Y)}_2^2} \leq  \E{\sum_{j \in [d]}(RY(i)-Q_{{\tt at}, I}(RY)(j))^2  \indic{\{|RY(j)|\leq M_{h-1}\}}} +\E{\norm{Y}_2^2}.
\end{align*}
\new{
To bound the first term on the right-side we have the following lemma,
which will also be useful later in Section~\ref{s:distributed_mean}. 
\begin{lem}\label{l:useful}
For an $\R^d$-valued random variable $Y$ such that $\norm{Y}_2^2 \leq B^2$ {a.s.}. Then, for $m$ and $m_0$ set to be $3B^2/d$ and $(2B^2/d) \ln s$, respectively, we have that 
\eq{
  \E{\sum_{j \in [d]}(RY(i)-Q_{{\tt at}, I}(RY)(j))^2  \indic{\{|RY(j)|\leq M_{h-1}\}}} \leq B^2 \cdot  \frac{9+3\ln s}{(k-1)^2}.
}
\end{lem}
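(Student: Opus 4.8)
The plan is to obtain Lemma~\ref{l:useful} as an immediate consequence of the two preceding results, Lemma~\ref{l:concentration_a.s.} and Lemma~\ref{l:subg_mse}, applied with the rotated vector $RY$ playing the role of the generic input. First I would invoke Lemma~\ref{l:concentration_a.s.}: since $\norm{Y}_2^2 \leq B^2$ almost surely, every coordinate $RY(j)$, $j \in [d]$, is a centered subgaussian random variable with variance factor $v \eqdef B^2/d$. The key observation is that the parameter choices $m = 3B^2/d$ and $m_0 = (2B^2/d)\ln s$ prescribed in the statement coincide with $m = 3v$ and $m_0 = 2v\ln s$, which are precisely the settings under which Lemma~\ref{l:subg_mse} is established.

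Next I would apply Lemma~\ref{l:subg_mse} with input vector $RY$ (whose coordinates are marginally subgaussian with variance factor $v$, as just noted) and quantizer $Q = Q_{{\tt at}, I}$ as defined in~\eqref{e:Q_at_I}. This gives
\[
\frac{1}{d} \cdot \E{\sum_{j \in [d]}\big(RY(j) - Q_{{\tt at}, I}(RY)(j)\big)^2 \indic{\{|RY(j)| \leq M_{h-1}\}}} \leq v \cdot \frac{9 + 3\ln s}{(k-1)^2}.
\]
Multiplying through by $d$ and substituting $v = B^2/d$ yields exactly the asserted bound.

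The point requiring care --- rather than a genuine obstacle --- is that Lemma~\ref{l:subg_mse} is stated for a vector whose coordinates are each subgaussian but not assumed independent; inspecting its proof confirms that only the marginal subgaussian tail estimates (combined with a union bound within each subvector) are used, and this is exactly what Lemma~\ref{l:concentration_a.s.} supplies for $RY$, since the random diagonal signs make each $RY(j) = \tfrac{1}{\sqrt d}\sum_k H_{jk} D_{kk} Y(k)$ subgaussian conditionally on $Y$, hence unconditionally. One should also note that the internal randomness of $Q_{{\tt at}, I}$ is independent of $(R, Y)$, so the expectation in Lemma~\ref{l:subg_mse} agrees with the one in the present statement. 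With these remarks in place the argument is a one-line substitution.
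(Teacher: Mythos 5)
Your proof is correct and follows the paper's argument exactly: invoke Lemma~\ref{l:concentration_a.s.} to conclude each coordinate of $RY$ is centered subgaussian with variance factor $v = B^2/d$, observe that the stated $m, m_0$ match the settings of Lemma~\ref{l:subg_mse}, and then apply that lemma. Your additional remarks on why Lemma~\ref{l:subg_mse} applies without independence across coordinates and why the quantizer's internal randomness is independent of $(R,Y)$ are a useful elaboration of points the paper leaves implicit.
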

\begin{proof}
  By Lemma~\ref{l:concentration_a.s.} we have that all coordinates
  $RY(j)$ are centered subgaussian random variable with variance
  factor $B^2/d$.  Thus, the parameters $m$ and $m_0$ of RATQ set as
  in~\eqref{e:RATQ_levels}, and equal $3v$ and $2v \ln s$, respectively,
  where $v$ is the variance factor of each subgaussian coordinate. The result follows by invoking  Lemma~\ref{l:subg_mse}.
\end{proof}
}
Therefore, for any $Y$ such that $\norm{Y}_2^2 \leq B^2$, we have
\eq{
\E{\norm{Q_{{\tt at}, R}(Y)}_2^2} \leq  B^2 \cdot  \frac{9+3\ln s}{(k-1)^2} + B^2
,}
whereby
\[
\alpha_0(Q_{{\tt at}, R})\leq B\sqrt{ \frac{9 +
3 \ln s}{(k-1)^2} + 1}.
\]

\paragraph{The worst-case bias of RATQ.} By \eqref{e:RATQ_Unbiased} we have that the input always remains
in the dynamic-range of the quantizer, resulting in unbiased quantized
values. In other words,  $\beta_0(Q_{{\tt at, R}})=0$.

\subsection{Proof of Lemma \ref{l:AGUQ_alpha_beta}}
\label{s:AGUQ}
We first note AGUQ is used to quantize a scalar $Y$.
It follows from the description of the quantizer that
\begin{equation}\label{e:MeanAGUQ}
\mathbbm{1}_{\{|{Y}| \leq M_{g, h_g-1}\}} \E{Q_{\tt a}(Y) |Y} = \mathbbm{1}_{\{|{Y}| \leq M_{g, h_g-1}\}}   Y,
\end{equation}
and that\footnote{Once again, this follows from our convention that the outflow symbol is evaluated to $0$.}
 \begin{equation}\label{e:MeanAGUQSec}
 \mathbbm{1}_{\{|{Y}| > M_{g, h_g-1}\}} Q_{\tt a}(Y)  = 0.
 \end{equation} 
 
\paragraph{The worst-case second moment of AGUQ.}
Towards evaluating $\alpha(Q_{{\tt a}})$ for AGUQ,  for any $Y \in \R$ we have
\begin{align}
\nonumber
\E{Q_{\tt a}(Y)^2} &= \E{Q_{\tt a}(Y)^2\mathbbm{1}_{\{|Y| \leq M_{g, h_g-1}\}}}+\E{Q_{\tt a}(Y)^2\mathbbm{1}_{\{|Y| > M_{g, h-1}\}}}\\
\nonumber
&= \E{(Q_{\tt a}(Y)-Y+Y)^2\mathbbm{1}_{\{|Y| \leq M_{g, h_g-1}\}}}+\E{Q_{\tt a}(Y)^2\mathbbm{1}_{\{|Y| > M_{g, h_g-1}\}}}
\\ \nonumber
&= \E{(Q_{\tt a}(Y)-Y)^2\mathbbm{1}_{\{|Y| \leq M_{g, h_g-1}\}}}+\E{Y^2\mathbbm{1}_{\{|Y| \leq M_{g, h_g-1}\}}},
\nonumber
\end{align}
where the last identity uses
\eqref{e:MeanAGUQSec}, and the fact that $\E{(Q_{\tt
    a}(Y)-Y)Y\mathbbm{1}_{\{|Y| \leq M_{g, h-1}\}}|Y}=0,$ which
follows from \eqref{e:MeanAGUQ}.  From Lemma \ref{l:sup3} it follows
that
\eq{ \E{(Q_{{\tt a}}(Y)-Y)^2\mathbbm{1}_{\{|Y| \leq M_{g,
        h-1}\}}} \leq \frac{1}{4(k_g-1)^2}\left(B^2+ \sum_{j=1}^{h-1}
  M_j^2P\left(|Y| > M_{g, j-1}\right)\right).
}
By Markov's
inequality we get that for any random variable $Y$ with $\E{Y^2}\leq B^2$, we have
$P(|Y| > M_{g, j-1})\leq B^2/M^2_{g, j-1},$ which further leads to
\eq{
  \E{(Q_{{\tt a}}(Y)-Y)_2^2\mathbbm{1}_{\{|Y| \leq
    M_{g, h-1}\}}}  &\leq \frac{B^2}{4(k_g-1)^2}+ \sum_{j=1}^{h_g-1}
 \frac{M_{g,j}^2 }{4(k_g-1)^2} \frac{B^2}{M_{g,j-1}^2}\\
 &=  \frac{B^2}{4(k_g-1)^2}+ \frac{a_g(h_g-1)B^2}{4(k_g-1)^2}.
}
Therefore, we have
\eq{
\E{Q_{\tt a}(Y)^2} \leq \frac{B^2}{4(k_g-1)^2}+ \frac{a_g(h_g-1)B^2}{4(k_g-1)^2} +\E{Y^2\mathbbm{1}_{\{|Y| \leq M_{g, h-1}\}}}.
}
The result follows upon taking the supremum of the left-side over all
random variables $Y$ with $\E{Y^2} \leq B^2$.

\paragraph{The worst-case bias of AGUQ.}
Towards evaluating $\beta(Q_{\tt a})$, we note first using Jensen's inequality that
\eq{
  \big|\E{Q_{\tt a}(Y)-Y}\big| 
  &\leq \E{\big|\E{Q_{\tt a}(Y)-Y|Y}\big|}.
}
Then, for $Y$ with $\E{Y^2}\leq B^2$, using \eqref{e:MeanAGUQ} and Markov's inequality, we get
\begin{align}\label{e:conditionBeta_AGUQ}
\nonumber \E{|\E{Q_{\tt a}(Y)-Y|Y}|}
&= \E{|Y|\mathbbm{1}_{\{|Y| \geq M_{g, h-1}\}}}\\
\nonumber &\leq \sqrt{\E{Y^2}P(|Y| \geq M_{g, h-1})}\\
&\leq \frac{B^2}{M_{g, h-1}}.
\end{align}
Therefore, for any $Y$ with $\E{Y^2}\leq B^2$, we have 
\eq{
  \big|\E{Q_{\tt a}(Y)-Y}\big|  \leq
\sup_{Y\geq 0 \text{a.s.}: \E{Y^2}\leq B^2}\E{\big|\E{Q_{\tt a}(Y)-Y|Y}\big|}\leq
  \frac{B^2}{M_{g, h-1}}.
}
The result follows upon taking the supremum of left-side over all
random variables $Y$ with $\E{Y^2} \leq B^2$.  \qed

\subsection{Proof of Theorems \ref{t:lb_1} and \ref{t:lb_2}}\label{s:lbproof}
Before we proceed with our lower bounds, we will set up some
notation. We consider quantizers of the form
\[
Q(Y)=Q_g(\norm{Y}_2)Q_s(Y/\norm{Y}_2).
\] 
Let $W(\cdot |y)$, $W_g(\cdot| y)$, and $W_s(\cdot |y)$, respectively,
denote the distribution of the output of quantizers $Q(y)$, $Q_g(y),$ and $Q_s(y)$.
We prove a general lower bound for a quantizer satisfying Assumptions
1-3 in Section~\ref{s:ug} in terms of the precision $r$; Theorems~\ref{t:lb_1} and~\ref{t:lb_2}
are obtained as corollaries of this
general lower bound. 
\begin{thm}\label{t:lb}
  Suppose that $\X$ contains the set $\{x\in \R^d: \norm{x}_2
  \leq D/2\}$.  
  Consider a gain-shape quantizer $Q$ of precision $r$ satisfying the  Assumptions 1-3 in Section~\ref{s:ug}. Then, there exists an oracle $(f, O)\in \oO$ such that for any optimization protocol $\pi$ using $T$
  iterations, we have 
  \[
  \ep(f, \pi^{QO}) \geq 
\frac{DB}{2\sqrt{2}}\min\bigg\{ \frac{1 }{2^r} ,\frac 1{4\cdot2^{r/3}T^{1/3}},\frac{1}{2(2T)^{1/3}}  \bigg\}.
\]
\end{thm}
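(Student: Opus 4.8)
The plan is to fix the quantizer $Q$ (with its parameters $M$, the dynamic range, and $m$, the innermost level, satisfying $M/2^r\le m\le M$ by Assumption 3) and to reduce to a one‑dimensional heavy‑tailed instance, reading off the three regimes of the claimed $\min$ from a single scale parameter in that instance. Fix a hidden sign $\nu\in\{-1,+1\}$ and, for a slope $\gamma>0$ to be chosen, let $f_\nu(x)=\gamma\,\nu\,x^{(1)}$ (depending only on the first coordinate). Since $\{x:\norm{x}_2\le D/2\}\subseteq\X$ we have $\min_{x\in\X}f_\nu(x)\le -\gamma D/2$, hence $\ep(f_\nu,\pi^{QO})\ge\gamma\,\E[\,D/2+\nu\,x_T^{(1)}\,]$, and as in the proof of Theorem~\ref{t:e_r_LB} it suffices to exhibit an oracle under which the hypotheses $\nu=\pm1$ are statistically indistinguishable from the $T$‑round transcript. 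For the oracle I would take $\hat g(x)=G\cdot S\cdot e_1$ with $G\ge0$ a \emph{gain} independent of a \emph{direction} $S\in\{-1,+1\}$, where $G=B/\sqrt{q}$ with probability $q$ and $G=0$ otherwise, and $S=+1$ with probability $(1+\nu\delta)/2$. Then $\E\|\hat g\|_2^2=B^2$ and $\E[\hat g]=B\sqrt{q}\,\delta\,\nu\,e_1$, so $\gamma=B\sqrt q\,\delta$, with free knobs the ``firing probability'' $q\in(0,1]$ and the bias $\delta\in(0,1]$; crucially the gain takes the \emph{single} nonzero value $B/\sqrt q$, and $q$ will always be chosen so that this value lies either below $m$ or strictly above $M$.

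The heart of the argument is a structural observation about when a round's transcript can depend on $\nu$. If $\hat g=0$ the decoder sees $Q(0)$, whose law is independent of $\nu$; if $B/\sqrt q>M$, Assumption 2 forces the output to the fixed law $P_\emptyset$, again independent of $\nu$; and if $B/\sqrt q\le m$, Assumption 3(a) gives $Q_g(B/\sqrt q)\in\{0,m\}$, so either the output is $Q_g(\cdot)Q_s(\cdot)$ evaluated with gain $0$ (independent of $\nu$) or the gain is rounded to $m$, which by the uniformity inequality 3(b) (absorbing its constant) happens with probability at most $(B/\sqrt q)/m$. Hence the probability that round $t$'s transcript depends on $\nu$ is at most $\rho:=q\cdot\min\{1,\ (B/\sqrt q)/m\}$, and conditioned on a round being ``informative'' the only $\nu$‑dependence is through the single $\delta$‑biased coin $S$. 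Running the standard Le\,Cam/Fano reduction (exactly as for Theorem~\ref{t:e_r_LB}), the mutual information between $\nu$ and the transcript is at most $T\rho\cdot O(\delta^2)$; once this is below an absolute constant, the hypotheses are indistinguishable and $\ep(f_\nu,\pi^{QO})\ge\gamma D/8$ for at least one sign.

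It then remains to choose $\gamma$ (equivalently $q,\delta$) to maximize $\gamma D$ subject to $q,\delta\le1$, the ``avoid $(m,M]$'' requirement, and $T\rho\delta^2\le c$. Parking the gain above $M$ (take $q<B^2/M^2$, $\delta=1$) makes the transcript \emph{entirely} $\nu$‑free and lets $\gamma$ rise to $\approx B^2/M$, giving $\ep\gtrsim B^2D/M$; since $m\ge M/2^r$, when $M\le 2^rB$ (in particular whenever $m<B$) this already yields the $2^{-r}$ term. Otherwise $m\ge M/2^r>B$, and parking the gain at $B/\sqrt q\le m$ with a weakly biased direction gives, after imposing $T\rho\delta^2\le c$, a bound of the form $\gamma\lesssim\sqrt{Bm/T}\ge\sqrt{BM/(2^rT)}$; balancing this against the overflow bound $\gamma\lesssim B^2/M$ over the adversary's unknown $M$ — the worst choice being $M\asymp B(2^rT)^{1/3}$ — produces the $2^{-r/3}T^{-1/3}$ term, while the residual regime where even $\delta=1$ buys no hiding (because $m$ is too small relative to $B\sqrt T$) leaves only the $T^{-1/3}$ term, together with the constant $1/(2\sqrt2)$. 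The delicate parts, where I would concentrate effort, are (i) making the information inequality for the weakly biased direction quantitative with explicit constants — this is the only place Assumption 3 is genuinely used, through the $(B/\sqrt q)/m$ bound and hence through $m\ge M/2^r$ — and (ii) the case bookkeeping of the optimization in $q,\delta$ against the adversary's worst $M$, which is precisely what manufactures the three‑way $\min$; the rest is the routine indistinguishability reduction already carried out for Theorem~\ref{t:e_r_LB}.
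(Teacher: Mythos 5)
Your construction differs from the paper's in two ways: you use a linear test function $f_\nu(x)=\gamma\nu x^{(1)}$ rather than the shifted absolute value, and your oracle's gain takes a \emph{single} nonzero value $B/\sqrt{q}$ (with probability $q$, else zero), whereas the paper's oracle has \emph{two} nonzero gain magnitudes: a high-probability ``baseline'' $B/\sqrt{2}$ and a low-probability ``spike'' $B/(\sqrt{2}\delta^y)$, both below $m$. The first difference is cosmetic, but the second one is fatal to your argument.

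The gap is in your bound $\rho \le q\cdot\min\{1,(B/\sqrt{q})/m\}$, i.e.\ the claim that $P(Q_g(B/\sqrt{q})=m)\le (B/\sqrt{q})/m$ ``by Assumption 3(b), absorbing its constant.'' Assumption 3(b) says only that for $0\le t_1\le t_2\le m$,
\[
\frac{P(Q_g(t_2)=m)}{P(Q_g(t_1)=m)}\le \frac{t_2}{t_1},
\]
equivalently that $P(Q_g(t)=m)/t$ is non-increasing in $t$. Taking $t_1=B/\sqrt q$, $t_2=m$, this yields $P(Q_g(B/\sqrt q)=m)\ge \frac{B/\sqrt q}{m}\,P(Q_g(m)=m)$ --- a \emph{lower} bound, the wrong direction. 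No upper bound on $P(Q_g(t)=m)$ in terms of $t/m$ is implied. Concretely, the quantizer with $Q_g(t)=m$ deterministically for every $t\in(0,m]$ satisfies Assumptions 1--3 (the ratio is always $1\le t_2/t_1$), yet $P(Q_g(B/\sqrt q)=m)=1$. For that quantizer your $\rho$ is just $q$, and the indistinguishability constraint $T\rho\,\delta^2\lesssim 1$ forces $Tq\delta^2\lesssim 1$, hence $\gamma=B\sqrt q\,\delta\lesssim B/\sqrt T$. That recovers only a $T^{-1/2}$ lower bound, strictly weaker than the $T^{-1/3}$ term the theorem asserts.

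The paper's two-point gain distribution is precisely what rescues the $T^{-1/3}$ rate. In their $\chi^2$ computation, the spike's $W_g(m\mid B/(\sqrt2\delta^y))$ appears (squared) in the numerator while the baseline's $W_g(m\mid B/\sqrt2)$ appears in the denominator; Assumption 3(b) is then applied to the \emph{ratio} $W_g(m\mid B/(\sqrt2\delta^y))/W_g(m\mid B/\sqrt2)\le\delta^{-y}$, and the remaining $W_g(m\mid\cdot)$ factor is bounded trivially by $1$. That is the only shape of argument that Assumption 3(b) supports. To repair your proof you would need a second, higher-probability nonzero gain value at or below $m$ so that the informative event's probability sits in the denominator of the divergence and cancels against the numerator, as in the paper's construction; as written, your single-spike oracle cannot make Assumption 3(b) bite. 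The ``overflow'' branch of your bookkeeping (gain parked above $M$, yielding the $2^{-r}$ term via $\gamma\approx B^2/M$) is fine, since it relies only on Assumption 2.
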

\begin{proof}
Consider the function $f_\alpha: \R^d \to \R$, $\alpha\in\{-1,1\}$ given by
  \[
f_\alpha(x) \eqdef \delta \frac{B}{\sqrt{2}}|x(1)- \alpha D/2|, \quad \alpha \in \{-1, 1\}.
\]
Note that the functions $f_1$ and $f_{-1}$ are convex and depend only on the first
coordinate of $x$. Further, for $x\in\X$, the gradient of $f_\alpha$
is $\delta \alpha B e_1/\sqrt{2}$, where $e_1$ is the vector $[1, 0, 0, \dots, 0]^T$.  We consider oracles $O_\alpha$, $\alpha\in\{-1,1\}$, that produce noisy gradient updates with distribution 
\eq{
 P_{\alpha}\left(\frac{B}{\sqrt{2}}e_1\right) =
   \frac{1-\delta^2}{2} , \quad  P_{\alpha}\left(\frac{-B}{\sqrt{2}}e_1\right) =
   \frac{1-\delta^2}{2} ,\quad
P_{\alpha}\bigg(\frac{\alpha B}{\sqrt{2}\delta}e_1\bigg) =
   \delta^2.
}  
 It is easy to check that the oracle outputs satisfy ~\eqref{e:asmp_unbiasedness}
and~\eqref{e:asmp_L2_bound} described in Section \ref{s:problemsetup}. That is, the output of $O_{\alpha}$ is an unbiased estimate of the subgradient of $f_{\alpha}$, and the expected Euclidean norm square of the oracle output is bounded by $B^2$. 

We now take recourse to the standard reduction of optimization to
hypothesis testing: To estimate the optimal value of $f_{1}$ and
$f_{-1}$ to an accuracy $\delta$, the optimization protocol must
determine if the oracle outputs are generated by $P_{1}$ or
$P_{-1}$.  However in order to distinguish between $P_{1}$ or
$P_{-1}$, the optimization protocol only has access to the quantized oracle outputs.  Specifically, the protocol sees the samples from $Q(Y)$ at every time step, where $Y$ has distribution either $P_1$ or $P_{-1}$.

Denoting by $P_\alpha W$ the distribution
of the output  $ Q(Y)$ when the input $Y$ is
generated from $P_\alpha$,  
 we have from the standard reduction (see, for instance, \cite[Theorem 5.2.4]{duchi2017introductory}) that
 \[
\max_{\alpha\in\{-1,1\}}\ep(f, \pi^{QO}) \geq 
\frac{DB}{2\sqrt{2}}\delta\bigg( 1- \sqrt{\frac{T}{2}\chi^2(P_1W,P_{-1}W}\bigg), 
\]
where $\displaystyle{\chi^2(P, Q)=\sum_{x}(P(x)-Q(x))^2/Q(x)}$ denotes the chi-squared divergence between $P$ and $Q$.

Note that Assumption 2 on the structure of the quantizer implies that when $M< B/{\delta\sqrt{2}}$, the distributions $P_1W$ and $P_{-1}W$ are the same. It follows that for every $\delta<\min\{\sqrt{B^2/2M^2}, 1\}$, the
left-side of the previous inequality exceeds $(DB/2\sqrt{2})\delta$, whereby
\begin{align}
\max_{\alpha\in\{-1,1\}}\ep(f, \pi^{QO})
\geq \frac{DB}{2\sqrt{2}}\min \bigg\{\frac{B}{\sqrt{2}M}, 1\bigg\}.
\label{e:bound1}
\end{align}

Next, we consider the following modification of the previous
construction in the case when ${B}/\sqrt{2}< m$:  
 \eq{ 
  P_{\alpha}\left(\frac{B}{\sqrt{2}}e_1\right) =
   \frac{1-\delta^{1+y}}{2} , \quad  P_{\alpha}\left(\frac{-B}{\sqrt{2}}e_1\right) =
   \frac{1-\delta^{1+y}}{2} ,\quad
P_{\alpha}\bigg(\frac{\alpha B}{\sqrt{2}\delta^y}e_1\bigg) =
   \delta^{1+y}.
}
for $y\in [0,1]$. Once again, the oracle outputs satisfy ~\eqref{e:asmp_unbiasedness}
and~\eqref{e:asmp_L2_bound} described in Section \ref{s:problemsetup}. In this case, the  vector $Y \sim P_{\alpha}$ has entries with $\ell_2$ norm at the most $ B/(\sqrt{2}\delta^y)$. We set $y$ such that this value is less than $m$ and $\chi^2(P_1W,P_{-1}W)$ is minimized.  Note that if $B/(\delta^y\sqrt{2d})< m$, then ${\tt supp}(Q_{g}(\norm{a})) \subseteq \{0, m\}$ for all the $a$'s in the support of $P_1$ or $P_{-1}$. 
      
      For all $z \neq 0$, $z \in  {\tt supp}(Q(a))$, when $a$ is in the support of $P_1$ or $P_{-1}$, we have
\eq{W\bigg(z\bigg|a \bigg)=W_g\bigg(m|\norm{a}_2\bigg) W_s\bigg(\frac{z}{m}|a/\norm{a}_2\bigg).}
Therefore,
 \eq{
 P_{1}W(z)-P_{-1}W(z) &=\delta^{1+y}  W_g\bigg(m|\frac{B}{\sqrt{2}\delta^y}\bigg) \left( W_s\bigg(\frac{z}{m}|e_1\bigg)   - W_s\bigg(\frac{z}{m}|-e_1\bigg)\right)
 }
\eq{
P_{-1}W(z) &\geq \frac{1-\delta^{1+y}}{2} W_g\bigg(m|\frac{ B}{\sqrt{2}}\bigg) W_s\bigg(\frac{z}{m}|e_1\bigg) + \frac{1-\delta^{1+y}}{2}  W_g\bigg(m|\frac{ B}{\sqrt{2}}\bigg) W_s\bigg(\frac{z}{m}|-e_1\bigg). }
Using the preceding two inequalities 
\eq{
\frac{(P_{1}W(z)-P_{-1}W(z))^2}{P_{-1}W(z)} &\leq  \frac{\delta^{2+2y}  W_g\bigg(m|\frac{B}{\sqrt{2}\delta^y}\bigg)^2 \left( W_s\bigg(\frac{z}{m}|e_1\bigg)   - W_s\bigg(\frac{z}{m}|-e_1\bigg)\right)^2}{\frac{1-\delta^{1+y}}{2} W_g\bigg(m|\frac{ B}{\sqrt{2}}\bigg) W_s\bigg(\frac{z}{m}|e_1\bigg) + \frac{1-\delta^{1+y}}{2}  W_g\bigg(m|\frac{ B}{\sqrt{2}}\bigg) W_s\bigg(\frac{z}{m}|-e_1\bigg)}\\
&\leq \frac{2\delta^{2+2y}}{1-\delta^{1+y}} \cdot \frac{ W_g\bigg(m|\frac{B}{\sqrt{2}\delta^y}\bigg)^2 \left(W_s\bigg(\frac{z}{m}|e_1\bigg)   + W_s\bigg(\frac{z}{m}|-e_1\bigg)\right)}{W_g\bigg(m|\frac{ B}{\sqrt{2}}\bigg)} \\
&\leq  \frac{2\delta^{2+y}}{1-\delta^{1+y}} \cdot  W_g\bigg(m|\frac{B}{\sqrt{2}\delta^y}\bigg) \left(W_s\bigg(\frac{z}{m}|e_1\bigg)   + W_s\bigg(\frac{z}{m}|-e_1\bigg)\right)\\
&\leq  \frac{2\delta^{2+y}}{1-\delta^{1+y}} \cdot \left(W_s\bigg(\frac{z}{m}|e_1\bigg)   + W_s\bigg(\frac{z}{m}|-e_1\bigg)\right),
}
\noindent where the third inequality uses Assumption 3b for the quantizer in Section \ref{s:ug}, i.e., it uses
\[
\frac{ W_g\bigg(m|\frac{B}{\sqrt{2}\delta^y}\bigg)}{W_g\bigg(m|\frac{ B}{\sqrt{2}}\bigg)} \leq \delta^{-y}
.
\]
For all $z = 0$, $z \in  {\tt supp}(Q(a))$, when $a$ is in the support of $P_1$ or $P_{-1}$, we have
\eq{W\bigg(0\bigg|a \bigg)=W_g\bigg(0|\norm{a}_2\bigg) + W_g\bigg(m|\norm{a}_2\bigg) W_s\bigg(0|a/\norm{a}_2\bigg).}

Therefore, by similar calculations for $z\neq 0$, we have
\eq{
\frac{(P_{1}W(0)-P_{-1}W(0))^2}{P_{-1}W(0)} &\leq \frac{\delta^{2+2y} W_g\bigg(m|\frac{B}{\sqrt{2}\delta^y}\bigg)^2 \left( W_s\bigg(0|e_1\bigg)   + W_s\bigg(0|-e_1\bigg)\right)^2}{(\frac{1-\delta^{1+y}}{2}) W_g\bigg(m|\frac{ B}{\sqrt{2}}\bigg) W_s\bigg(0|e_1\bigg) + (\frac{1-\delta^{1+y}}{2})  W_g\bigg(m|\frac{ B}{\sqrt{2}}\bigg) W_s\bigg(0|-e_1\bigg)}\\
&\leq  \frac{2\delta^{2+y}}{1-\delta^{1+y}} \left( W_s\bigg(0|e_1\bigg)   + W_s\bigg(0|-e_1\bigg)\right)
.}

In conclusion,
\eq{  
\chi^2(P_1W, P_{-1}W) \leq \frac{4\delta^{2+y}}{1-\delta^{1+y}}.
}
Now, if $\delta < 1/2,$ we have
\eq{  
\chi^2(P_1W, P_{-1}W) \leq 8\delta^{2+y}.
}
 Upon setting $\delta=(16 T)^{-1/(2+y)}$, which satisfies $\delta<1/2$ for all $T$, 
we get
\begin{align}
\max_{\alpha\in\{-1,1\}}\ep(f, \pi^{QO})\geq \frac{DB}{2\sqrt{2}}\delta( 1- \sqrt{4T\delta^{2+y}})
= \frac{DB}{4\sqrt{2}}\left(\frac{1}{16T}\right)^{\frac 1 {2+y}}.
\label{e:bound2}
\end{align}
But we can only set $\delta$ to this value if
\begin{align}
\frac{B}{\sqrt{2}}\cdot \left({16 T}\right)^{\frac y {2+y}}< m .
\label{e:y_constraint}
\end{align}
Thus, for each $y$ such that~\eqref{e:y_constraint} holds, we get~\eqref{e:bound2}.
Taking the the supremum of RHS in \eqref{e:bound2} over all  $y\in[0,1]$ such that \eqref{e:y_constraint} holds, we obtain whenever $B/\sqrt{2} \leq m,$
\begin{align*}
\max_{\alpha\in\{-1,1\}}\ep(f, \pi^{QO})\geq 
\frac{DB}{2\sqrt{2}} \cdot \min\left\{ \frac{1}{8}\sqrt{\frac{m\sqrt{2}}{BT}} , \frac{1}{2(2T)^{1/3}} \right\},
\end{align*}
where we use the following lemma proved in Appendix~\ref{app:elemineq}.
\begin{lem}\label{l:elemineq}
For $a, c > 0,$ and $b > 1$. 
\[\sup_{y \in [0,1]: a(b)^{y/(2+y)}< c.} a\left(\frac{1}{b}\right)^{\frac{1}{2+y}}= \min\left\{ \sqrt{\frac{ca}{b}}, \frac{a}{b^\frac{1}{3}}\right\}\]
\end{lem}
Upon combining this bound with~\eqref{e:bound1}, we obtain 
\[
\sup_{(f,O)\in \oO}\varepsilon( f_\alpha, \pi^{QO})\geq 
\frac{DB}{2\sqrt{2}}\max\left\{\min\left\{\frac{c m}{M}, 1\right\}, \min\left\{\frac{1}{8}\sqrt{\frac{1}{cT}}, \frac{1}{2(2T)^{1/3}}\right \}\mathbbm{1}_{\{c<1\}}\right\},
\]
where $c=B/(m\sqrt{2})$. By making cases $1\leq c$, $ \frac{1}{8(2T)^{1/3}} \leq c<1$, and $c < \frac{1}{8(2T)^{1/3}}$, and using the fact that for $a,b \geq 0$, $\max\{a,b\}\geq  a^{1/3}b^{2/3}$ in the second case, we get
 \[
\sup_{(f,O)\in \oO}\varepsilon( f_\alpha, \pi^{QO})\geq 
\frac{DB}{2\sqrt{2}}\min\bigg\{1, \frac{1 }{(M/m)} ,\frac{1}{4 (M/m)^{1/3}T^{1/3}},\frac{1}{2(2T)^{1/3}}  \bigg\}.
\]
By Assumption 3 in Section \ref{s:ug}, we know that $\frac{M}{m}\leq 2^r$. Therefore,
 \[
\sup_{(f,O)\in \oO}\varepsilon( f_\alpha, \pi^{QO})\geq 
\frac{DB}{2\sqrt{2}}\min\bigg\{ \frac{1 }{2^r} ,\frac{1}{4 (2)^{r/3}T^{1/3}},\frac{1}{2(2T)^{1/3}}  \bigg\}.
\]
 \end{proof}

Theorem~\ref{t:lb_2} follows as an immediate corollary;
Theorem~\ref{t:lb_1}, too, is obtained by noting that
\[
\sup_{(f,O)\in
  \oO}\varepsilon( f_\alpha, \pi^{QO})< \frac{3DB}{\sqrt{T}}
\]
holds only if  $\sqrt{T} < 2^r$.

\section{Other applications of our quantizers}\label{s:appl}
In this section we will discuss applications of our quantizers for other related problems. 
\subsection{RATQ and  distributed mean estimation}\label{s:distributed_mean}
We now discuss the performance of RATQ for distributed mean estimation
with limited communication, considered in
~\cite{suresh2017distributed}. To state our results formally, we
describe the setting. Consider $n$ vectors $\{x_i\}_{i=1}^{n}$ with
each $x_i$ in $\R^d$ and vector $x_i$ available to client $i$.  Each
client communicates to a fusion center using $r$ bits to enable the
center to compute the sample mean
\[
\bar{x}=\frac{1}{n}\sum_{i=1}^n x_i.
\]
\newest{We seek to design quantizers which can be used to express
  $x_i$s using $r$ bits each and yield a high accuracy estimate of
  $\bar{x}$ at the center.}


Specifically, consider two cases for the quantization schemes: The
case of {\em fixed-length} codes where client $i$ uses a randomized
encoding mapping $e_i: \R^d\to \{0,1\}^r$, $i\in [n]$, and the center
uses a decoding mapping $d:\{0,1\}^{nr}\to \R^d$; and the case of {\em
  variable-length} codes where the encoder mappings are $e_i:\R^d\to
\{0,1\}^\ast$ and must satisfy $\E{|e_i(x_i)|}\leq r$ for each $x_i\in
\R^d$ and each $i\in[n]$, where $|b|$ denotes the length of a binary
vector $b$. A distributed mean estimation code $\pi$ is specified by
the encoder mappings $e_i$ and the decoder mapping $d$. We emphasize
that the encoder mappings and the decoder mapping are allowed to be
randomized using shared randomness; namely, we allow public-coin
simultaneous message passing protocols.  We denote the set of all
fixed-length codes by $\Pi(r)$ and the set of all variable-length
codes by $\Pi_\ast(r)$.
  
We measure the performance of a code $\pi$ by the mean square error
(MSE) between $\bar{x}$ and $\hat{\bar{x}}=d(e_1(x_1), ...,
e_n(x_n)))$, for a fixed input vector $x=(x_1, ...,x_n)$, given by
\[
\ep(\pi,x)=\E{\norm{\hat{\bar{x}}-\bar{x}}_{2} ^2}.
\] 
We consider a minmax setting where we allow the input vectors $x=(x_1,
..., x_n)$ to be chosen arbitrarily from the unit Euclidean ball
$B^d$. That is, consider the worst-case MSE over all vectors in $B^d$
given by
\[
\ep(\pi, B^d) = \max_{ x_i \in B^d, \forall i \in [n] }\ep(\pi,x)
\]
The minimum error attained by fixed-length codes is given by
\[
\ep(\Pi(r), B^d) = \min_{\pi \in \Pi(r)}\ep(\pi, B^d),
\]
and that by variable-length codes is given by
\[
\ep(\Pi_\ast(r), B^d) = \min_{\pi \in \Pi_\ast(r)}\ep(\pi, B^d),
\]

The following lower bound is from~\cite[Theorem
  5]{suresh2017distributed} (where it was shown using a construction
from~\cite{ZDJW:13}).
\begin{thm}[~\cite{suresh2017distributed, ZDJW:13}]
There exists a constant $t$ such that for every $r \leq ndt/4$ and $n
\geq 4/t$, we have
\[
\ep(\Pi_{\ast}(r), B^d) \geq \ep(\Pi_\ast(r), B^d)\geq
\frac{t}{4}\min\{1,\frac{d}{r}\}.
\]
\end{thm}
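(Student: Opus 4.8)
The first inequality needs no argument: a fixed-length code of blocklength $r$ is in particular a variable-length code of expected length $r$, so $\Pi(r)\subseteq\Pi_\ast(r)$ and hence $\ep(\Pi(r),B^d)\geq\ep(\Pi_\ast(r),B^d)$ (the leftmost quantity in the statement should read $\ep(\Pi(r),B^d)$). For the second inequality the plan is to follow the template of~\cite{ZDJW:13} as instantiated in~\cite[Theorem 5]{suresh2017distributed}: lower-bound the minimax risk by a Bayes risk, reduce estimation of $\bar x$ to a multiway hypothesis test over a packing, and control the communication bottleneck using the strong data-processing inequality of~\cite[Proposition 2]{duchi2014optimality} — the same tool already used in the proof of Theorem~\ref{t:e_r_LB}.

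Concretely, I would fix a parameter $\delta\in(0,1]$ to be tuned at the end and let $\V\subseteq\{-1,1\}^d$ be a $d/4$-packing in Hamming distance with $|\V|\geq 2^{cd}$, exactly as in the proof of Theorem~\ref{t:e_r_LB}. Draw $V$ uniformly from $\V$; conditioned on $V$, let $x_1,\dots,x_n$ be independent, with each $x_i$ having independent coordinates $x_i(j)\in\{-1/\sqrt d,\,1/\sqrt d\}$ and $\E{x_i(j)\mid V}=(\delta/\sqrt d)\,V(j)$. Then $\norm{x_i}_2=1$, so $x_i\in B^d$, while $\E{\bar x\mid V}=(\delta/\sqrt d)\,V$ with $\E{\norm{\bar x-\E{\bar x\mid V}}_2^2\mid V}\leq 1/n$. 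Since distinct packing points are $\ell_2$-separated by at least $\delta$ after the rescaling, a Chebyshev bound on the fluctuation of $\bar x$ shows that any $\hat{\bar x}$ with $\E{\norm{\hat{\bar x}-\bar x}_2^2}\ll\delta^2$ can be rounded to the nearest rescaled packing point to recover $V$ with error probability below $1/2$; contrapositively, if $V$ is not recoverable then some worst-case input forces $\ep\gtrsim\delta^2$.

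It then remains to show $V$ is not recoverable. Since the protocol is simultaneous, each message $Y_i\in\{0,1\}^r$ depends only on $x_i$ and the shared randomness (which is independent of $V$), and $x_1,\dots,x_n$ are conditionally independent given $V$; hence $I(V\wedge Y_1,\dots,Y_n)\leq\sum_{i=1}^n I(V\wedge Y_i)$. For each client the strong data-processing bound of~\cite{duchi2014optimality}, applied verbatim as in the proof of Theorem~\ref{t:e_r_LB}, gives $I(V\wedge Y_i)\leq c'\,\delta^2\min\{r,d\}$, so $I(V\wedge Y_1,\dots,Y_n)\leq c'\,n\,\delta^2\min\{r,d\}$. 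Fano's inequality then rules out recovery whenever $c'n\delta^2\min\{r,d\}\leq\tfrac12\log|\V|$, i.e. whenever $\delta^2\lesssim d/(n\min\{r,d\})$; choosing $\delta^2\asymp\min\{1,\,d/(n\min\{r,d\})\}$ — the regime in which the hypotheses $n\geq 4/t$ and $r\leq ndt/4$ keep the construction nondegenerate — then yields a lower bound of the claimed form $\gtrsim\min\{1,\,d/r\}$, with the absolute constant absorbed into $t$ and the dependence on $n$ tracked as in~\cite[Theorem 5]{suresh2017distributed}. I expect the one genuinely delicate ingredient to be the per-client contraction $I(V\wedge Y_i)\lesssim\delta^2\min\{r,d\}$: a transmitted bit carries only $O(\delta^2)$, rather than a full bit, of information about $V$, because $x_i$ is a $\delta$-biased coin rather than a clean copy of $V$. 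That inequality is exactly~\cite[Proposition 2]{duchi2014optimality}, so the rest is bookkeeping — the choice of $\delta$, the Chebyshev control of $\norm{\bar x-\E{\bar x\mid V}}_2$, and the Fano step.
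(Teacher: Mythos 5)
The paper does not prove this theorem; it imports it verbatim from \cite[Theorem 5]{suresh2017distributed} (which in turn instantiates a construction from \cite{ZDJW:13}), so there is no paper proof to compare against. Your reading of the typo is correct: the leftmost quantity should be $\ep(\Pi(r), B^d)$, and your one-line argument for the first inequality ($\Pi(r)\subseteq\Pi_\ast(r)$) is exactly what is intended.

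For the second inequality, your template (Bayes reduction, packing, per-client strong data-processing bound, tensorization over clients, Fano) is the right one, but the arithmetic at the end does not close. Your Fano step gives $\delta^2 \lesssim d/(n\min\{r,d\})$, so the Bayes-risk lower bound you can extract is $\gtrsim \min\{1, d/(n\min\{r,d\})\}$, which carries an extra factor of $1/n$ relative to the claimed $\min\{1, d/r\}$. You flag this only with the phrase ``dependence on $n$ tracked as in \cite[Theorem 5]{suresh2017distributed},'' which is not a resolution: either the theorem's $r$ is meant to be the \emph{total} communication budget $nr_{\mathrm{per\text{-}client}}$ (the corollary and Theorem~\ref{t:RATQ_distributed_mean} are consistent with this reading, while the problem setup in Section~\ref{s:distributed_mean} says per-client), in which case each client's SDPI cap should be $\min\{r/n,d\}$ and the sum becomes $\lesssim \delta^2\min\{r,nd\}$, or the per-client reading is intended and the bound as you derived it is genuinely weaker than the statement. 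Your sketch needs to pin down which convention is in force before the exponents match.

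A second, smaller gap is in the Chebyshev reduction. The sample mean $\bar x$ fluctuates around $\E{\bar x\mid V}$ with conditional mean-square $\Theta(1/n)$, so rounding $\hat{\bar x}$ to the nearest rescaled packing point succeeds only if $\delta^2 \gtrsim 1/n$ with a large enough implicit constant; otherwise the noise in $\bar x$ already swamps the packing separation $\delta$. This lower constraint on $\delta^2$ must be checked against the upper constraint coming from Fano, and in the regime $\min\{r,d\}$ comparable to $d$ the two bound $\delta^2$ to be $\asymp 1/n$, in which case the extractable risk lower bound $\gtrsim \delta^2 - C/n$ collapses. You do not say how the hypotheses $n \geq 4/t$ and $r \leq ndt/4$ are used to keep this nondegenerate, and this is exactly where the construction has to be tuned carefully (it is also where the constant $t$ earns its keep). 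As written, the sketch identifies the right ingredients but does not demonstrate that they combine to give the stated bound across the whole stated range of $r$.
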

As a corollary, we have the following alternative form of the same
lower bound.
\begin{cor}
For $\ep(\Pi_\ast(r), B^d)=O(1/n)$, we must have $r$ to be
$\Omega(nd)$.
\end{cor}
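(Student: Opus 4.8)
The plan is to read off the corollary directly from the preceding lower bound, which already states that $\ep(\Pi_\ast(r), B^d) \geq \tfrac{t}{4}\min\{1, d/r\}$ for every $r \leq ndt/4$ and every $n \geq 4/t$. First I would pin down the meaning of the hypothesis: ``$\ep(\Pi_\ast(r), B^d) = O(1/n)$'' means there is a constant $C$ and a budget $r = r(n)$ such that $\ep(\Pi_\ast(r(n)), B^d) \leq C/n$ for all sufficiently large $n$; the goal is to show $r(n) = \Omega(nd)$.

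Next I would argue by cases on the side condition of the Theorem. Fix $n$ large, in particular $n > \max\{4/t,\, 4C/t\}$. If $r(n) > ndt/4$, then $r(n) = \Omega(nd)$ and there is nothing more to do. Otherwise $r(n) \leq ndt/4$, so the Theorem applies and gives $\tfrac{t}{4}\min\{1, d/r(n)\} \leq \ep(\Pi_\ast(r(n)), B^d) \leq C/n$. Since $n > 4C/t$ forces $\tfrac{t}{4} > C/n$, the minimum cannot equal $1$, hence $\min\{1, d/r(n)\} = d/r(n)$; substituting yields $\tfrac{t}{4}\cdot \tfrac{d}{r(n)} \leq \tfrac{C}{n}$, i.e.\ $r(n) \geq \tfrac{t}{4C}\, nd$.

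Finally I would combine the two cases: for all $n > \max\{4/t,\, 4C/t\}$ we have $r(n) \geq \min\{t/4,\, t/(4C)\}\cdot nd$, and since the coefficient is a positive constant independent of $n$ and $d$, this is exactly $r = \Omega(nd)$. There is no real obstacle here; the only point requiring care is to respect the Theorem's side conditions $r \leq ndt/4$ and $n \geq 4/t$, which is precisely why the case split on whether $r(n)$ exceeds $ndt/4$ is needed. Beyond that, the argument is a single substitution into the Theorem's inequality together with a comparison of the two constant factors.
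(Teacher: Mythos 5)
Your proof is correct. The paper states this corollary without an explicit proof, treating it as immediate from the preceding theorem, and your argument is exactly the intended one: contrapositive reasoning via the lower bound, with the necessary case split on whether $r(n)$ already exceeds $ndt/4$ (which would make the conclusion trivial) versus whether the theorem's side condition is in force. The only step that genuinely requires care — ruling out $\min\{1, d/r(n)\} = 1$ by noting that $t/4 \leq C/n$ is impossible once $n > 4C/t$ — is handled cleanly, and the final inequality $r(n) \geq (t/4C)\,nd$ gives the $\Omega(nd)$ bound with a constant independent of $n$ and $d$, as required.
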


The protocol $\pi_{srk}$ proposed in~\cite{suresh2017distributed} for
this problem achieves $\ep(\pi_{srk}, B^d)=O(1/n)$ with $r = \Omega(n
d \log\log (d))$. This scheme uses a quantizer that randomly rotates a
input vector, similar to RATQ, before quantizing it uniformly.  A
simpler quantizer similar to CUQ with a variable-length entropic
compression code, denoted by $\pi_{svk}$, achieves $\ep(\pi_{svk},
B^d)=O(1/n)$ with $r = \Omega( n d )$. This establishes the orderwise
optimality of $\pi_{svk}$. Thus, prior to our work, the best known
fixed-length scheme for distributed mean estimation was $\pi_{srk}$
which was off from the optimal performance attained by a
variable-length code by a factor of $\log \log d$.


We now consider performance of a protocol $\pi_{RATQ}$ in which RATQ
is employed by all the clients, and the center declares the average of
the quantized values as its mean estimate. First, we have the
following lemma which describes the mean square performance of RATQ.
\begin{lem}\label{l:MSE_bound}
Let $Q_{{\tt at}, R}$ be the quantizer RATQ with parameters $m$,
$m_0$, and $h$ as in~\eqref{e:RATQ_unit_levels}. Then, for every
$\R^d$-valued random variable $Y$ such that $\norm{Y}_2^2 \leq 1$
{a.s.}, we have
 \[\E{ \norm{Q_{{\tt at}, R}(Y) -Y }_2^2 } \leq  \frac{9+3 \ln s}{(k-1)^2}.\]
 \end{lem}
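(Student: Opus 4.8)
The plan is to reuse, almost verbatim, the chain of reductions from the proof of Theorem~\ref{t:RATQ_alpha0_beta0}, except that now we track the quantization error $\norm{Q_{{\tt at}, R}(Y) - Y}_2^2$ directly rather than the second moment of the output. First I would invoke the identity $Q_{{\tt at}, R}(Y) = R^{-1} Q_{{\tt at}, I}(RY)$ together with the unitarity of $R$ to write
\[
\E{\norm{Q_{{\tt at}, R}(Y) - Y}_2^2} = \E{\norm{Q_{{\tt at}, I}(RY) - RY}_2^2} = \sum_{j \in [d]} \E{\big(Q_{{\tt at}, I}(RY)(j) - RY(j)\big)^2},
\]
where $Q_{{\tt at}, I}$ is the no-rotation version of RATQ defined in~\eqref{e:Q_at_I}.

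Next I would check that, under the parameter choice~\eqref{e:RATQ_unit_levels}, there is never an overflow. Exactly as in the proof of Theorem~\ref{t:RATQ_alpha0_beta0}, since $m = 3/d$ and $\log h = \ceil{\log(1 + \ln^\ast(d/3))}$ we have $h - 1 \geq \ln^\ast(d/3)$ and hence $M_{h-1}^2 \geq m \, e^{*\ln^\ast(d/3)} \geq m\cdot(d/3) = 1$. Because $R$ is unitary and $\norm{Y}_2^2 \leq 1$ a.s., it follows that $|RY(j)| \leq \norm{RY}_2 = \norm{Y}_2 \leq 1 \leq M_{h-1}$ a.s. for every $j \in [d]$, so $\mathbbm{1}_{\{|RY(j)| \leq M_{h-1}\}} = 1$ a.s. Therefore the sum above equals
\[
\E{\sum_{j \in [d]} \big(Q_{{\tt at}, I}(RY)(j) - RY(j)\big)^2 \mathbbm{1}_{\{|RY(j)| \leq M_{h-1}\}}},
\]
which is precisely the quantity bounded by Lemma~\ref{l:useful} with $B = 1$. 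Applying that lemma gives
\[
\E{\norm{Q_{{\tt at}, R}(Y) - Y}_2^2} \leq \frac{9 + 3\ln s}{(k-1)^2},
\]
as claimed. (Equivalently, one may use Lemma~\ref{l:concentration_a.s.} with $B=1$ to see that each coordinate of $RY$ is centered subgaussian with variance factor $v = 1/d$, note that the choice $m = 3/d$, $m_0 = (2/d)\ln s$ is exactly $m = 3v$, $m_0 = 2v\ln s$, and invoke Lemma~\ref{l:subg_mse}.)

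I do not anticipate a real obstacle here: all the delicate estimates — in particular the tetration tail sums $\sum_j e^{-0.5 e^{*(j-1)}} \leq 1$ and $\sum_j e^{-1.5 e^{*(j-1)}} \leq 1/4$ — are already encapsulated inside Lemma~\ref{l:subg_mse}/Lemma~\ref{l:useful}. The only step requiring a moment of care is the no-overflow verification $M_{h-1} \geq 1$ from the parameter choice~\eqref{e:RATQ_unit_levels}, since it is this fact that forces the overflow indicators to be identically one and lets us apply the subgaussian mean-square-error lemma with no residual overflow term.
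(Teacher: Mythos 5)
Your proof is correct and follows essentially the same route as the paper's: reduce to the no-rotation quantizer $Q_{{\tt at}, I}$ via unitarity of $R$, verify from~\eqref{e:RATQ_unit_levels} that $M_{h-1}\geq 1\geq |RY(j)|$ a.s.\ so the overflow indicators are identically one, and then apply Lemma~\ref{l:useful} with $B=1$. The only (welcome) additions are the explicit arithmetic showing $M_{h-1}^2\geq 1$ and the aside pointing out the equivalent route through Lemma~\ref{l:concentration_a.s.} and Lemma~\ref{l:subg_mse}, which is in fact exactly how Lemma~\ref{l:useful} is proved.
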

\noindent See Appendix~\ref{ap:dist_mean_1} for the proof.

\begin{thm}\label{t:RATQ_distributed_mean}
Let $\pi_{RATQ}$ be a protocol for distributed mean estimation in
which RATQ with parameters $m$, $h$ as in \eqref{e:RATQ_unit_levels}
and $k$, $s$ as in \eqref{e:RATQ_bits} is employed by all the
clients. Then, $\ep(\pi_{RATQ}, B^d)=O(1/n)$ with a total precision of
\[
r^{\prime}= n \left( d(1+\Delta_1)+\Delta_2 \right),
\]
where $\Delta_1$ and $\Delta_2$ are as in
Corollary~\ref{c:PSGD_RATQ_0}. This further yields $\ep(\Pi(r),
B^d)=O(1/n)$ for every $r \geq r^{\prime}$.
\end{thm}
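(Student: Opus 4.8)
The plan is to decompose the mean-square error into a sum of per-client errors, exploit unbiasedness of RATQ on the unit ball together with independence of the local randomness used by different clients, and then bound each per-client error via Lemma~\ref{l:MSE_bound}; the precision count is lifted verbatim from the proof of Corollary~\ref{c:PSGD_RATQ_0}.

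First I would observe that with parameters $m$, $m_0$, $h$ set as in~\eqref{e:RATQ_unit_levels}, RATQ is unbiased on the unit ball: for any fixed $x_i$ with $\norm{x_i}_2\leq 1$ we have $\E{Q_{{\tt at}, R}(x_i)\mid x_i}=x_i$. This is exactly the computation already carried out in Section~\ref{s:ProofRATQ} leading to~\eqref{e:RATQ_Unbiased}: the choice $\log h=\ceil{\log(1+\ln^\ast(d/3))}$ forces $M_{h-1}^2\geq m\cdot e^{\ast\ln^\ast(d/3)}=(3/d)(d/3)=1$, so after the (unitary) rotation no coordinate of $R x_i$ can overflow, and within each selected dynamic range CUQ returns an unbiased estimate.

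Next, let $\hat{\bar x}=\tfrac1n\sum_{i=1}^n Q_{{\tt at}, R}(x_i)$ be the center's estimate, where the dither (and rotation) randomness used by client $i$ is independent across $i$. Unbiasedness gives $\E{\hat{\bar x}}=\bar x$, and writing $\xi_i:=Q_{{\tt at}, R}(x_i)-x_i$, the $\xi_i$ are independent and mean-zero, so the cross terms vanish and
\[
\ep(\pi_{RATQ}, x)=\E{\norm{\hat{\bar x}-\bar x}_2^2}=\frac{1}{n^2}\sum_{i=1}^n\E{\norm{\xi_i}_2^2}.
\]
By Lemma~\ref{l:MSE_bound}, each term is at most $(9+3\ln s)/(k-1)^2$; and with $k$ chosen as in~\eqref{e:RATQ_bits}, i.e. $\log(k+1)=\ceil{\log(2+\sqrt{9+3\ln s})}$, we have $k-1\geq\sqrt{9+3\ln s}$, hence $(k-1)^2\geq 9+3\ln s$ and $\E{\norm{\xi_i}_2^2}\leq 1$. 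Therefore $\ep(\pi_{RATQ}, B^d)\leq 1/n=O(1/n)$.

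For the precision, by the description of RATQ client $i$ transmits a fixed-length codeword of at most $\ceil{d/s}\ceil{\log h}+d\ceil{\log(k+1)}$ bits, and substituting $s$, $\log h$, $\log(k+1)$ exactly as in the proof of Corollary~\ref{c:PSGD_RATQ_0} bounds this by $d(1+\Delta_1)+\Delta_2$; summing over the $n$ clients gives total precision $r'=n\big(d(1+\Delta_1)+\Delta_2\big)$. Finally, for any $r\geq r'$, any code in $\Pi(r')$ can be padded with $r-r'$ dummy bits per client to yield a code in $\Pi(r)$ of the same error, so $\ep(\Pi(r),B^d)\leq\ep(\Pi(r'),B^d)\leq\ep(\pi_{RATQ},B^d)=O(1/n)$. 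I expect no serious obstacle: the theorem is an assembly of Lemma~\ref{l:MSE_bound}, the unbiasedness argument of Section~\ref{s:ProofRATQ}, and the bit bookkeeping of Corollary~\ref{c:PSGD_RATQ_0}; the only point needing care is that the MSE cross terms cancel, which relies both on RATQ being \emph{exactly} unbiased on the unit ball (so the parameter choice~\eqref{e:RATQ_unit_levels} cannot be relaxed) and on the clients using independent local dithering rather than fully common randomness for the quantization noise.
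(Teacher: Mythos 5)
Your proof is correct and follows essentially the same route as the paper's: you decompose the MSE using unbiasedness (from the parameter choice in~\eqref{e:RATQ_unit_levels}) and independence of the per-client randomness so that cross terms vanish, bound each per-client term via Lemma~\ref{l:MSE_bound}, substitute the choice of $k$ from~\eqref{e:RATQ_bits}, and count bits exactly as in Corollary~\ref{c:PSGD_RATQ_0}. The only minor cosmetic difference is that the paper writes the final step as an equality ``$=1/n$'' while you correctly state it as an inequality ``$\leq 1/n$''; your version is the accurate one.
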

\noindent See Appendix \ref{ap:dist_mean_2} for the proof.

Thus, RATQ enjoys the fixed length structure of $\pi_{srk}$, while
being only $O(\log \log\log \ln^*(d/3))$ away from the expected length
of $\pi_{svk}$.

\newest{
\subsection{ATUQ and Gaussian rate distortion}\label{s:gaussian_rate}
In this final section, we consider the classic Gaussian
rate-distortion problem. We first describe this problem.

Consider a random vector $X=[X(1),\cdots, X(d)]^T$ with 
$iid$ components $X(1), \cdots, X(d)$ generated from
a zero-mean Gaussian distribution with variance $\sigma^2$. For pair $(R,D)$ of nonnegative
numbers
is an {\em achiveable} rate-distortion pair if we can find a quantizer $Q_d$
of precision $dR$ and with mean square error $\E{\norm{X -
    Q_d(X)}_2^2}\leq dD$.
For $D>0$, denote by $R(D)$ the infimum over all
$R$ such that $(R,D)$ constitute an achievable rate-distortion pair
for all $d$ sufficiently large. A well-known result in information theory
characterizes $R(D)$ as follows  ($cf.$~\cite{CovTho06}):
\eq{
R(D)=
 \begin{cases}
\frac{1}{2} \log \frac{\sigma^2}{D} \quad &\text{ if } D \leq
\sigma^2,  \\ 
0 &\text{ if } D > \sigma^2.    
\end{cases}}
The function $R(D)$ is called the {\em Gaussian rate-distortion
  function}.

Over the years, several constructions using error correcting codes and
lattices have evolved that attain the rate-distortion function,
asymptotically for large $d$. In this section, we show that a slight
variant of ATUQ, too, attains a rate very close to the Gaussian
rate-distortion function, when applied to Gaussian random vectors.

Specifically, consider the quantizer $Q_{{\tt at}, I}$ described
earlier in~\eqref{e:Q_at_I}.
Recall that  $Q_{{\tt at}, I}$ can be
described by algorithm \ref{a:E_RATQ} and \ref{a:D_RATQ} with random
matrix $R$ replaced with $I$. That is, we
divide the input vector in $\ceil{d/s}$ and employ ATUQ to quantize
them. In fact, we will apply this quantizer not only to a Gaussian
random vector, but any random vector with subgaussian
components; the components need not even be independent.
Thus, we show that our quantizer is almost optimal {\em
  universally} for all subgaussian random vectors. 

We set the parameters $m$, $m_0$, $h$, $s$, and $\log(k+1)$ of $Q_{at, I}$ as follows:
\begin{align}\label{e:Param}
\nonumber 
m = 3v, \quad m_0= 2v \ln s, \quad \log h = \ceil{\log\left(1+\ln^*\left(\frac{4\ln(8\sqrt{2}v/D)}{3}\right)\right)},\\
s = \min \{ \log h, d\}, \text{ and } \log (k+1) =
\ceil{\log\left(2+\sqrt{\frac{18v+6v\ln s}{D}}\right)}. 
\end{align}
 \begin{thm}\label{t:gauss_rd}
   Consider a random vector $X$ taking values in $\R^d$ and with 
   components $X_i$, $1\leq i \leq d$ such that each $X_i$ is a
   centered subgaussian random variables with a variance factor $v$. 
 Let $Q_d$ be the $d$-dimensional $Q_{at, I}$ with parameters as in
 \eqref{e:Param}. Then, for $d\geq \log h$ and $D<v/4$, $Q_d$ gets the mean square error less than $dD$ using
 rate $R$ satisfying
 \[
 R \leq \frac{1}{2}\log\frac{v}{D}+ O\left(\log\log\log
 \log^*\log\left (\frac{v}{D}\right)\right).
 \]
 \end{thm}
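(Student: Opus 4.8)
The plan is to control separately the distortion $\E{\norm{X-Q_d(X)}_2^2}$ and the rate $R=\frac1d\cdot(\text{precision of }Q_d)$, showing the former is at most $dD$ and the latter has the stated form. For the distortion I would split
\[
\E{\norm{X-Q_d(X)}_2^2}=\E{\sum_{i\in[d]}(X(i)-Q_d(X)(i))^2\indic{\{|X(i)|\le M_{h-1}\}}}+\sum_{i\in[d]}\E{X(i)^2\indic{\{|X(i)|>M_{h-1}\}}},
\]
using that a coordinate exceeding $M_{h-1}$ is encoded by the overflow symbol and decoded to $0$. The first sum is exactly the quantity controlled by Lemma~\ref{l:subg_mse}, whose hypotheses $m=3v$ and $m_0=2v\ln s$ match~\eqref{e:Param}; it is therefore at most $dv(9+3\ln s)/(k-1)^2$, and since the choice $\log(k+1)=\ceil{\log(2+\sqrt{(18v+6v\ln s)/D})}$ gives $k-1\ge\sqrt{(18v+6v\ln s)/D}$ while $18v+6v\ln s=6v(3+\ln s)$, this first sum is at most $dD/2$.

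For the second (overflow) sum I would invoke the subgaussian tail bound $P(|X(i)|>x)\le 2e^{-x^2/2v}$ from Lemma~\ref{l:standar_subg} together with the layer-cake identity
\[
\E{X(i)^2\indic{\{|X(i)|>M\}}}=M^2P(|X(i)|>M)+\int_{M^2}^{\infty}P(|X(i)|>\sqrt t)\,dt\le 2(M^2+2v)e^{-M^2/2v}.
\]
The choice $\log h=\ceil{\log(1+\ln^*(\frac43\ln(8\sqrt2\,v/D)))}$ forces $e^{*(h-1)}\ge\frac43\ln(8\sqrt2\,v/D)$, hence $M_{h-1}^2\ge 3v\,e^{*(h-1)}\ge 4v\ln(8\sqrt2\,v/D)$ and $e^{-M_{h-1}^2/2v}\le (D/(8\sqrt2\,v))^2$. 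Writing $u=M_{h-1}^2/2v$, the hypothesis $D<v/4$ makes $u\ge 2\ln(8\sqrt2\,v/D)>1$, so by monotonicity of $u\mapsto (u+1)e^{-u}$ the overflow sum is at most $d(2\ln(8\sqrt2\,v/D)+1)D^2/(32v)$; the residual factor $(\ln(8\sqrt2\,v/D)+\frac12)(D/v)$ is decreasing on $\{v/D\ge 4\}$ and hence bounded by its value at $v/D=4$, which is comfortably below $16$, so the overflow sum is also at most $dD/2$. Adding the two halves gives $\E{\norm{X-Q_d(X)}_2^2}\le dD$.

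For the rate, the encoder of $Q_{{\tt at},I}$ on a $d$-dimensional input uses at most $\ceil{d/s}\ceil{\log h}+d\ceil{\log(k+1)}$ bits, so $R\le\ceil{d/s}\ceil{\log h}/d+\ceil{\log(k+1)}$. Since $\log h$ and $\log(k+1)$ are integers (being ceilings) and the hypothesis $d\ge\log h$ forces $s=\min\{\log h,d\}=\log h$, the first term is at most $\log h/s+\log h/d=1+\log h/d\le 2=O(1)$. For the second term, $D<v/4$ makes $\sqrt{(18v+6v\ln s)/D}\ge 2$, so $\log(k+1)\le\frac12\log(v/D)+\frac12\log(6(3+\ln s))+O(1)$. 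It then remains to unfold the dependence of $s=\log h$ on $v/D$: with $L=\log(v/D)$ one has $\frac43\ln(8\sqrt2\,v/D)=O(L)$, hence $\ln^*(\frac43\ln(8\sqrt2\,v/D))\le\ln^* L+O(1)$, so $s\le\log\ln^* L+O(1)$ and $\frac12\log(6(3+\ln s))\le\frac12\log\log\log\ln^* L+O(1)$; since $\ln^*$ and $\log^*$ differ only by an additive constant, this is $O(\log\log\log\log^*\log(v/D))$. Collecting the estimates yields $R\le\frac12\log(v/D)+O(\log\log\log\log^*\log(v/D))$, as claimed.

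I expect the main obstacle to be the distortion bookkeeping for the overflow term: one must use the sharp layer-cake tail integral rather than a crude Cauchy-Schwarz bound, and exploit the interplay between the tetrational growth of $M_{h-1}^2$ through $e^{*(h-1)}$, the choice of $k$, and the assumption $D<v/4$ in order to make each of the two error contributions come in at $dD/2$. The rate estimate is then a careful but routine chain of nested-logarithm bounds; in particular, all the additive $O(1)$ contributions to $R$ are harmless because the target correction term $\log\log\log\log^*\log(v/D)$ diverges, however slowly, as $v/D\to\infty$.
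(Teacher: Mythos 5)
Your proof uses the paper's own decomposition into in-range and overflow error, the same Lemma~\ref{l:subg_mse} for the in-range part, and the same rate bookkeeping, so it is essentially the same argument. The one place you diverge is the overflow term: you use a layer-cake integral giving $\E{X(i)^2\indic{\{|X(i)|>M\}}}\le 2(M^2+2v)e^{-M^2/2v}$, whereas the paper applies Cauchy--Schwarz together with the fourth-moment bound $\E{X^4}\le 32v^2$ and the tail bound from Lemma~\ref{l:standar_subg}. Your closing remark that ``one must use the sharp layer-cake tail integral rather than a crude Cauchy--Schwarz bound'' is not accurate: Cauchy--Schwarz is exactly what the paper does, and it works (the paper's displayed constant $4\sqrt2\,v\,e^{-M_{h-1}^2/4v}$ should read $8v\,e^{-M_{h-1}^2/4v}$, a slip that is absorbed by replacing $8\sqrt 2$ with $16$ in the definition of $\log h$---the mechanism is sound). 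Your layer-cake variant is a perfectly valid alternative; it just trades the fourth-moment computation for a polynomial-in-$\log(v/D)$ prefactor that you then have to argue away using $D<v/4$. One minor numerical slip on your side: to conclude the overflow contribution is at most $dD/2$ from $d(2\ln(8\sqrt2\,v/D)+1)D^2/(32v)$, you need the residual factor $(\ln(8\sqrt2\,v/D)+\tfrac12)(D/v)$ to be at most $8$, not $16$; since its value at $v/D=4$ is roughly $1.1$ and it is decreasing in $v/D$, the conclusion still holds, but your stated threshold is off by a factor of two.
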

 \noindent We provide the proof in Appendix~\ref{ap:gauss_rd}. We
 remark that the additional term is a small constant for reasonable
 values of the parameters $v$ and $D$. Note that our proposed
 quantizer just uses uniform quantizers with different dynamic ranges,
 and yet is almost universally rate optimal.   
}

\section*{Acknowledgement} Authors would like to thank Jayadev Acharya and Ananda Theertha Suresh for useful discussions, and Alexander Barg for sharing his knowledge on covering codes and pointing to related work.
\newpage
\appendix
\appendixpage 

\section{ Analysis of PSGD with quantized subgradients: Proof of Theorem \ref{t:basic_convergence}}\label{ap:QPSGD}

We proceed as in the standard proof of convergence (see, for
instance,~\cite{bubeck2015convex}): Denoting by $\Gamma_\X(x)$ the
projection of $x$ on the set $\X$ (in the Euclidean norm), the error
at time $t$ can be bounded as
\eq{
  \norm{x_t-x^*}_{2}^{2} &= \norm{\Gamma_{\X}\big(x_{t-1}-\eta
    Q(\hat{g}(x_{t-1}))\big)-x^*}_{2}^{2} \\
    & \leq \norm{\big(x_{t-1}-\eta
    Q(\hat{g}(x_{t-1}))\big)-x^*}_{2}^{2} \\ &
    = \norm{x_{t-1}-x^*}_{2}^{2} + \norm{\eta
    Q(\hat{g}(x_{t-1}))}_{2}^{2}- 2 \eta
    (x_{t-1}-x^*)^{T}Q(\hat{g}(x_{t-1}))\\
    &=\norm{x_{t-1}-x^*}_{2}^{2} + \norm{\eta
    Q(\hat{g}(x_{t-1}))}_{2}^{2}- 2 \eta
    (x_{t-1}-x^*)^{T}\big(Q(\hat{g}(x_{t-1}))
    - \hat{g}(x_{t-1})\big) \\& \quad - 2 \eta
    (x_{t-1}-x^*)^{T}\hat{g}(x_{t-1}), } where the first inequality is
    a well known property of the projection operator $\Gamma$ (see,
    for instance, Lemma 3.1,
\cite{bubeck2015convex}).  By rearranging the terms, we have
\eq{
  2\eta (x_{t-1}-x^*)^{T}\hat{g}(x_{t-1})
  &\leq \norm{x_{t-1}-x^*}_{2}^{2} - \norm{x_{t}-x^*}_{2}^{2}
  + \norm{\eta Q(\hat{g}(x_{t-1}))}_{2}^{2} \\ & \hspace{1cm} -2 \eta
  (x_{t-1}-x^*)^{T}\left(Q(\hat{g}(x_{t-1}))
  - \hat{g}(x_{t-1})\right).  } Also, since $\E{\hat{g}(x_{t-1})|
  x_{t-1}}$ is a subgradient at $x_{t-1}$ for the convex function $f$,
  upon taking expectation we get
\[
\E{f(x_{t-1})-f(x^*)}\leq
\E{(x_{t-1}-x^*)^{T}\E{\hat{g}(x_{t-1})| x_{t-1}}}, 
\]
which with the previous bound yields
\eq{
  2\eta\E{f(x_{t-1})-f(x^*)} & \leq \E{\norm{x_{t-1}-x^*}_{2}^{2} }
  - \E{\norm{x_{t}-x^*}_{2}^{2} }+ \eta^2 \E{ \norm{
  Q(\hat{g}(x_{t-1}))}_{2}^{2}} \\ & \hspace{1cm} -
  2 \eta \E{(x_{t-1}-x^*)^{T}\big(Q(\hat{g}(x_{t-1}))
  - \hat{g}(x_{t-1})\big)}.  } Next, by the Cauchy-Schwarz inequality
  and the assumption in (1), the third term on the right-side above
  can be bounded further to obtain
\eq{2\eta\E{f(x_{t-1})-f(x^*)} & \leq \E{\norm{x_{t-1}-x^*}_{2}^{2} }
  - \E{\norm{x_{t}-x^*}_{2}^{2} }+ \eta^2 \E{ \norm{
      Q(\hat{g}(x_{t-1}))}_{2}^{2}} \\& \hspace{2 cm} + 2 \eta\cdot
      D\cdot \E{\norm{\E{Q(\hat{g}(x_{t-1}))
      - \hat{g}(x_{t-1})|x_{t-1} }}_2} .  } Finally, we note that, by
      the definition of $\alpha$ and $\beta$, for $L_2$-bounded
      oracles we have
\begin{align*}
\E{ \norm{
    Q(\hat{g}(x_{t-1}))}_{2}^{2}} &\leq \alpha(Q)^2,
\\
\norm{\E{Q(\hat{g}(x_{t-1})) - \hat{g}(x_{t-1})|x_{t-1} }}_2&\leq \beta(Q),
\end{align*}

which gives
\eq{2\eta\E{f(x_{t-1})-f(x^*)} & \leq \E{\norm{x_{t-1}-x^*}_{2}^{2} }
  - \E{\norm{x_{t}-x^*}_{2}^{2} }+ \eta^2 \alpha(Q)^2+ 2 \eta
    D \beta(Q). }
 
Therefore, by summing from $t=2$ to $T+1$, dividing by $T$, and using
assumption that the domain $\X$ has diameter at the most $D$, we have
\eq{
  2\eta\E{f(\bar{x}_{T})-f(x^*)}
  & \leq \frac{D^2}{T}+ \eta^2 \alpha(Q)^2+ 2 \eta D \beta(Q). }
 
The first statement of Theorem~\ref{t:basic_convergence} follows upon
dividing by $\eta$ and setting the value of $\eta$ as in the
statement. The second statement holds in a similar manner by replacing
$\alpha$ and $\beta$ with $\alpha_0$ and $\beta_0$, respectively.
\qed




\section{Remaining proofs for the main results}

\subsection{Analysis of CUQ: Proof of Lemmas \ref{l:sup} and \ref{l:sup0m} }\label{ap:CUQ}

\paragraph{Proof of Lemma \ref{l:sup}:}

Denoting by $\B_{j,\ell}$ the event $\big\{ {Y}(j) \in [B_{M,k}(\ell),
  B_{M,k}(\ell+1))\big\}$, we get
\eq{
&\lefteqn{\E{\sum_{j\in[d]}\big(Q_{{\tt
u}}(Y)(j)-{Y(j)}\big)^2\mathbbm{1}_{\{{|Y(j)|}\leq M\}}\mid Y}}
\\
&= \sum_{j \in [d]} \sum_{\ell =0}^{k-1}\E{\big(Q_{{\tt
       u}}(Y)(j)-{Y}(j)\big)^2 \mathbbm{1}_{\B_{j,\ell}}\mid
       {Y}} \mathbbm{1}_{\{{|Y(j)|}\leq M\}}.  }  For the summand on
       the right-side, we obtain
\begin{align}\label{e:residual}
 \nonumber &\lefteqn{\E{\big(Q_{{\tt
u}}(Y)(j)-{Y}(j)\big)^2 \mathbbm{1}_{\B_{j,\ell}}\mid
{Y}}} \\ \nonumber &= \left(( B_{M,k}(\ell+1)-{Y}(j))^2 \frac{{Y}(j)
-B_{M,k}(\ell)
}{B_{M,k}(\ell+1)-B_{M,k}(\ell)}\right) \mathbbm{1}_{\B_{j,\ell}}
\\ \nonumber
&\hspace{2cm} + \left(( B_{M,k}(\ell)-{Y}(j))^2\frac{ B_{M,k}(\ell+1)-
{Y}(j)}{B_{M,k}(\ell+1)-B_{M,k}(\ell)} \right) \mathbbm{1}_{\B_{j,\ell}}
\\ \nonumber
&= \left( B_{M,k}(\ell+1)-{Y}(j)) ({Y}(j) -
B_{M,k}(\ell)\right)\mathbbm{1}_{\B_{j,\ell}}
\\ \nonumber
&\leq \frac{1}{4}\left(B_{M,k}(\ell+1)- B_{M,k}(\ell) \right)^2
\\ 
&=\frac{M^2}{ (k-1)^2},
\end{align}
where the inequality uses the GM-AM inequality and the final identity
is simply by the definition of $B_{M,k}(\ell)$.  Upon combining the
bounds above, we obtain \eq{
\E{\sum_{j\in[d]}\big(Q_{{\tt u}}(Y)(j)-{Y(j)}\big)^2\mathbbm{1}_{\{{|Y(j)|}\leq M\}}\mid Y} \leq
            \frac{dM^2}{
            (k-1)^2} \cdot \frac{1}{d}\sum_{j\in[d]} \mathbbm{1}_{\{{|Y(j)|}\leq
            M\}}.}  \qed

\paragraph{Proof of Lemma \ref{l:sup0m}}
The proof is the same as the proof of Lemma~\ref{l:sup}, except that
we need to set $d=1$ and replace the identity used
in\eqref{e:residual} with
\[
B_{M,k}(\ell+1)- B_{M,k}(\ell)=\frac{M}{k-1}.
\]

\subsection{Proof of Lemma \ref{l:concentration_a.s.}}\label{ap:R_subg}

For the rotation matrix $R=(1/\sqrt{d})HD$, each entry of $RY(j)$ of
the rotated matrix has the same distribution as $(1/\sqrt{d})V^T Y$,
where $V=[V(1), ..., V(d)]^{T}$ has independent Rademacher entries. We
will use this observation to bound the moment generating function of
$RY(i)$ conditioned on $Y$. Towards that end, we have
\eq{
\E{ e^{\lambda RY(i)}\mid Y}
& = \prod_{i=1}^{d} \E{ e^{\lambda V(i)Y(i)/\sqrt{d}} \mid Y}\\ &
= \prod_{i=1}^{d} \frac{e^{\lambda Y(i)/\sqrt{d}}+e^{- \lambda
Y(i)/\sqrt{d}}}{2}
\\
&\leq \prod_{i=1}^{d} e^{\lambda^2 Y(i)^2/2d}\\ &=
e^{\lambda^2 \norm{Y}_{2}^2/2d}, } where the first identity follows
from independence of $V(i)$s and the first inequality follows by the
fact that $(e^{x}+e^{-x})/2$ is less than $e^{x^2/2}$, which in turn
can be seen from the Taylor series expansion of these terms. Thus, we
have proved the following:
\begin{align}
\E{ e^{\lambda RY(i)}\mid Y} \leq e^{\lambda^2 \norm{Y}_2^2/2d}, \quad \forall \lambda\in \R, \forall i \in [d]. 
\label{e:subg}
\end{align}
 Note that $\norm{Y}_2^2$ can be further bounded by $B^2$, which along
 with \eqref{e:subg} leads to \[\E{ e^{\lambda RY(i)}} \leq
 e^{\lambda^2 B^2/2d} \quad \forall \lambda\in \R, \forall i \in
 [d]. \] Using this inequality and the observation that $\E{RY(i)}=0$,
 we note that $RY(i)$ is a centered subgaussian with a variance
 parameter $B^2/d$.  The second statement of the lemma trivially
 follows from Lemma \ref{l:standar_subg}.
\qed.

\subsection{Proof of Lemma~\ref{l:elemineq}}\label{app:elemineq}
For any $y \in [0,1]$ such that $ab^{y/(2+y)}< c$, we have
$ab^{y/(2+y)} < \min\left\{c, ab^{1/3} \right\}$.  By multiplying by
$a/b$ on both sides and taking square root, we get
\[
\frac{a}{b^{\frac{1}{2+y}}} < \min\left\{\sqrt{\frac{ca}{b}},\frac{a}{b^{1/3}}\right\},
\]
which gives
\[
\sup_{y \in [0,1]: a(b)^{y/(2+y)}< c.}\frac{a}{b^{\frac{1}{2+y}}} \leq \min\left\{\sqrt{\frac{ca}{b}},\frac{a}{b^{1/3}}\right\}.
\]
Making cases $ab^{1/3} \geq c$ and $ab^{1/3} < c$, we note that the
supremum on the left-side equals the right-side in both the cases.
\qed

\section{Analysis of distributed mean estimation}
 \subsection{Proof of
 Lemma \ref{l:MSE_bound}}\label{ap:dist_mean_1} \new{By the
 description of RATQ we have that $Q_{{\tt at}, R} =R^{-1} Q_{{\tt
 at}, I}(RY)$, where $Q_{{\tt at}, I}$ is as defined
 in~\eqref{e:Q_at_I}. Thus, using the fact that $R$ is a unitary
 matrix
\[
\E{ \norm{Q_{{\tt at}, R}(Y) -Y }_2^2 } =\E{ \norm{Q_{{\tt at}, I}(RY) -RY }_2^2}.
\]
 When the parameters are set as in~\eqref{e:RATQ_unit_levels}, we get
\[
RY(j) \leq M_{h-1} \text{ a.s., } \forall j \in [d],
\] 
whereby
\[
\E{ \norm{Q_{{\tt at}, R}(Y) -Y }_2^2 } =\E{ \sum_{j \in [d]}(Q_{{\tt at}, I}(RY)(j) -RY(j) )^2 \indic{RY(j) \leq M_{h-1}}}.    
\]
The proof is completed by noting that $Y$ satisfies $\norm{Y}_2\leq 1$
a.s., setting $m=3/d$ and $m_0=(2/d)\ln s$, and applying
Lemma~\ref{l:useful}.  } \qed

\subsection{Proof of Theorem \ref{t:RATQ_distributed_mean}}\label{ap:dist_mean_2}
 \begin{proof} When RATQ is employed by all the clients, the MSE
between the sample mean of the quantized vectors and the sample mean
of the input is bounded as
\eq{ \ep(\pi_{RATQ}, B^d)&= \max_{x: x_i \in B^d,\, \forall i\in [d] } 
\E{\norm{\frac{\sum_{i=1}^{n}Q_{{\tt at }, R_i}(x_i)}{n}-\frac{\sum_{i=1}^{n}{x_i}}{n}}_2^2} 
\\ &= \max_{x: x_i \in B^d,\, \forall i\in [d] } 
 \sum_{i=1}^{n}\frac{1}{n^2}\E{\norm{Q_{{\tt at },
R_i}(x_i)-x_i}_2^2} \\ & \leq \frac{9 +3\ln s}{n(k-1)^2}\\
&= \frac{1}{n}, } where the second identity uses the unbiasedness of
quantizers used by all the clients\footnote{Note that
$\{x_i\}_{i=1}^{n}$ are deterministic vectors, each with
$\norm{x_i}_2\leq 1$. Thus, setting $B=1$ and choosing $h$ so that
$M_{h-1}\geq 1$, we get the desired unbiased estimators.}  and the
independence of randomness used by the quantizers of different
clients, the third inequality uses Lemma~\eqref{l:MSE_bound}, and the
final identity follows by substituting for $k$.  \end{proof}

\section{Proof of Theorem \ref{t:gauss_rd}}
\label{ap:gauss_rd}
\newest{
We split the overall mean square error into two terms and 
derive upper bounds for each of them. Specifically, we have
  \eq{
  \frac{1}{d}\cdot \E{\norm{X_d-Q_d(X_d)}_2^2}&=\frac{1}{d}\cdot \E{\sum_{i \in [d]}(X_d(i)-Q_d(X_d)(i))^2  \indic{\{|X_d(i)|\leq M_{h-1}}\}}\\&\hspace{0.5cm}+\frac{1}{d}\cdot \E{\sum_{i \in [d]}(X_d(i)-Q_d(X_d)(i))^2  \indic{\{|X_d(i)| > M_{h-1}\}}}.
  }
The second term on the right-side above can be bounded as follows:
 \eq{\frac{1}{d}\cdot \E{\sum_{i \in [d]}(X_d(i)-Q_d(X_d)(i))^2  \indic{\{|X_d(i)| > M_{h-1}}\}}
& =  \frac{1}{d}\cdot \E{\sum_{i \in [d]}X_d(i)^2  \indic{\{|X_d(i)| > M_{h-1}}\}}\\
&\leq \E{X_d(1)^4}^{1/2} P\left(|X_d(1)| > M_{h-1}\right)^{1/2}\\
&\leq 4\sqrt{2} v e^{-\frac{M_{h-1}^2}{4v},}
} 
where the first inequality follows by the Cauchy-Schwarz inequality and the second follows by Lemma \ref{l:standar_subg}.
 Note that $M_{h-1}^2\geq m e^{*(h-1)}\geq 3 ve^{*\ln^*(4\ln(8\sqrt{2}v/D)/3)}=4v\ln(8\sqrt{2}v/D)$, which with the previous bound leads to
  \eq{\frac{1}{d}\cdot \E{\sum_{i \in [d]}(X_d(i)-Q_d(X_d)(i))^2  \indic{\{|X_d(i)| > M_{h-1}\}}}
& \leq \frac{D}{2}.}
Furthermore, by Lemma~\ref{l:subg_mse} we have
\eq{\frac{1}{d}\cdot \E{\sum_{i \in [d]}(X_d(i)-Q_d(X_d)(i))^2  \indic{\{|X_d(i)|\leq M_{h-1}}\}}\leq \frac{9v+3v\ln s}{(k-1)^2}
\leq\frac{D}{2}, }
where the last equality holds since
$k \geq 1+ \sqrt{\frac{18v+6v\ln s}{D}}.$

It remains to bound the rate. Note that the 
overall resolution used for the entire vector is
\eq{
  d \log (k+1)+\ceil{\frac{d}{s}}\log h &\leq  d \ceil{\log\left(2+\sqrt{\frac{18v+6v\ln s}{D}}\right)}+ d + \log h
}
Therefore, for $d\geq \log h$ and $D<v/4$, the proof is completed by bounding the rate $R$ as 
\eq{ R&\leq  \log\left(2+ \sqrt{\frac v D}\,\sqrt{{18+6\ln \log h}}\right)+ 3\\
&\leq \frac{1}{2} \log \frac{v}{D} +\log\left(1+ \sqrt{18+6\ln \ceil{\log\left(1+\ln^*\left(\frac{4\ln(8\sqrt{2}v/D)}{3}\right)\right)}}\right)+3\\
&\leq \frac{1}{2} \log \frac{v}{D}+ O\left(\log 
\log \log \log^*\log \frac{v}{D}\right).
}
}


\newpage
 \bibliography{IEEEabrv,tit2018} 
\bibliographystyle{IEEEtranS} 
\end{document}